\documentclass[12pt]{article}

\usepackage{natbib}
\usepackage{microtype}
\usepackage{graphicx}
\usepackage{subfigure}
\usepackage{booktabs} 
\usepackage{fullpage}

\usepackage[breaklinks,colorlinks,
            linkcolor=red,
            anchorcolor=blue,
            citecolor=blue
            ]{hyperref}

\usepackage{amsmath}
\usepackage{amssymb}
\usepackage{mathtools}
\usepackage{amsthm}
\usepackage{subcaption}
\usepackage{graphicx}

\usepackage[capitalize,noabbrev]{cleveref}

\theoremstyle{plain}
\newtheorem{theorem}{Theorem}[section]
\newtheorem{proposition}[theorem]{Proposition}
\newtheorem{lemma}[theorem]{Lemma}
\newtheorem{corollary}[theorem]{Corollary}
\theoremstyle{definition}
\newtheorem{definition}[theorem]{Definition}
\newtheorem{assumption}[theorem]{Assumption}
\theoremstyle{remark}
\newtheorem{remark}[theorem]{Remark}
\Crefname{assumption}{Assumption}{Assumptions}

\usepackage[textsize=tiny]{todonotes}

\usepackage{relsize}
\usepackage[utf8]{inputenc} 
\usepackage[T1]{fontenc}    
\usepackage{url}            
\usepackage{booktabs}       
\usepackage{amsfonts}       
\usepackage{nicefrac}       
\usepackage{microtype}      
\usepackage{smile}
\usepackage{bbm}
\usepackage{xcolor}
\usepackage{bbm}

\title{\LARGE Adaptive Estimation and Inference in Conditional Moment Models via the Discrepancy Principle}
\author{Jiyuan Tan\thanks{Jiyuan Tan was supported by NSF Award IIS-2337916.}\\
Vasilis Syrgkanis\thanks{Vasilis Syrgkanis was supported by NSF Award IIS-2337916.}}
\date{}
\begin{document}
\maketitle

\begin{abstract}
  We study adaptive estimation and inference in ill-posed linear inverse problems defined by conditional moment restrictions. Existing regularized estimators such as Regularized DeepIV (RDIV) require prior knowledge of the smoothness of the nuisance function—typically encoded by a $\beta$-source condition—to tune their regularization parameters. In practice, this smoothness is unknown, and misspecified hyperparameters can lead to suboptimal convergence or instability. 

We introduce a \emph{discrepancy-principle-based} framework for \emph{adaptive hyperparameter selection} that automatically balances bias and variance without relying on the unknown smoothness parameter. Our framework applies to both RDIV \cite{li2024regularized} and the Tikhonov Regularized Adversarial Estimator (TRAE) \cite{bennett2023source} and achieves the same rates in both weak and strong metrics. Building on this, we construct a fully adaptive doubly robust estimator for linear functionals that attains the optimal rate of the better-conditioned primal or dual problem, providing a practical, theoretically grounded approach for adaptive inference in ill-posed econometric models.

\end{abstract}

\section{Introduction}
Ill-posed inverse problems are ubiquitous in causal inference and econometrics \cite{horowitz2014ill}. We study the estimation of parameters defined by 
\begin{align*}
\theta _{0} & =\mathbb{E}\left[\tilde{m}( W;h_{0})\right] ,
\end{align*}
given a known linear moment functional $ h\mapsto \tilde{m}( W;h)$. The nuisance function $ h_0 $ is the solution to the following conditional moment problem:
\begin{align}
\mathbb{E}[ h_0(X) \mid Z=z] & =r_{0}(z) , \label{eq:cm}
\end{align}
where $X$ and $Z$ are $W$-measurable random variables, and $r_0$ is the Riesz representer of a known linear functional $ q \mapsto {m}( W;q)$. Such inverse problems are common in causal inference and econometrics. Important examples include nonparametric Instrumental Variable (IV) regression \citep{neweyInstrumentalVariableEstimation2003,darollesNonparametricInstrumentalRegression2011,neweyNonparametricInstrumentalVariables2013,aiEfficientEstimationModels2003}, proximal causal inference \citep{miao2018confounding,tchetgen2020introduction}, and missing-not-at-random problems \citep{d2010new,miao2015identification,breunig2021nonparametric,li2023non}.

Given an estimate of the nuisance function $h_0$, one can derive Doubly Robust (DR) estimators for $\theta_0$ \citep{bennettInferenceStronglyIdentified2023a,bennett2023source}. In the literature, there are two primary approaches to solving the conditional moment problem (\ref{eq:cm}): the Linear Inverse Problem (LIP) method and the Sieve Minimum Distance (SMD) method. The first approach views (\ref{eq:cm}) as an LIP and applies methods from the inverse-problem literature to obtain the minimum-norm solution \citep{carrascoChapter77Linear2007,darollesNonparametricInstrumentalRegression2011,bennett2023source}, typically using Tikhonov regularization \citep{tikhonov1977solutions}. Specifically, let $\mathcal{T}$ be the operator $ \mathcal{T} h (z) = \Pi_{\tilde{\mathcal{Q}}}\mathbb{E}[ h(X) \mid Z = z] $, where $\Pi_{\tilde{\mathcal{Q}}} $ is the projection onto $\tilde{\mathcal{Q}}$; then (\ref{eq:cm}) induces a linear inverse problem of the form $$ \mathcal{T}h_0 = r_0.$$ The other approach uses sieve approximation and estimates $ \theta $ and $h_0$ simultaneously by minimizing a loss function \citep{aiEfficientEstimationModels2003,chenEstimationNonparametricConditional2012}. In this paper, we focus on the LIP approach, particularly its modern implementation via machine learning techniques.

While many works use Tikhonov regularization for this problem \cite{neweyNonparametricInstrumentalVariables2013,li2024regularized,bennettInferenceStronglyIdentified2023a,bennett2023source}, a central challenge in applying LIP methods is selecting the regularization parameter. Current methods, including adversarial-learning approaches (e.g., \cite{bennettInferenceStronglyIdentified2023a,bennett2023source}) and DeepIV \cite{hartford2017deep,li2024regularized}, require prior knowledge of the smoothness of $ h_0 $---often encoded by the $\beta$-source condition (i.e., $ h_{0} =\left(\mathcal{T}^{*}\mathcal{T}\right)^{\beta /2} w_{0}$ for some $ w_{0} \in \tilde{\mathcal{H}}$). In practice, $\beta$ is rarely known. A misspecified regularization parameter can lead to suboptimal convergence rates or even divergence. Heuristics such as the L-curve can be used, but they lack theoretical guarantees. Cross-validation (CV) is also widely used, but it requires repeatedly minimizing the loss and is computationally expensive. Moreover, CV typically selects the parameter that minimizes the weak metric, which does not directly imply a bound in the strong metric. This paper addresses this gap by introducing a discrepancy-principle framework for adaptive hyperparameter selection that does not require knowing the exact  $\beta$.

Our main technical tool is the Discrepancy Principle (DP) \citep{morozov1966solution} from classical inverse problems. DP adaptively selects the regularization parameter $ \lambda $ so that the empirical loss is on the same order as the estimated noise level. This yields a data-driven way to balance empirical fit and statistical noise. Extending classical DP to our setting is nontrivial. First, unlike classical LIP settings, the conditional expectation operator $\mathcal{T}$ is unknown and must be estimated, either explicitly or implicitly, before solving the inverse problem. Second, classical DP assumes fixed, known noise in $r_0$, whereas our noise comes from empirical-process fluctuations and depends on the estimator; see, e.g., localized concentration bounds in \cite[Lemma 14]{foster2023orthogonal}. Third, classical DP is typically analyzed over full Hilbert spaces with closed-form regularized solutions, while we work with general hypothesis classes (e.g., neural networks) satisfying inductive-bias assumptions.

In this paper, we provide a rigorous theoretical justification showing that our choice of $\lambda$ achieves the same rate as previous work without prior information about smoothness. The main contributions of this paper are summarized as follows.
\begin{itemize}
    \item First, we develop a general discrepancy principle for hyperparameter selection in ill-posed conditional moment problems. This framework is not limited to a single estimator and provides a general principle for constructing fully adaptive estimators.
    
    \item Second, we demonstrate the broad applicability and power of this principle by developing and analyzing two specific adaptive estimators: one based on the Regularized DeepIV (RDIV) estimator \cite{li2024regularized} and the other based on the Tikhonov Regularized Adversarial Estimator (TRAE) \cite{bennett2023source}. We show that both adaptive estimators match the optimal rates in both strong and weak metrics without relying on knowing the exact source-condition parameter $\beta$.  

    \item Third, leveraging these adaptive techniques, we construct a fully adaptive doubly robust (DR) estimator for $ \hat{\theta }$. This DR estimator automatically adapts to the degree of well-posedness of both the primal and dual problems, thereby achieving the best possible estimation rate regardless of which inverse problem is more well-posed. 

    \item Fourth, we empirically verify our hyperparameter-tuning method on synthetic data. The experiments show that our method efficiently finds effective regularization parameters for RDIV and TRAE, consistent with our theory.
\end{itemize}

The rest of the paper is organized as follows. Section \ref{sec:pre} introduces the problem setup, notation, and background on linear inverse problems and the discrepancy principle. Section \ref{sec:dp_regularization} develops our adaptive regularization framework and presents theoretical guarantees for RDIV, TRAE, and adaptive doubly robust estimation. Section \ref{sec:experiments} evaluates the proposed methods on synthetic proxy negative-control experiments. The conclusion discusses implications and future directions.

\subsection{Related Work}
\paragraph{Nonparametric Instrumental Variable.} The nonparametric IV problem has been extensively studied in the literature. A common approach is to formulate it as a linear inverse problem, which has received significant attention in econometrics. The core challenge is operator ill-posedness. Existing estimators include series-based methods \cite{chenEstimationNonparametricConditional2012,hallNonparametricMethodsInference2005,chenRATEOPTIMALITYILLPOSED2011,darollesNonparametricInstrumentalRegression2011,florensIDENTIFICATIONESTIMATIONPENALIZATION2011} and kernel-based methods \cite{hallNonparametricMethodsInference2005,horowitz2007asymptotic,singh2019kernel,muandet2020dual,bennett2023variational}. Recent work uses modern machine learning, especially deep neural networks. One line of work transforms conditional moment equations into a minimax problem \cite{bennett2023source,bennett2023variational,bennettInferenceStronglyIdentified2023a,lewis2018adversarial,dikkala2020minimax,liao2020provably,zhang2023instrumental}. Another line first estimates the conditional expectation operator and then minimizes a weak metric using the estimated operator \cite{hartford2017deep,xu2021deep,li2024regularized}. 

\paragraph{Inverse Problem}
Classical work on linear inverse problems has established a comprehensive functional-analytic and numerical theory for ill-posed operator equations, with standard references including  \cite{tikhonov1977solutions,engl1996regularization,hansen1998rank,kirsch2011introduction}. Among the many regularization strategies that have been proposed, quadratic Tikhonov-type methods occupy a central position and can be interpreted either as variational penalization or as spectral filtering of the forward operator \cite{engl1996regularization,benning2018modern}. A key practical issue is the data-driven choice of the regularization parameter, for which discrepancy-based rules originating in Morozov's work \cite{morozov1966solution} play a prominent role: under suitable source conditions, the discrepancy principle yields convergent and often order-optimal reconstructions, both in the linear and locally nonlinear setting \cite{morozov1984methods,nair2003morozov,anzengruber2009morozov}. More recent contributions refine this picture by analyzing discrepancy-type rules for modern convex-variational regularization and for situations where the noise level is unknown or only approximately estimated, thereby broadening the applicability of these ideas in statistical and data-driven contexts \cite{benning2018modern,clason2020regularization,harrach2020beyond}.

\section{Preliminary}\label{sec:pre}

\textbf{Problem formulation. }We study estimation of linear functionals with unknown nuisance functions defined by LIPs. More formally, given observations of a random variable $W$, we are interested in estimating 
\begin{align*}
\theta _{0} =\mathbb{E} [\tilde{m} (W;h_{0} )],
\end{align*}
with a known linear functional $h\mapsto \tilde{m} (W;h)$. Let $X$ and $Z$ be two $W$-measurable random variables, and let $ \tilde{\mathcal{H}}$ and $ \tilde{\mathcal{Q}}$ be closed linear subspaces of $ L_{2}( X)$ and $ L_{2}( Z)$, respectively, where $L_2(X)$ denotes square-integrable functions of $X$. Let $\mathcal{H} \subset \tilde{\mathcal{H}}$ and $\mathcal{Q} \subset \tilde{\mathcal{Q}}$ be subsets of these function spaces. In our context, $\mathcal{H}$ and $\mathcal{Q}$ are common classes in machine learning, such as Reproducing Kernel Hilbert Spaces (RKHS), neural networks, and parametric families. We define the linear operator $ \mathcal{T} h=\Pi _{\tilde{\mathcal{Q}}}\mathbb{E}[ h( X) \mid Z=\cdot ]$, where $ \Pi _{S}$ denotes mean-square projection onto the space $ S$. The nuisance function $h_{0}$ is defined as the minimum-norm solution in $\tilde{\mathcal{H}}$ to the LIP
\begin{align}\label{eq:lin_inv_prob}
\mathcal{T} h_0=r_{0} ,
\end{align}
where $ r_{0}$ is the Riesz representer of a known linear functional $ q\mapsto m( W;q)$, i.e., 
\begin{align*}
\mathbb{E}[ m( W;q)] & =\mathbb{E}[ r_0( Z) q( Z)] ,\quad \forall q\in \tilde{\mathcal{Q}} .
\end{align*}
We make the following assumption to ensure the existence of solutions. 

\begin{assumption}\label{asp:range}
We have $ r_{0} \in \mathcal{R}(\mathcal{T}) :=\{\mathcal{T} h:h\in \tilde{\mathcal{H}}\}$.
\end{assumption}
Given an estimated solution of (\ref{eq:lin_inv_prob}) $ \hat{h}$, the strong and weak metrics are defined as 
\begin{align*}
\| \hat{h} -h_{0} \| _{\text{strong}} =\| \hat{h} -h_{0} \| _{L_{2}} , & \quad \| \hat{h} -h_{0} \| _{\text{weak}} =\| \mathcal{T}(\hat{h} -h_{0}) \| _{L_{2}} .
\end{align*}
Note that the weak metric is the error after applying $ \mathcal{T}$. Since the problem is ill-posed, a small weak metric does not necessarily imply a small strong metric. The adjoint operator of $ \mathcal{T}$ is defined as 
\begin{align*}
\mathcal{T^{*}} q & =\Pi _{\tilde{\mathcal{H}}}\mathbb{E}[ q( Z) \mid X= \ \cdotp ] .
\end{align*}
Let $ a_{0}$ be the Riesz representer of $ \tilde{m}( W;h)$, and let $ q_{0}$ denote the minimum-norm solution of the following dual inverse problem.
\begin{align*}
\mathcal{T^{*}} q_{0} & =a_{0} .
\end{align*}

Given estimates of $h_{0}, q_{0}$, we can construct a DR estimator \cite{bennett2023source} for $ \theta $ as 
\begin{align*}
\hat{\theta }_{n} & =\mathbb{E}_{n}\left[\tilde{m}( W;\hat{h}) +m( W;\hat{q}) -\hat{q}( Z) \hat{h}( X)\right] .
\end{align*}

\textbf{Linear inverse problems. } In classical LIP, we usually observe only $ r_{\delta }$ in (\ref{eq:lin_inv_prob}), which is a noisy measurement of $ r_{0}$. It is often assumed that $ \| r_{\delta } -r_{0} \| \leqslant \delta $ and that both the noise magnitude $ \delta $ and the operator $\mathcal{T}$ are known. Due to noise, solving (\ref{eq:lin_inv_prob}) directly may be inconsistent when the operator $ \mathcal{T}$ is ill-posed. To illustrate this point, suppose that the operator $ \mathcal{T}$ admits a countable singular value decomposition 
\begin{align} \label{eq:svd}
\mathcal{T} h & =\sum _{i=1}^{\infty } \sigma _{i} \langle h,v_{i} \rangle u_{i} ,
\end{align}
where $ \sigma _{1} \geqslant \sigma _{2} \geqslant \cdots $ are singular values, $ \{v_{i}\}_{i=1}^{\infty }$ and $ \{u_{i}\}_{i=1}^{\infty }$ form orthogonal bases of $ \tilde{\mathcal{H}}$ and $ \tilde{\mathcal{Q}}$ respectively. The solution of (\ref{eq:lin_inv_prob}) is 
\begin{align*}
h_{0} & =\sum _{i=1}^{\infty }\frac{\langle r_{0} ,u_{i} \rangle }{\sigma _{i}} v_{i} .
\end{align*}
A small perturbation of $ r_{0}$ at $ u_{k}$ for a small $ \sigma _{k}$ can result in a large estimation error in the solution. To mitigate the influence of noise, several regularization methods have been introduced in the literature. A standard remedy is to use Tikhonov regularization, i.e.,
\begin{align}\label{eq:tikh_sol}
h_{\lambda }^{\delta } =\arg\min_{h} & \| \mathcal{T} h-r_{\delta} \| ^{2} +\lambda \| h\| ^{2} .
\end{align}
The choice of $ \lambda $ is crucial for consistency. If $ \lambda $ is too large, the solution (\ref{eq:tikh_sol}) is biased toward smaller $ h$ and can have large error. The discrepancy principle is a classical method for selecting the regularization parameter $ \lambda $. 
\begin{definition}[Discrepancy Principle] \label{def:discrepancy} Given constants $ l,u\in ( 1,\infty )$ and $k \in (0,\infty)$, the regularization parameter $ \lambda $ should be chosen as follows:
\begin{enumerate}
    \item If $ \| r_{\delta } \| \leqslant k\delta $, choose $ h=0$, which corresponds to $ \lambda =\infty $.
    \item If $ \| r_{\delta } \|  >k\delta $, choose $ \lambda $ such that 
    \begin{align*}
    \| \mathcal{T} h_{\lambda }^{\delta } -r_{\delta } \|  & \leqslant k\delta \leqslant \| \mathcal{T} h_{\lambda '}^{\delta } -r_{\delta } \| ,
    \end{align*}for some $ \lambda '\in [ \lambda ,l\lambda ]$.
\end{enumerate}

\end{definition}
The first rule applies when the noise level is too large, and thus there is no way to beat the default estimator $ h=0$. The intuition for the second rule is that, given a noise level $ \delta $, the reconstruction error should be on the same order as the noise, $ O( \delta )$, since this is the best error we can achieve in the presence of noise. It is well known that the discrepancy principle provides an a priori way to select the hyperparameter and achieve the optimal convergence rate.

Our problem is much harder than the classical LIP setting in the sense that both $\mathcal{T}$ and $r_0$ are unknown. We need to estimate $ \mathcal{T}$ to solve the inverse problem. Suppose that the estimated operator is $ \hat{\mathcal{T}}_{n}$, and denote $ \epsilon _{n} =(\hat{\mathcal{T}}_{n} -\mathcal{T}) h_{0}$; then we have $ \hat{\mathcal{T}}_{n} h_{0} =r_{0} +\epsilon _{n}$, which reduces our problem to the standard setting. However, for adversarial estimators, we do not explicitly estimate $ \mathcal{T}$; instead, we transform the problem into a minimax problem and implicitly estimate the operator $ \mathcal{T}$, which poses additional challenges for analysis. 

To analyze Tikhonov regularization, we introduce the source condition, a standard assumption in the inverse-problem literature. 
\begin{assumption}[Source Condition]\label{asp:source_condition}
Problem (\ref{eq:lin_inv_prob}) satisfies the $ \beta $-source condition for some $ \beta  >0$, i.e., there exists $ w_{0} \in \tilde{\mathcal{H}}$ such that the minimum-norm solution of (\ref{eq:lin_inv_prob}) $ h_{0} =\left(\mathcal{T}^{*}\mathcal{T}\right)^{\beta /2} w_{0}$.
\end{assumption}

If $\mathcal{T}$ is compact, then by (\ref{eq:svd}), \cref{asp:source_condition} implies that 
$$ \sum_{i=1}^\infty \frac{\langle h_0,v_i \rangle^2}{\sigma^{2\beta}_i} < \infty, $$
meaning that the minimum-norm solution has some degree of smoothness. This assumption is widely used in previous work \cite{morozov1966solution,darollesNonparametricInstrumentalRegression2011,li2024regularized,bennettInferenceStronglyIdentified2023a,bennett2023source}.

\textbf{Notations.} Throughout the paper, we use $ \| \cdotp \| $ for the $ L_{2}$ norm with respect to the input distribution, i.e., \begin{align*}
\| h\|  & :=\| h\| _{L_{2}} =\sqrt{\mathbb{E}[ h( X)^2]} ,\ \forall h\in L_{2}( X)
\end{align*}
and the $ L_1$ norm $\|h\|_1 = \mathbb{E}[ |h( X)|] $. We denote $ \mathbb{E}_{n}[ \cdotp ]$ as the empirical average, i.e., $ \mathbb{E}_{n}[ Z] =\frac{1}{n}\sum _{i=1}^{n} Z_{i}$. We use asymptotic-order notation $o (\cdot), O (\cdot)$, and $\Theta (\cdot)$. $\text{star}(S)$ represents the minimal star-convex set that contains the origin and $S$. $o_p(1)$ means convergence to 0 in probability. For a function class, the Rademacher complexity is defined as $ R_n(\mathcal{F}) = \mathbb{E}_\epsilon[\sup_{f\in\mathcal{F}}|\frac{1}{n}\sum_{i=1}^n\epsilon_if(x_i)|] $, and the localized Rademacher complexity is defined by $ R_n(\delta;\mathcal{F}) = \mathbb{E}_\epsilon[\sup_{f\in\mathcal{F},\|f\|_2 \leqslant \delta}|\frac{1}{n}\sum_{i=1}^n\epsilon_if(x_i)|] $, where $\epsilon_i$ are i.i.d. Rademacher random variables. Let $\text{star}(\mathcal{F}) = \{\gamma f, \gamma \in [0,1], f\in\mathcal{F}\}$. The critical radius $\delta_{n,\mathcal{F}}$ of $\mathcal{F}$ is any positive number that satisfies $\delta_{n,\mathcal{F}}^2 \geqslant R_n(\text{star}(\mathcal{F} - \mathcal{F} ,\delta_{n,\mathcal{F}})) $.

\section{Discrepancy‐Principle‐Based Adaptive Regularization} \label{sec:dp_regularization}

In this section, we introduce our adaptive approach for solving the conditional moment problem \eqref{eq:cm}. We begin with the RDIV estimator, which provides an explicit and intuitive formulation of the conditional expectation operator. This serves as a motivating example for applying the Discrepancy Principle (DP) in the context of data-driven hyperparameter tuning. We then generalize this idea to a broader class of estimators, including the TRAE, which implicitly estimates the conditional operator through an adversarial minimax formulation and achieves the best sample complexity in the literature. Finally, leveraging these results and insights from \cite{bennett2023source}, we further develop an adaptive doubly robust estimator that inherits optimal statistical efficiency while automatically adjusting to the degree of well-posedness of both the primal and dual problems.

\subsection{Adaptive Regularized DeepIV Estimator}\label{sec:rdiv}

In the IV setting, the goal is to solve the conditional moment problem $ \mathbb{E}[h_0(X) \mid Z=z] = \mathbb{E}[Y\mid Z]$. We set the linear moment $ m(W;q) = Yq(Z)$ and  the Riesz representer of  $m$  is $r_0 (z)= \mathbb{E}[Y\mid Z = z]$. As mentioned in the Introduction, in (\ref{eq:cm}), the conditional expectation operator $ \mathcal{T} = \Pi_{\tilde{\mathcal{Q}}}\mathbb{E}[\cdot\mid Z = z]$ is unknown, which makes the equation difficult to solve directly. The RDIV method explicitly estimates the conditional operator $\mathcal{T}$ by solving 
\begin{align*}
\hat{g}_{n} & =\arg\max_{g\in \mathcal{G}}\mathbb{E}_{n}[\log g( X\mid Z)] ,
\end{align*}
where $\mathcal{G}\subset \{g: \mathcal{X} \times \mathcal{Z}\rightarrow \mathbb{R}, \int_\mathcal{X}g(x\mid z) \mu(dx) =1, \ \forall z \in  \mathcal{Z}\} $ is a function space. We define $\hat{\mathcal{T}} f( Z) =\mathbb{E}_{x\sim \hat{g}_{n}( X\mid Z)}[ f( X)]$ and let 
\begin{align*}
L_n(h) =\mathbb{E}_{n}\left[( Y-(\hat{\mathcal{T}} h)( Z))^{2}\right] , & \ L(h) =\mathbb{E}\left[( Y-(\mathcal{T} h)( Z))^{2}\right] .
\end{align*}
The RDIV method then plugs in $\hat{\mathcal{T}}$ and solves 
\begin{align*}
\hat{h}_{\lambda } & =\arg\min_{h\in \mathcal{H}} L_n(h) + \lambda\mathbb{E}_{n}\left[ h( X)^{2}\right] .
\end{align*}
For consistency, we use empirical penalties (e.g., $\mathbb{E}_n[h(X)^2]$) in the optimization objectives and population norms (e.g., $\|h\|^2$) in theoretical bounds.

As shown in \cite{li2024regularized}, if the problem has a smoothness of order $\beta$ (see Assumption \ref{asp:source_condition}), setting $\lambda = O\!\left(\delta_n^{\frac{\min\{\beta,1\}}{1+\min\{\beta,1\}}}\right)$ yields a $O\!\left(\delta_n^{\frac{\min\{\beta,1\}}{1+\min\{\beta,1\}}}\right)$ convergence rate. However, this "oracle" choice is impractical, as the smoothness parameter $\beta$ is rarely known in advance. This motivates the central challenge: to design a procedure that selects $\lambda$ adaptively from data, without knowing $\beta$.

To address this challenge, we turn to the \textbf{Discrepancy Principle} (\cref{def:discrepancy}), a classical data-driven method for regularization. The core idea is to choose $\lambda$ such that the magnitude of the empirical loss (the weak metric) is comparable to the expected level of statistical noise in the data. We formalize this by selecting $\lambda$ to satisfy the following condition:
\begin{align} \label{eq:discrepancy_principle}
    L_{n}(\hat{h}_{\lambda}) & \leqslant  \delta \leqslant L_{n}(\hat{h}_{\lambda'})
\end{align}
for some $\lambda' \in [\lambda, 2\lambda]$. Here, $\delta$ represents the characteristic scale of statistical error, which depends on sample size and function-class complexity. Importantly, as we will show later,  $\delta$ requires only a lower bound on $\beta$, not its exact value. We specify $\delta$ for different methods below. 

\begin{algorithm}[!tb]
    \caption{Adaptive Regularization via Discrepancy Principle}
    \label{alg:discrepancy}
    \begin{algorithmic}[1]
        \STATE \textbf{Input:} Data $\{(X_i, Z_i, Y_i)\}_{i=1}^n$, noise level $ \delta$.
        \STATE \textbf{Input:} Initial regularization parameter $\lambda_0$, search factor $\rho < 1$.
        \STATE Set $k=0$ and $\lambda = \lambda_0$.
        \LOOP
            \STATE Compute the estimator $\hat{h}_{\lambda} = \arg\min_{h\in \mathcal{H}} L_{n}(h) + \lambda \mathbb{E}_n\|h\|^2$.
            \IF{$L_n(\hat{h}_\lambda) \leqslant \delta$}
                \STATE \textbf{break} the loop.
            \ENDIF
            \STATE Update $\lambda \leftarrow \rho \cdot \lambda$.
            \STATE $k \leftarrow k+1$.
        \ENDLOOP
        \STATE \textbf{Output:} The selected regularization parameter $\lambda$ and $\hat{h}_\lambda$. 
    \end{algorithmic}
\end{algorithm}

The intuition behind \eqref{eq:discrepancy_principle} is to use the noise level $\delta$ as a data-driven calibration target. Using a standard concentration inequality \cite{wainwright2019high}, one can show that, under mild assumptions, with high probability,
\begin{align*}
|L_n(h) -L(h) | & \leqslant O( \delta ),
\end{align*}
for all $h \in \mathcal{H}$. Since statistical error from finite samples is inevitable, the population loss will, in the worst case, be at least on the order of $\Omega(L_n(h) + \delta)$. This suggests that reducing the empirical loss $L_n(h)$ substantially below the scale of $\delta$ amounts to fitting statistical noise, i.e., overfitting. Therefore, a well-calibrated estimator should have an empirical loss that is roughly on the same scale as the noise variance.

This principle is formalized by two inequalities. The left-hand condition, $L_{n}(\hat{h}_{\lambda}) \leqslant \delta$, ensures that regularization is not too large and thus allows the solution to remain consistent with the observations up to the noise level. On the other hand, if the regularization parameter is too small, the solution becomes susceptible to noise. The right-hand condition, $\delta \leqslant L_{n}(\hat{h}_{\lambda'})$, guards against this by ensuring that we select the largest possible $\lambda$ that still satisfies the first inequality. This requirement implies that for a slightly stronger regularization, the empirical loss would exceed the noise floor. Taken together, the two conditions balance the bias--variance tradeoff in a data-driven manner. 

The procedure for finding $\lambda$  is given in Algorithm \ref{alg:discrepancy}. The algorithm starts from $\lambda = \lambda_0$, gradually decreases $\lambda$ and construct estimator $\hat{h}_\lambda$ until $L_n(\hat{h}_\lambda) \leqslant \delta$. We will show below that \cref{alg:discrepancy} outputs a $\lambda$ that satisfies DP with high probability.  
 
To analyze the sample complexity of adaptive RDIV and \cref{alg:discrepancy}, we need the following assumptions. 
\begin{assumption}\label{asp:closeness_deepiv}
    (Realizability of Solutions) The minimum norm solution $h_0\in\mathcal{H}$. Furthermore, $h^*_\lambda \in \mathcal{H}$ for all $\lambda \in [0,2]$, where 
    \begin{align*}
        h^*_\lambda = \arg\min _{h\in\tilde{\mathcal{H}}} \mathbb{E}[(r_0 (z) - \mathcal{T}h(z))^2 + \lambda h(x)^2]. 
    \end{align*}
\end{assumption}
\begin{assumption}(Realizability of Conditional Density) \label{asp:re_cd}
    The conditional density $g_0(x\mid z)$ of $(X,Z)$ exists and $g_0 \in\mathcal{G}$. Moreover, there exists a constant $c_g$ such that $g_0(x\mid z) >c_g$ for all $x,z$. 
\end{assumption}

\begin{assumption}[Critical Radius and Boundedness]\label{asp:cr_deepiv}
    Assume that $\tilde{\delta}_n(\xi) = \Omega(\sqrt{\frac{\log\log(n)+\log(1/\xi)}{n}})$ is an upper bound on the critical radius of function classes $\mathcal{G}$ and $\mathcal{F}$. Moreover, the function spaces $ \mathcal{F} $ and $\mathcal{G}$ are almost surely bounded. 
\end{assumption}

 \cref{asp:closeness_deepiv} ensures that both the ground-truth solution and the regularized minimizers lie in the working function class. A similar assumption is used in \cite{bennett2023source}. Although this condition can be relaxed to allow misspecification error using techniques from \cite{li2024regularized}, we keep \cref{asp:closeness_deepiv} to focus on regularization-parameter tuning.  \cref{asp:cr_deepiv} is a standard complexity assumption in statistical learning \cite{wainwright2019high}. For the theoretical analysis of RDIV, we need a single scalar noise level $\delta_n$ that uniformly controls empirical-process fluctuations for all $\lambda\in(0,2]$.  We define the effective noise level $\delta_n$ as an upper bound on the uniform stochastic term $\tilde{\delta}_n\left({\xi}/{\tilde{\delta}_n(\xi)^{\max\{1,2/\beta\}}}\right)$, while still decreases to zero as $n$ increases to infinity, that is,
\begin{align}\label{eq:delta_deepiv}
\delta_n \geqslant \tilde{\delta}_n\left({\xi}/{\tilde{\delta}_n(\xi)^{\max\{1,2/\beta\}}}\right) = \Omega \left(\sqrt{\frac{\log(n) + \log(1/\xi)/\beta}{n}}\right), \quad \delta_n = o(1).
\end{align}
Although the exact lower bound depends on the unknown smoothness parameter $\beta$, this definition only requires a sufficient lower bound on $\beta$. The key role of \eqref{eq:delta_deepiv} is to ensure that stochastic fluctuations of the empirical loss are controlled uniformly over $\lambda$ with high probability.

\begin{proposition}\label{prop:exist_lambda_deepiv}
    Suppose that Assumptions \ref{asp:range}, \ref{asp:source_condition}, \ref{asp:closeness_deepiv}, \ref{asp:re_cd}, and \ref{asp:cr_deepiv} hold. Then, there exist sufficiently large constants $c_d, N > 0$ such that for all $n > N$, the output of \cref{alg:discrepancy}---given inputs $\lambda_0 = 2$ and $\delta = c_d \delta_n$, where $\delta_n$ satisfies \eqref{eq:delta_deepiv}---fulfills the discrepancy principle in \eqref{eq:discrepancy_principle}. Furthermore, the algorithm terminates in at most $O(\log n)$ iterations with probability at least $1 - \xi$.  
\end{proposition}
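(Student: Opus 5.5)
We will show two things on a single high-probability event of probability at least $1-\xi$. First, at the initial value $\lambda_0=2$ the empirical loss exceeds $\delta$. Second, along the geometric grid $\lambda_k=2\rho^k$ (we take $\rho\ge 1/2$ so consecutive grid points satisfy $\lambda_{k-1}\in[\lambda_k,2\lambda_k]$), the empirical loss eventually drops below $\delta$ after $K=O(\log n)$ steps. Together these give the result. If the loop stops at the first index $k$ with $L_n(\hat h_{\lambda_k})\le\delta$, then $k\ge1$ and $L_n(\hat h_{\lambda_{k-1}})>\delta$. Setting $\lambda'=\lambda_{k-1}$, we have $\lambda'\in[\lambda_k,2\lambda_k]$, which is exactly \eqref{eq:discrepancy_principle}.

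\textbf{The good event.} Write $L(h)=\sigma^2+\|\mathcal{T}(h-h_0)\|^2$, where $\sigma^2=\mathbb{E}[(Y-r_0(Z))^2]$. Because of the variance term $\sigma^2$, the discrepancy should really be read as the excess loss $L_n(h)-L_n(h_0)$. I would adopt this centered reading: either $L_n$ is implicitly centered, or $\delta$ absorbs a known noise variance. I will flag this to check against the paper's conventions. On the event, I want three uniform bounds. (i) By the MLE analysis under Assumption \ref{asp:re_cd}, combined with the critical-radius bound in Assumption \ref{asp:cr_deepiv}, $\|(\hat{\mathcal{T}}-\mathcal{T})h\|\lesssim\tilde\delta_n$ uniformly over bounded $h\in\mathcal{H}$. (ii) A localized uniform concentration of $\mathbb{E}_n-\mathbb{E}$ over the loss class (e.g.\ \cite[Lemma 14]{foster2023orthogonal}), which gives $|L_n(h)-L(h)|$ controlled by $\tilde\delta_n\|\mathcal{T}(h-h_0)\|+\tilde\delta_n^2$ up to $\hat{\mathcal{T}}$ error. (iii) The same concentration for the penalty $\mathbb{E}_n[h^2]$ versus $\|h\|^2$. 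To make these hold simultaneously at all $K=O(\log n)$ grid points, apply each bound with confidence $\xi/K$ and take a union bound. Since $K\lesssim\log(1/\tilde\delta_n)^{\max\{1,2/\beta\}}$-type quantities, the confidence level used is exactly the argument $\xi/\tilde\delta_n(\xi)^{\max\{1,2/\beta\}}$ appearing in \eqref{eq:delta_deepiv}. So every fluctuation is bounded by a constant times $\delta_n$.

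\textbf{Upper bound for small $\lambda$.} Compare $\hat h_\lambda$ with $h^*_\lambda\in\mathcal{H}$ from Assumption \ref{asp:closeness_deepiv}, using the basic inequality $L_n(\hat h_\lambda)+\lambda\mathbb{E}_n\hat h_\lambda^2\le L_n(h^*_\lambda)+\lambda\mathbb{E}_n (h^*_\lambda)^2$. The source condition (Assumption \ref{asp:source_condition}) gives the standard Tikhonov bias bounds $\|\mathcal{T}(h^*_\lambda-h_0)\|^2\lesssim\lambda^{\min\{\beta+1,2\}}$ and $\|h^*_\lambda\|\le\|h_0\|$. Hence
\[
L_n(\hat h_\lambda)-L_n(h_0)\lesssim \lambda^{\min\{\beta+1,2\}}+\lambda\|h_0\|^2+C\delta_n .
\]
Choose $\lambda$ of order $\delta_n$, which is reached after $O(\log(1/\delta_n))=O(\log n)$ halvings. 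At that point the right side is at most $C'\delta_n$, and taking $c_d>C'$ forces termination.

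\textbf{Lower bound at $\lambda_0=2$.} This is the step I expect to be the main obstacle. We need the estimator at $\lambda=2$ to be over-regularized, i.e.\ its excess loss should be of order $\|r_0\|^2$ and hence bounded below by a positive constant. Using the first-order condition for $h^*_2$, we have $\|\mathcal{T}h^*_2-r_0\|\ge \|r_0\|\cdot 2/(2+\sigma_1^2)$, which is bounded away from $0$ whenever $r_0\neq 0$ (the trivial case $r_0=0$ is handled separately, as in rule 1 of Definition \ref{def:discrepancy}). I must then transfer this bound from $h^*_2$ to $\hat h_2$. The plan is to use strong convexity of the regularized population objective with modulus $\lambda=2$, together with the uniform deviations above, which gives $\|\hat h_2-h^*_2\|\lesssim\sqrt{\delta_n}$. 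Since $\delta_n=o(1)$, taking $n>N$ large gives $L_n(\hat h_2)-L_n(h_0)>c_d\delta_n$.

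The delicate point is making the constants compatible. The same $c_d$ must be large enough for the termination step and yet still lie below the constant-order loss at $\lambda=2$. This works only because the lower bound is $\Theta(1)$ while $\delta_n\to0$, and that is where the threshold $N$ comes from.
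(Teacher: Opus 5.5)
Your proposal is correct and follows the paper's proof. In both, the empirical loss is $O(\delta_n)$ once $\lambda$ is small, an order-one population lower bound at $\lambda_0=2$ is transferred to $\hat h_2$ by strong convexity, and the first grid crossing gives $\lambda'=\lambda/\rho\in[\lambda,2\lambda]$; your union bound over the $O(\log n)$ grid points and your basic-inequality step at $\lambda\asymp\delta_n$ are lighter substitutes for the paper's covering of $(0,2]$ and its bound in \cref{lemma:l_bound}. Your centering worry is justified: the paper's RDIV argument silently identifies $L(h)$ with $\|\mathcal{T}(h-h_0)\|^2$ (e.g.\ in \cref{lemma:l_bound}, and when \cref{lemma:lowerbound} is applied to $L(h^*_{\lambda_2})$), so your excess-loss reading is exactly what it proves, whereas with the uncentered loss and $\mathbb{E}[(Y-r_0(Z))^2]>0$ the threshold $c_d\delta_n\to 0$ would not be reached for large $n$.
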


Note that the algorithm does not search over the entire real line to find a suitable hyperparameter. The additional computational cost is modest, since it solves the optimization problem at most $O(\log n)$ times. We set $\delta = c_d \delta_n$ here because if the exact $\beta$ is known, it can be shown that the weak metric decreases a rate of $O(\delta_n)$. By \cref{prop:exist_lambda_deepiv},  \cref{alg:discrepancy} returns a $\lambda$ that satisfies the discrepancy principle.

With the soundness of \cref{alg:discrepancy} established, our main theorem shows that this data-driven choice effectively adapts to the unknown smoothness $\beta$ of the true function $h_0$.

\begin{theorem}\label{thm:deepiv_rate}
Under the assumptions of \cref{prop:exist_lambda_deepiv}, suppose that the regularization parameter $ \lambda_{\text{dp}} $ is chosen by the discrepancy principle with a sufficiently large constant $ c_{d}  > 0$, then with probability at least $ 1-\xi $, 
\begin{align*}
\| \hat{h}_{\lambda_{\text{dp}} } -h_{0} \| ^{2} \leqslant O(\delta _{n}^{\frac{\min\{\beta ,1\}}{1+\min\{\beta ,1\}}}) & ,\quad  \| \mathcal{T}(\hat{h}_{\lambda_{\text{dp}} } -h_{0}) \|^2 \leqslant O(\delta_n),
\end{align*}
where ${\delta}_n$ is defined in (\ref{eq:delta_deepiv}). 
\end{theorem}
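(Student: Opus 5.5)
We follow the classical Morozov argument, with the known operator and noise replaced by uniform concentration statements. We work on the event $\mathcal{E}$ of probability at least $1-\xi$ on which the following three bounds hold. First, $\|(\hat{\mathcal{T}}-\mathcal{T})h\|^2\lesssim \delta_n$ uniformly over bounded $h\in\mathcal{H}$. This follows from the MLE rate for $\hat g_n$ in Hellinger/$L_2$, using \cref{asp:re_cd} and the critical radius in \cref{asp:cr_deepiv}. Second, $|L_n(h)-L(h)|$ and $|\mathbb{E}_n[h^2]-\|h\|^2|$ are bounded by $O(\delta_n)+O(\delta_n)\|h\|$, uniformly over $\lambda$ on the geometric grid of \cref{alg:discrepancy}. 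This uses localized concentration and a union bound over the $O(\log n)$ grid points, which is where the $\xi/\tilde\delta_n^{\max\{1,2/\beta\}}$ in \eqref{eq:delta_deepiv} enters. Third, the conclusion of \cref{prop:exist_lambda_deepiv}. Write $L(h)=\|\mathcal{T}(h-h_0)\|^2+\sigma^2$, where $\sigma^2$ is the irreducible noise. We interpret $L_n$ in \eqref{eq:discrepancy_principle} as centered by this constant, or equivalently absorb $\sigma^2$ into $\delta$. The weak metric is then controlled by the excess loss.

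\textbf{Step 1: weak metric.} The left inequality in \eqref{eq:discrepancy_principle} gives $L_n(\hat h_{\lambda_{\text{dp}}})\le c_d\delta_n$. Transferring this to the population via the concentration on $\mathcal{E}$ yields $\|\mathcal{T}(\hat h_{\lambda_{\text{dp}}}-h_0)\|^2\lesssim \delta_n(1+\|\hat h_{\lambda_{\text{dp}}}\|)$. Boundedness of $\mathcal{H}$ then gives $O(\delta_n)$.

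\textbf{Step 2: a lower bound on $\lambda_{\text{dp}}$ from the right inequality.} Let $\lambda'\in[\lambda_{\text{dp}},2\lambda_{\text{dp}}]$ satisfy $L_n(\hat h_{\lambda'})\ge c_d\delta_n$. Compare $\hat h_{\lambda'}$ with the population Tikhonov solution $h^*_{\lambda'}\in\mathcal{H}$ from \cref{asp:closeness_deepiv}. The basic inequality for the empirical objective, combined with concentration, gives
\begin{align*}
L_n(\hat h_{\lambda'})\lesssim \|\mathcal{T}(h^*_{\lambda'}-h_0)\|^2+\lambda'\|h^*_{\lambda'}\|^2+O(\delta_n).
\end{align*}
Spectral calculus under \cref{asp:source_condition} bounds the first two terms by $\lambda'^{\min\{\beta+1,2\}}\|w_0\|^2+\lambda'\|h_0\|^2$. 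This alone is too crude, because the $\lambda'\|h_0\|^2$ term would only give $\lambda\gtrsim\delta_n$. So we instead use the sharper bound
\begin{align*}
L_n(\hat h_{\lambda'})-\sigma^2\lesssim \|\mathcal{T}(h^*_{\lambda'}-h_0)\|^2+\delta_n,
\end{align*}
which holds because the penalty difference $\lambda'(\|h^*_{\lambda'}\|^2-\|\hat h_{\lambda'}\|^2)$ can be controlled. With $\|\mathcal{T}(h^*_\lambda-h_0)\|^2\lesssim \lambda^{\min\{\beta+1,2\}}$ and $c_d$ large, this forces $\lambda_{\text{dp}}\gtrsim \delta_n^{1/(1+\min\{\beta,1\})}$.

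\textbf{Step 3: strong metric.} Use the first-order condition for the strongly convex empirical objective at $\hat h_\lambda$, compared with $h_0$. This gives
\begin{align*}
\lambda\|\hat h_\lambda-h_0\|^2\lesssim \lambda\langle h_0,h_0-\hat h_\lambda\rangle+\|\mathcal{T}(\hat h_\lambda-h_0)\|^2+O(\delta_n).
\end{align*}
We handle the cross term via the source condition: $\langle h_0,g\rangle=\langle w_0,(\mathcal{T}^*\mathcal{T})^{\beta/2}g\rangle\le\|w_0\|\,\|\mathcal{T}g\|^{\min\{\beta,1\}}\|g\|^{1-\min\{\beta,1\}}$ by interpolation. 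Plugging in the weak bound $\|\mathcal{T}g\|^2\lesssim\delta_n$ from Step 1 and the lower bound on $\lambda$ from Step 2, then solving the resulting inequality in $\|g\|$ with Young's inequality, gives $\|g\|^2\lesssim \delta_n/\lambda+\delta_n^{\min\{\beta,1\}/(1+\min\{\beta,1\})}$. Since $\lambda_{\text{dp}}\gtrsim\delta_n^{1/(1+\min\{\beta,1\})}$, the first term is also $O(\delta_n^{\min\{\beta,1\}/(1+\min\{\beta,1\})})$.

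\textbf{Main obstacle.} The hardest step is Step 2: obtaining a lower bound for $\lambda_{\text{dp}}$ that is sharp in $\beta$ using only the empirical, $\hat{\mathcal{T}}$-based loss. It requires comparing $\hat h_{\lambda'}$ with $h^*_{\lambda'}$ without losing a $\lambda\|h_0\|^2$ term. We would first try the standard Tikhonov identity $h^*_\lambda-h_0=-\lambda(\mathcal{T}^*\mathcal{T}+\lambda)^{-1}h_0$ together with a localized uniform deviation bound. For $\beta>1$ the saturation at $\min\{\beta,1\}$ makes this bookkeeping consistent.
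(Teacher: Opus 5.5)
Your Step 2 has a genuine gap, and it is the step that carries the rate. The inequality $L_n(\hat h_{\lambda'})-\sigma^2\lesssim\|\mathcal{T}(h^*_{\lambda'}-h_0)\|^2+\delta_n$ is only asserted ("the penalty difference can be controlled"). Without it you only get $\lambda_{\text{dp}}\gtrsim\delta_n$. Then the $O(\delta_n/\lambda_{\text{dp}})$ term in your Step 3 is $O(1)$, and no rate follows.

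The missing idea is that the penalty difference does not need to be small: it cancels exactly against the cross term in the loss. This happens because $h^*_\lambda$ is the unconstrained minimizer of the quadratic $R^\lambda(h)=\|\mathcal{T}(h-h_0)\|^2+\lambda\|h\|^2$ over the linear space $\tilde{\mathcal H}$. Its first-order condition $\lambda h^*_\lambda=\mathcal{T}^*\mathcal{T}(h_0-h^*_\lambda)$ is the Tikhonov identity you mention. With $d=\hat h_\lambda-h^*_\lambda$ it gives
\[
\lambda\bigl(\|h^*_\lambda\|^2-\|\hat h_\lambda\|^2\bigr)=2\langle\mathcal{T}(h^*_\lambda-h_0),\mathcal{T}d\rangle-\lambda\|d\|^2 ,
\]
and the same inner product appears when you expand $\|\mathcal{T}(\hat h_\lambda-h_0)\|^2$. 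Equivalently, $R^\lambda(h)-R^\lambda(h^*_\lambda)=\|\mathcal{T}(h-h^*_\lambda)\|^2+\lambda\|h-h^*_\lambda\|^2$ for all $h\in\tilde{\mathcal H}$.

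Combine this with the basic inequality $R_n^\lambda(\hat h_\lambda)\le R_n^\lambda(h^*_\lambda)$ (valid since $h^*_\lambda\in\mathcal H$ by \cref{asp:closeness_deepiv}) and concentration uniform over $h\in\mathcal H$. This yields the paper's \cref{lemma:str_dp}: $\lambda\|\hat h_\lambda-h^*_\lambda\|^2+\|\mathcal{T}(\hat h_\lambda-h^*_\lambda)\|^2\le O(\delta_n)$, uniformly in $\lambda\in(0,2]$. The triangle inequality and $\|\mathcal{T}(h^*_\lambda-h_0)\|^2\lesssim\|w_0\|^2\lambda^{\min\{\beta+1,2\}}$ then give \cref{lemma:l_bound}. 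The right-hand DP inequality with $c_d$ large then gives $\lambda_{\text{dp}}\gtrsim\delta_n^{1/(1+\min\{\beta,1\})}$, which is what you wanted.

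The rest is sound. Step 1 is the paper's argument. Step 3 is a valid route that differs from the paper's. You compare $\hat h_{\lambda_{\text{dp}}}$ directly with $h_0$ and bound the cross term $\langle h_0,h_0-\hat h\rangle$ by the source condition plus interpolation, using the weak bound on $\hat h$ itself. Your displayed inequality follows from the basic inequality $R_n^\lambda(\hat h_\lambda)\le R_n^\lambda(h_0)$. A first-order condition at $\hat h_\lambda$ would need convexity of $\mathcal H$, which is not assumed.

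The paper instead splits through $h^*_{\lambda_{\text{dp}}}$:
\begin{itemize}
\item the variance $\|\hat h_{\lambda_{\text{dp}}}-h^*_{\lambda_{\text{dp}}}\|^2\le O(\delta_n/\lambda_{\text{dp}})$ comes from \cref{lemma:str_dp};
\item the bias comes from interpolation applied to $h_0-h^*_{\lambda_{\text{dp}}}=r_{\lambda_{\text{dp}}}(\mathcal{T}^*\mathcal{T})(\mathcal{T}^*\mathcal{T})^{\beta/2}w_0$, after transferring the weak bound from $\hat h_{\lambda_{\text{dp}}}$ to $h^*_{\lambda_{\text{dp}}}$ via the same lemma.
\end{itemize}
Your version is the classical Morozov argument and avoids that transfer. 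The paper's version reuses the lemma it already needs for Step 2. Both need the same lower bound on $\lambda_{\text{dp}}$ and give the same exponent.

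Two minor corrections to your concentration step. First, in the paper the factor $\xi/\tilde\delta_n(\xi)^{\max\{1,2/\beta\}}$ in \eqref{eq:delta_deepiv} comes from covering all of $(0,2]$ at scale $\tilde\delta_n^{1/\gamma}$, with $\gamma=\min\{\beta/2,1\}$, using the H\"older continuity of $\lambda\mapsto h^*_\lambda$ (\cref{lemma:lip_h_lam}). It does not come from a union bound over $O(\log n)$ grid points. Second, $\hat h_\lambda$ is data-dependent, so a union bound over fixed values of $\lambda$ does not control $L_n(\hat h_\lambda)-L(\hat h_\lambda)$. What you need is concentration uniform over $h\in\mathcal H$, which also covers every $h^*_\lambda$.
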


The significance of Theorem \ref{thm:deepiv_rate} is that it establishes the adaptivity of our proposed method. The convergence rate for $\| \hat{h}_{\lambda_{\text{dp}}} - h_{0} \|^{2}$ matches the optimal rate derived in \cite{li2024regularized} \footnote{Our derivation suggests that a more conservative intermediate bound than the one stated in \cite{li2024regularized} may be needed in this step. Under that conservative bound, the resulting rate aligns with ours.}, which required knowing the smoothness parameter $\beta$. Our discrepancy-principle-based approach achieves this same optimal rate without such prior knowledge. The result for $\|\mathcal{T}(\hat{h}_{\lambda_{\text{dp}}} - h_0)\|^2$ shows that the residual error converges at the canonical statistical rate of $\delta_n$, confirming that our choice of $\lambda$ correctly balances the error components.

Next, we sketch the key ideas we use in the proof of \cref{thm:deepiv_rate}. The proof in \cite{li2024regularized} cannot be directly used in our case since their bounds explicitly rely on $ \lambda $, but the magnitude of $\lambda$ selected by (\ref{eq:discrepancy_principle}) depends on data. 

As the first step, we need to link (\ref{eq:discrepancy_principle}) with the magnitude of $\lambda$. The following lemma provides a lower bound for the chosen $\lambda$. 

\begin{lemma}\label{lemma:rdiv_lb}
    Under the assumptions of \cref{thm:deepiv_rate}, with probability at least $1-\xi$, we have 
    \begin{align*}
        L(\hat{h}_\lambda) \leqslant O(\delta_n + \| w_0 \| \lambda^{\min\{\beta + 1,2\}}), \quad \forall \lambda \in (0,2], 
    \end{align*}
    where the constant hidden by $O(\cdot)$ is uniform and does not depend on $\lambda$. 
\end{lemma}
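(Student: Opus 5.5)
Our plan is a basic-inequality argument that compares $\hat h_\lambda$ with the population Tikhonov minimizer $h^*_\lambda$, which lies in $\mathcal{H}$ by \cref{asp:closeness_deepiv}. Write $L(h)=\mathbb{E}[(Y-r_0(Z))^2]+\|\mathcal{T}(h-h_0)\|^2$. It therefore suffices to bound the excess loss $\|\mathcal{T}(\hat h_\lambda-h_0)\|^2$; the constant $\mathbb{E}[(Y-r_0)^2]$ cancels in all comparisons, and we read the statement as a bound on this excess.

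\textbf{Step 1 (deterministic bias).} By the spectral calculus and the $\beta$-source condition, $h^*_\lambda=(\mathcal{T}^*\mathcal{T}+\lambda)^{-1}\mathcal{T}^*\mathcal{T}h_0$, and hence
\begin{align*}
\|\mathcal{T}(h^*_\lambda-h_0)\|^2+\lambda\|h^*_\lambda\|^2\leqslant \|\mathcal{T}(h^*_\lambda-h_0)\|^2+\lambda\|h^*_\lambda-h_0\|^2+2\lambda\langle h^*_\lambda-h_0,h_0\rangle+\lambda\|h_0\|^2 .
\end{align*}
We need the excess objective $\|\mathcal{T}(h^*_\lambda-h_0)\|^2+\lambda(\|h^*_\lambda\|^2-\|h_0\|^2)$, which we bound as follows. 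Using $\sup_{s}\lambda^2 s^{1+\beta}/(s+\lambda)^2\lesssim\lambda^{\min\{\beta+1,2\}}$ with $s=\sigma^2$, we get $\|\mathcal{T}(h^*_\lambda-h_0)\|^2\lesssim\|w_0\|^2\lambda^{\min\{\beta+1,2\}}$. Similarly, $\|h^*_\lambda\|^2-\|h_0\|^2\leqslant 0$, since $h^*_\lambda$ shrinks every spectral coefficient. The $\|w_0\|$ factor (squared, absorbed as a constant) is harmless.

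\textbf{Step 2 (empirical comparison).} Optimality of $\hat h_\lambda$ gives
\[
L_n(\hat h_\lambda)+\lambda\mathbb{E}_n[\hat h_\lambda^2]\leqslant L_n(h^*_\lambda)+\lambda\mathbb{E}_n[(h^*_\lambda)^2].
\]
We then need uniform deviation bounds, valid simultaneously for all $h\in\mathcal{H}$ and all $\lambda\in(0,2]$, of the following two kinds.
\begin{enumerate}
\item $|(L_n(h)-L_n(h_0))-(\tilde L(h)-\tilde L(h_0))|\lesssim\delta_n\|\mathcal{T}(h-h_0)\|+\delta_n^2$ plus a term for the error $\hat{\mathcal{T}}-\mathcal{T}$. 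Here $\tilde L$ is the population loss built with $\hat{\mathcal{T}}$, and the operator error is controlled by the MLE rate for $\hat g_n$: Hellinger/$L_1$ error $O(\tilde\delta_n)$ via \cref{asp:re_cd,asp:cr_deepiv}, so that $\|(\hat{\mathcal{T}}-\mathcal{T})f\|\lesssim\tilde\delta_n$ for bounded $f$.
\item $|\mathbb{E}_n h^2-\|h\|^2|\lesssim\delta_n\|h\|+\delta_n^2$, by localized concentration for the star hull.
\end{enumerate}
Uniformity over $\lambda$ comes for free because these bounds hold uniformly in $h$. The $\log\log n$ and $\xi/\tilde\delta_n^{\max\{1,2/\beta\}}$ adjustments in \eqref{eq:delta_deepiv} are exactly what a peeling/union bound over dyadic $\lambda$-scales would cost, should we need one.

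\textbf{Step 3 (combine).} Plugging the bounds into the basic inequality gives
\[
\|\mathcal{T}(\hat h_\lambda-h_0)\|^2\leqslant \|\mathcal{T}(h^*_\lambda-h_0)\|^2+\lambda(\|h^*_\lambda\|^2-\|\hat h_\lambda\|^2)+O(\delta_n\|\mathcal{T}(\hat h_\lambda-h_0)\|+\delta_n\|\mathcal{T}(h^*_\lambda-h_0)\|+\delta_n+\lambda\delta_n).
\]
The penalty difference is the delicate term. Since $\|h^*_\lambda\|\leqslant\|h_0\|$ and $\lambda\leqslant2$, boundedness gives $\lambda\|h^*_\lambda\|^2-\lambda\|\hat h_\lambda\|^2\leqslant\lambda(\|h^*_\lambda\|^2-\|\hat h_\lambda\|^2)$. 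This bound alone is not small. We resolve this by strong convexity of the population objective $J_\lambda(h)=\|\mathcal{T}(h-h_0)\|^2+\lambda\|h\|^2$: we have $J_\lambda(\hat h_\lambda)-J_\lambda(h^*_\lambda)\geqslant0$. The basic inequality shows that this difference is $O(\delta_n+\text{cross terms})$, so
\[
\|\mathcal{T}(\hat h_\lambda-h_0)\|^2\leqslant J_\lambda(\hat h_\lambda)-\lambda\|\hat h_\lambda\|^2\leqslant J_\lambda(h^*_\lambda)+O(\delta_n)-\lambda\|\hat h_\lambda\|^2.
\]
It remains to bound $J_\lambda(h^*_\lambda)-\lambda\|\hat h_\lambda\|^2$. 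We will use $\lambda\|h^*_\lambda\|^2-\lambda\|\hat h_\lambda\|^2\leqslant 2\lambda\|h^*_\lambda\|\,\|h^*_\lambda-\hat h_\lambda\|$ together with the strong-convexity bound $\lambda\|\hat h_\lambda-h^*_\lambda\|^2\lesssim\delta_n$. This gives a term $\lambda^{1/2}\delta_n^{1/2}\lesssim\lambda^{\min\{\beta+1,2\}}+\delta_n^{c}$. \textbf{This is the main obstacle:} the naive split need not close to exactly $\delta_n$. If it fails, we will instead compare against $h_0$ via the source condition, $\lambda\|h_0\|^2-\lambda\|\hat h_\lambda\|^2\leqslant2\lambda\langle h_0,h_0-\hat h_\lambda\rangle=2\lambda\langle(\mathcal{T}^*\mathcal{T})^{\beta/2}w_0,h_0-\hat h_\lambda\rangle$. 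For $\beta\geqslant1$ this is at most $2\lambda\|w_0\|\,\|\mathcal{T}(h_0-\hat h_\lambda)\|$, and AM-GM absorbs it into the left side, leaving $\lambda^2\|w_0\|^2$. For $\beta<1$ we will interpolate to obtain $\lambda^{1+\beta}$. Finally, AM-GM absorbs all $\delta_n\|\mathcal{T}(\cdot)\|$ cross terms, giving the claimed uniform bound $O(\delta_n+\lambda^{\min\{\beta+1,2\}})$.
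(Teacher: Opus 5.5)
Your Steps 1 and 2 are sound and use the same ingredients as the paper. The paper takes the bias bound from Lemma 5 of Bennett et al.\ and gets uniformity in $\lambda$ by a covering argument via \cref{lemma:lip_h_lam}. Your remark that uniform-in-$h$ deviation bounds make uniformity in $\lambda$ automatic (since $h^*_\lambda\in\mathcal H$) is correct, and simpler at this $O(\delta_n)$ level.

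The problem is Step 3: your primary line never closes, and the obstacle is self-inflicted. Because $J_\lambda$ is quadratic and $h^*_\lambda$ minimizes it over all of $\tilde{\mathcal H}$, strong convexity is in fact the identity
\[
J_\lambda(\hat h_\lambda)-J_\lambda(h^*_\lambda)=\|\mathcal T(\hat h_\lambda-h^*_\lambda)\|^2+\lambda\|\hat h_\lambda-h^*_\lambda\|^2 .
\]
Your basic inequality with uniform deviations bounds the left side by $O(\delta_n)$. You used only $J_\lambda(\hat h_\lambda)-J_\lambda(h^*_\lambda)\geqslant 0$ and the $\lambda\|\cdot\|^2$ part. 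The same display also gives $\|\mathcal T(\hat h_\lambda-h^*_\lambda)\|^2\leqslant O(\delta_n)$. Then $\|\mathcal T(\hat h_\lambda-h_0)\|^2\leqslant 2\|\mathcal T(\hat h_\lambda-h^*_\lambda)\|^2+2\|\mathcal T(h^*_\lambda-h_0)\|^2$, together with Step 1, finishes the proof. This is exactly the paper's argument (\cref{lemma:str_dp} plus the bias lemma). The penalty difference $\lambda(\|h^*_\lambda\|^2-\|\hat h_\lambda\|^2)$ never has to be bounded separately, so the $\sqrt{\lambda\delta_n}$ term never arises.

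Your fallback, comparing against $h_0$ instead of $h^*_\lambda$, is a genuinely different route with a real advantage. It needs only $h_0\in\mathcal H$, not $h^*_\lambda\in\mathcal H$ for every $\lambda$, and no separate bias lemma. For $\beta\geqslant 1$ it works as you describe.

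For $\beta<1$ it fails as written. You discarded $-\lambda\|\hat h_\lambda-h_0\|^2$ when passing to $\lambda\|h_0\|^2-\lambda\|\hat h_\lambda\|^2\leqslant 2\lambda\langle h_0,h_0-\hat h_\lambda\rangle$. The interpolation $\|(\mathcal T^*\mathcal T)^{\beta/2}e\|\leqslant\|\mathcal Te\|^\beta\|e\|^{1-\beta}$ with only $\|e\|=O(1)$ then gives, after Young, a term $\lambda^{2/(2-\beta)}$. This is strictly larger than $\lambda^{1+\beta}$ for $\beta\in(0,1)$ and small $\lambda$.

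The fix is to keep that term. With $e=\hat h_\lambda-h_0$, the basic inequality reads
\[
\|\mathcal Te\|^2+\lambda\|e\|^2\leqslant 2\|w_0\|\,\bigl(\|\mathcal Te\|^2\bigr)^{\beta/2}\bigl(\lambda\|e\|^2\bigr)^{(1-\beta)/2}\bigl(\lambda^{1+\beta}\bigr)^{1/2}+O(\delta_n).
\]
Weighted AM--GM with weights $\beta/2$, $(1-\beta)/2$, $1/2$ absorbs the first two factors into the left side. This leaves $O(\|w_0\|^2\lambda^{1+\beta}+\delta_n)$, as claimed.
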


Using a standard concentration inequality, one can show that $|L_n(\hat{h}_\lambda) - L(\hat{h}_\lambda)|=O(\delta_n)$. Combining this with (\ref{eq:discrepancy_principle}), we get 
\begin{align*}
    c_d \delta_n \leqslant L_n(\hat{h}_{\lambda_{\text{dp}}'}) \leqslant O(\delta_n + \| w_0 \| \lambda_{ \text{dp}}^{\prime\min\{\beta + 1,2\}}) 
\end{align*}
If we choose $c_d$ in (\ref{eq:discrepancy_principle}) to be sufficiently large, we have $\lambda_{\text{dp}} \geqslant \lambda_{\text{dp}}'/2 = \Omega(\delta_n^{\min\{\beta + 1,2\}})$. Therefore, the right inequality in (\ref{eq:discrepancy_principle}) implies a lower bound for $\lambda_{\text{dp}}$. Using this bound and techniques from \cite{li2024regularized}, we can derive an upper bound for the variance term $ \| \hat{h}_{\lambda_{\text{dp}} } -h_{\lambda_{\text{dp}} }^{*} \| ^{2} $ (see \cref{sec:rdiv} for details). 
 \begin{align}\label{eq:rdiv_vr_st}
     \| \hat{h}_{\lambda_{\text{dp}} } -h_{\lambda_{\text{dp}} }^{*} \| ^{2} \leqslant O(\delta_n /\lambda_{\text{dp}}) = O(\delta_n^{\frac{\min\{\beta,1\}}{\min\{2,\beta+1\}}}). 
 \end{align}
 On the other hand, for the upper bound, we can prove the following lemma. 

  \begin{lemma}\label{lemma:rdiv_ub}
      Under the assumptions of \cref{thm:deepiv_rate}, with probability at least $1-\xi$, we have
      \begin{align} \label{eq:rdiv_ub}
          \|\mathcal{T}(h^*_{\lambda_{\text{dp}}} - h_0)\|^2 \leqslant O(\delta_n), 
      \end{align}
      where $h^*_{\lambda_{\text{dp}}} = \arg\min_{h\in\mathcal{H}}L(h)+\lambda_{\text{dp}} \|h\|^2$. 
  \end{lemma}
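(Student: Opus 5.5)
We prove \cref{lemma:rdiv_ub} by passing from the population regularized solution back to the empirical estimator at the selected parameter, and then using the left inequality of the discrepancy principle \eqref{eq:discrepancy_principle}. The first step uses that, by \cref{asp:range}, $r_0=\mathcal{T}h_0$. Hence, up to the additive constant $\mathbb{E}[(Y-r_0(Z))^2]$ (the cross term vanishes since $r_0(Z)=\mathbb{E}[Y\mid Z]$, after projecting onto $\tilde{\mathcal{Q}}$), we have
\begin{align*}
L(h)-L(h_0)=\|\mathcal{T}(h-h_0)\|^2 .
\end{align*}
The target is therefore $L(h^*_{\lambda_{\text{dp}}})-L(h_0)\leqslant O(\delta_n)$. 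By the triangle inequality,
\begin{align*}
\|\mathcal{T}(h^*_{\lambda}-h_0)\|^2\leqslant 2\|\mathcal{T}(\hat h_{\lambda}-h_0)\|^2+2\|\mathcal{T}(\hat h_{\lambda}-h^*_{\lambda})\|^2 ,
\end{align*}
and we bound the two terms separately at $\lambda=\lambda_{\text{dp}}$.

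For the first term, the uniform concentration $|L_n(h)-L(h)|=O(\delta_n)$ over $\mathcal{H}$ already invoked after \cref{lemma:rdiv_lb} applies. It uses the critical radius in \cref{asp:cr_deepiv}, the density realizability in \cref{asp:re_cd} (so that $\|(\hat{\mathcal{T}}-\mathcal{T})h\|$ is controlled at rate $\delta_n$ via the MLE Hellinger rate and the lower bound $c_g$), and \eqref{eq:delta_deepiv} so that it holds simultaneously for all data-dependent $\lambda\in(0,2]$ with probability $1-\xi$. Combined with the left inequality $L_n(\hat h_{\lambda_{\text{dp}}})\leqslant c_d\delta_n$, it gives $L(\hat h_{\lambda_{\text{dp}}})-L(h_0)=O(\delta_n)$, since $L_n(h_0)-L(h_0)$ is also $O(\delta_n)$. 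Hence $\|\mathcal{T}(\hat h_{\lambda_{\text{dp}}}-h_0)\|^2=O(\delta_n)$.

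For the second term, we use strong convexity of the population Tikhonov objective $J_\lambda(h)=L(h)+\lambda\|h\|^2$ around its minimizer $h^*_\lambda$, which lies in $\mathcal{H}$ by \cref{asp:closeness_deepiv}:
\begin{align*}
J_\lambda(\hat h_\lambda)-J_\lambda(h^*_\lambda)=\|\mathcal{T}(\hat h_\lambda-h^*_\lambda)\|^2+\lambda\|\hat h_\lambda-h^*_\lambda\|^2 .
\end{align*}
We then control the left side by the empirical optimality of $\hat h_\lambda$: $J_{n,\lambda}(\hat h_\lambda)\leqslant J_{n,\lambda}(h^*_\lambda)$, with $J_{n,\lambda}=L_n+\lambda\mathbb{E}_n[h^2]$. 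The resulting bound is $J_\lambda(\hat h_\lambda)-J_\lambda(h^*_\lambda)\leqslant 2\sup_{h\in\{\hat h_\lambda,h^*_\lambda\}}|J_{n,\lambda}(h)-J_\lambda(h)|$. The loss parts are $O(\delta_n)$ uniformly in $\lambda$ as above, and the penalty parts are $\lambda\,|\mathbb{E}_n[h^2]-\|h\|^2|\leqslant 2\lambda\,O(\delta_n)$ by boundedness and $\lambda\leqslant 2$. This yields $\|\mathcal{T}(\hat h_\lambda-h^*_\lambda)\|^2=O(\delta_n)$ uniformly over $\lambda\in(0,2]$, and in particular at $\lambda_{\text{dp}}$. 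Combining the two terms proves \eqref{eq:rdiv_ub}.

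The main obstacle is ensuring that every concentration statement holds uniformly over the random, data-selected $\lambda_{\text{dp}}$, since $h^*_{\lambda_{\text{dp}}}$ and $\hat h_{\lambda_{\text{dp}}}$ depend on the data through $\lambda$. We would handle this by restricting to the geometric grid $\{2\rho^k\}_{k\leqslant O(\log n)}$ produced by \cref{alg:discrepancy} (see \cref{prop:exist_lambda_deepiv}) and taking a union bound with confidence $\xi/O(\log n)$. The resulting $\log\log n$ and $\log(1/\xi)$ inflation is exactly what the choice \eqref{eq:delta_deepiv} of $\delta_n$ absorbs. Alternatively, one can note that the class $\{h^*_\lambda\}\subset\mathcal{H}$ is covered by the uniform bound over $\mathcal{H}$, avoiding the grid altogether. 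A secondary care point is the $O(\delta_n)$ versus $O(\delta_n^2)$ scaling of the localized bounds; we would follow the convention of \cref{lemma:rdiv_lb}, where $L$-differences are measured at scale $\delta_n$.
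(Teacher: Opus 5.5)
Your proposal is correct and follows the paper's own argument. The paper uses the same split $\|\mathcal{T}(h^*_{\lambda_{\text{dp}}}-h_0)\|^2\leqslant 2\|\mathcal{T}(\hat h_{\lambda_{\text{dp}}}-h_0)\|^2+2\|\mathcal{T}(\hat h_{\lambda_{\text{dp}}}-h^*_{\lambda_{\text{dp}}})\|^2$: the first term comes from the uniform bound $|L_n-L|=O(\delta_n)$ (\cref{lemma:err_l_dp}) plus the left discrepancy inequality, and the second from strong convexity and empirical optimality (\cref{lemma:str_dp}). The only difference is technical: the paper gets uniformity in $\lambda$ from a covering over $\lambda$ using the H\"older continuity of $\lambda\mapsto h^*_\lambda$ (\cref{lemma:lip_h_lam}), whereas your observation that $\hat h_\lambda,h^*_\lambda\in\mathcal{H}$ lets non-localized uniform bounds over $\mathcal{H}$ do this directly, making both that covering and your grid union bound unnecessary.
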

While it is possible to give an explicit upper bound $\lambda_{\text{dp}} \leqslant O(\delta_n^{1/2})$ using \cref{lemma:rdiv_ub} (see \cref{lemma:lowerbound} for an example) and follow the same proof techniques as in \cite{li2024regularized}, the convergence rate is not sharp because the lower bound $\delta_n^{\min\{\beta + 1,2\}}$ and upper bound $\delta_n^{1/2}$ do not match. Instead, we use an interpolation inequality (cf. \cite[Eq. (2.49)]{engl1996regularization}) to derive a sharper upper bound on $\| h^*_{\lambda_{\text{dp}}} -h_0\|$ as follows. 
\begin{align}
\| h_{0} -h_{\lambda_{\text{dp}} }^{*} \|  & \leqslant \left( \| w_{0} \| \sup _{t\in \left[ 0,\| \mathcal{T}^* \mathcal{T}\| \right]} r_{\lambda }( t)\right)^{1/( 1+\beta )} \| \mathcal{T}\left( h_{0} -h_{\lambda_{\text{dp}} }^{*}\right) \| ^{\beta /( 1+\beta )} \notag\\
 & \leqslant \| w_{0} \| ^{1/( 1+\beta )} \| \mathcal{T}\left( h_{0} -h_{\lambda_{\text{dp}} }^{*}\right) \| ^{\beta /( 1+\beta )} \leqslant O\left( \delta _{n}^{\beta /( 1+\beta )}\right), \label{eq:rdiv_biase_st}
\end{align}
where $r_\lambda(t) = \lambda/(t+\lambda)$ and we use $\| \mathcal{T}^* \mathcal{T}\| \leqslant 1$ in the second inequality. Combining (\ref{eq:rdiv_vr_st}) and (\ref{eq:rdiv_biase_st}), we obtain a bound on the strong metric. 

\begin{remark}
    As we mentioned before, it is possible to relax \cref{asp:closeness_deepiv} to allow function approximation error. In that case, the noise in (\ref{eq:discrepancy_principle}) not only includes statistical error, but should also include misspecification error. 
\end{remark}
 
\subsection{Adaptive Tikhonov Regularized Adversarial Estimator} \label{sec:trae}
The RDIV estimator, although simple, achieves only a suboptimal convergence rate. By contrast, TRAE solves the conditional moment problem by transforming it into a minimax optimization problem and thus solving the LIP implicitly. In the following, we consider general linear moment functional $m(W;f)$.  In TRAE, the empirical loss is defined as
\begin{align*} 
L_n(h) & =\max_{f\in \mathcal{F}}\mathbb{E}_{n}\left[ 2m( W;f) -2h( X) f( Z) -f( Z)^{2}\right],
\end{align*}
where the function class $\mathcal{F} \subset \tilde{\mathcal{Q}}$. Similarly, the population loss is defined as  
\begin{align} \label{eq:loss_po}
L(h) & =\max_{f\in \mathcal{F}}\mathbb{E}\left[ 2m( W;f) -2h( X) f( Z) -f( Z)^{2}\right]. 
\end{align}
The TRAE estimator is the solution to the penalized empirical risk minimization problem:
\begin{align*}
\hat{h}_{\lambda } & =\arg\min_{h\in \mathcal{H}} L_n(h) +\lambda \mathbb{E}_{n}\left[ h( X)^{2}\right] .
\end{align*} 
To ensure the population loss $L(h)$ corresponds to the mean-squared error of the conditional expectation, we rely on the following assumption.

\begin{assumption}\label{asp:closeness}
    (Closeness) For all $h \in \mathcal{H}$, $ \mathbb{E}[h_{0}(X) - h(X) \mid Z=\cdotp] \in \mathcal{F}$ and $ \frac{5}{4}\mathbb{E}[h_{0}(X) - h(X) \mid Z=\cdotp] \in \mathcal{F}$.
\end{assumption}

Notice that \cref{asp:closeness} is slightly stronger than the corresponding assumption in \cite{bennett2023source}. We require this stronger condition for technical reasons in the subsequent analysis. Under \cref{asp:closeness}, the population loss simplifies to the desired squared error:
\begin{align*}
    L(h) & = \max_{f\in \mathcal{F}}\mathbb{E}\left[ 2(r_0(Z)-h(X))f(Z) - f(Z)^{2}\right] \\
    & = \max_{f\in \mathcal{F}}\mathbb{E}\left[-(f(Z) - (r_0(Z)- \mathbb{E}[h(X)\mid Z]))^2 + (r_0(Z)-\mathbb{E}[h(X)\mid Z])^2\right] \\
    & = \mathbb{E}\left[(\mathbb{E}[h(X)\mid Z]-r_0(Z))^2\right] = \|\mathcal{T}(h-h_0)\|^2.
\end{align*}
To analyze the sample complexity of this estimator, we also need the following assumptions. 

\begin{assumption}\label{asp:mean_sq_continuity}
(Mean-squared continuity) For all $ f\in \mathcal{F}$, $ \mathbb{E}\left[ m( W;f)^{2}\right] \leqslant O\left( \| f\| ^{2}\right)$. 
\end{assumption}

\begin{assumption}\label{asp:critical_radius}
(Critical radius) Let
\begin{align*}
\mathcal{H} \cdotp \mathcal{F} =\left\{( s,z)\rightarrow h( x) f( z) :h\in \mathcal{H} ,f\in \mathcal{F}\right\} & ,\ m\circ \mathcal{F} =\left\{w\rightarrow m( w;f) :f\in \mathcal{F}\right\} .
\end{align*}
Suppose that $ m( W;f) ,h( X) ,f( Z)$ are a.s. absolutely bounded, uniformly over $ h\in \mathcal{H} ,f\in \mathcal{F}$. $ \tilde{\delta}_{n}(\xi) = \Omega \left(\sqrt{(\log\log( n) +\log( 1/\xi )) /n}\right)$ is the upper bound on the critical radius of $ \text{star}(\mathcal{H} \cdotp \mathcal{F})$, $ \text{star}( m\circ \mathcal{F})$, $ \text{star}(\mathcal{F})$ and $ \text{star}(\mathcal{H})$.
\end{assumption}
Assumption \ref{asp:critical_radius} is commonly used in the literature to obtain faster convergence rates \cite{foster2023orthogonal}. Note that $\tilde{\delta}_n(\xi)$ in \cref{asp:critical_radius} can differ from the $\tilde{\delta}_n(\xi)$ in \cref{asp:cr_deepiv}, since the relevant function classes are different. We keep the same notation because, in both settings, $\tilde{\delta}_n(\xi)$ denotes the stochastic error scale in the LIP analysis. Similar to \eqref{eq:delta_deepiv} for RDIV, we define $\delta_n$ as an upper bound on $\tilde{\delta}_n\!\left({\xi}/{\tilde{\delta}_n(\xi)^{\max\{2,4/\beta\}}}\right)$, i.e., 
\begin{align}\label{eq:delta_trae}
    \delta_n \geqslant \tilde{\delta}_n(\xi/\tilde{\delta}_n(\xi)^{\max\{2,4/\beta\}}) = \Omega \left(\sqrt{(\log( n) +\log( 1/\xi )/\beta) /n}\right), \quad \delta_n = o(1).
\end{align}

We are now ready to state our theoretical results. First, we will show that   \cref{alg:discrepancy}  with input $\delta = \Theta(\delta_n^2)$ outputs a $\lambda$ satisfying the discrepancy principle \eqref{eq:discrepancy_principle}. The choice  $\delta = \Theta(\delta_n^2)$ comes from the fact that the weak metric decreases at a rate of $O(\delta_n^2)$ if the exact $\beta$
 is known \cite{bennett2023source}. 
\begin{proposition}\label{prop:exist_lambda}
    Suppose that Assumptions \ref{asp:range},\ref{asp:source_condition}, \ref{asp:closeness_deepiv}, \ref{asp:closeness}, \ref{asp:mean_sq_continuity}, and \ref{asp:critical_radius} hold. Then there exist sufficiently large constants $ c_d, N>0 $ such that for $n>N$, the output of \cref{alg:discrepancy}---given inputs $\lambda_0 = 2$ and $\delta = c_d \delta^2_n$, where $\delta_n$ satisfies \eqref{eq:delta_trae}---fulfills the discrepancy principle (\ref{eq:discrepancy_principle}), and the algorithm terminates in at most $O(\log n)$ iterations with probability at least $1 - \xi$.  
\end{proposition}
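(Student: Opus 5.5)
The proof follows the same template as \cref{prop:exist_lambda_deepiv}, with the RDIV concentration replaced by localized uniform concentration for the TRAE minimax loss. The plan has three steps: (i) uniform concentration over a geometric grid of $\lambda$'s; (ii) showing the algorithm does not stop at $\lambda_0=2$ and does stop at some $\lambda\geqslant \Omega(\mathrm{poly}(\delta_n))$; (iii) deducing the two-sided discrepancy condition and the iteration count. The grid visited by \cref{alg:discrepancy} is $\lambda_k=2\rho^k$. We take $\rho\geqslant 1/2$ so that consecutive iterates satisfy $\lambda_{k-1}\in[\lambda_k,2\lambda_k]$. We only need to control the grid up to $K=O(\log n)$. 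We apply the localized bounds of \cite{foster2023orthogonal} to $\text{star}(\mathcal{H}\cdot\mathcal{F})$, $\text{star}(m\circ\mathcal{F})$, $\text{star}(\mathcal{F})$ and $\text{star}(\mathcal{H})$ at confidence level $\xi/K$ and take a union bound. Since $K\lesssim \log(1/\delta_n)\cdot\max\{2,4/\beta\}$, this is exactly why $\delta_n$ is defined through $\tilde\delta_n(\xi/\tilde\delta_n(\xi)^{\max\{2,4/\beta\}})$ in \eqref{eq:delta_trae}. The outcome, with probability $1-\xi$, is
\begin{align*}
|L_n(h)-L(h)|\leqslant C\big(\delta_n^2+\delta_n\|\mathcal{T}(h-h_0)\|\big)\quad\text{and}\quad \big|\mathbb{E}_n[h^2]-\|h\|^2\big|\leqslant C(\delta_n^2+\delta_n\|h\|)
\end{align*}
for all $h\in\mathcal{H}$. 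Under \cref{asp:closeness}, $L(h)=\|\mathcal{T}(h-h_0)\|^2$. The $5/4$-scaling in that assumption is what lets the inner empirical maximizer be compared with the population one, giving the localized form above, and by AM--GM $L_n(h)\asymp L(h)\pm O(\delta_n^2)$.

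\textbf{Termination from above.} We need $L_n(\hat h_{\lambda})\leqslant c_d\delta_n^2$ for some $\lambda\geqslant \lambda_{\min}:=c\,\delta_n^{\max\{2,4/\beta\}}$-type threshold, which bounds the number of iterations by $O(\log(1/\delta_n))=O(\log n)$. We compare $\hat h_\lambda$ with $h^*_\lambda\in\mathcal{H}$ (\cref{asp:closeness_deepiv}) using the basic inequality $L_n(\hat h_\lambda)+\lambda\mathbb{E}_n\hat h_\lambda^2\leqslant L_n(h^*_\lambda)+\lambda\mathbb{E}_n (h^*_\lambda)^2$. Combining this with concentration and the spectral bound $\|\mathcal{T}(h^*_\lambda-h_0)\|^2\leqslant \|w_0\|^2\lambda^{\min\{\beta+1,2\}}$ (source condition, Tikhonov filter) gives, analogously to \cref{lemma:rdiv_lb},
\begin{align*}
L_n(\hat h_\lambda)\leqslant C_1\delta_n^2+C_1\|w_0\|^2\lambda^{\min\{\beta+1,2\}}+\lambda\big(\|h^*_\lambda\|^2-\|\hat h_\lambda\|^2\big)_+ .
\end{align*}
We then need $\lambda(\|h^*_\lambda\|^2-\|\hat h_\lambda\|^2)\lesssim \delta_n^2+\lambda^{\min\{\beta+1,2\}}$. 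This follows from the strong convexity of the population objective around $h^*_\lambda$: the cross term $\lambda\langle h^*_\lambda,\hat h_\lambda-h^*_\lambda\rangle$ equals $-\langle \mathcal{T}(h^*_\lambda-h_0),\mathcal{T}(\hat h_\lambda-h^*_\lambda)\rangle$ by the first-order condition, and is absorbed by AM--GM. Taking $c_d>2C_1$ and $\lambda$ small enough that $C_1\|w_0\|^2\lambda^{\min\{\beta+1,2\}}\leqslant \delta_n^2$, i.e.\ $\lambda\gtrsim\delta_n^{2/\min\{\beta+1,2\}}\geqslant\delta_n^{\max\{2,4/\beta\}}$, the stopping condition holds. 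So the loop stops after $O(\log n)$ iterations.

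\textbf{Not stopping at $\lambda_0$, and the two-sided condition.} For the right inequality in \eqref{eq:discrepancy_principle} at the stopping index $k\geqslant1$, the previous iterate $\lambda'=\lambda_{k-1}\in[\lambda_k,2\lambda_k]$ failed the test, so $L_n(\hat h_{\lambda'})>\delta$ automatically. It remains to rule out stopping at $k=0$. With $\lambda_0=2$ and $\|\mathcal{T}\|\leqslant1$, the regularized solution is shrunk: $\|\mathcal{T}(h^*_2-h_0)\|\geqslant \tfrac{2}{3}\|\mathcal{T}h_0\|$. We need $L_n(\hat h_2)\gtrsim\|\mathcal{T}h_0\|^2-O(\delta_n^2)>c_d\delta_n^2$ for $n>N$, which holds when $r_0\neq0$ since $\delta_n=o(1)$. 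If $r_0=0$, then $h_0=0$ and any output is trivially fine, matching case 1 of \cref{def:discrepancy}. A lower bound on $L_n(\hat h_2)$ requires showing $\hat h_2$ is close to $h^*_2$; at $\lambda=2$ the problem is well conditioned, so $\|\hat h_2-h^*_2\|^2\lesssim\delta_n^2$ by strong convexity.

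\textbf{Main obstacle.} I expect the main difficulty to be the uniform concentration for the \emph{adversarial} loss $L_n(h)$: it is a supremum over $f$, so deviations must be controlled through the empirical maximizer $\hat f_h$ and related to $\|\mathcal{T}(h-h_0)\|$ in a localized way, uniformly over $h$ and over the $O(\log n)$ grid points. I would first adapt the argument of \cite{bennett2023source} for fixed $\lambda$, using the $5/4$-closeness to bound $\mathbb{E}[(\hat f_h-\mathcal{T}(h_0-h))^2]$, and then union-bound over the grid.
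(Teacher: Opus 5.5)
Your proposal is correct and has the same three-step structure as the paper's proof. First, $L_n(\hat h_\lambda)=O(\delta_n^2)$ once $\lambda\lesssim\delta_n^{2/\min\{\beta+1,2\}}$; the paper gets this from \cref{lemma:concentrate_l} and \cref{lemma:uniform_wk_mtr_trae}, and you re-derive it via the basic inequality and the first-order condition for $h^*_\lambda$. Second, $L_n(\hat h_2)\geqslant\Omega(1)$, from a lower bound on $\|\mathcal{T}(h^*_2-h_0)\|$ plus closeness of $\hat h_2$ to $h^*_2$. Third, you take the first crossing index on the grid $2\rho^k$.

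Your variations are all sound. You union-bound over the $O(\log n)$ deterministic grid points instead of using the paper's covering of $(0,2]$ via \cref{lemma:lip_h_lam}. At level $\xi/K$ this costs only $\log K=O(\log\log n)$, so it is the paper's polynomial-size covering, not your grid, that dictates the form of \eqref{eq:delta_trae}. You use the direct bound $\|\mathcal{T}(h^*_2-h_0)\|\geqslant\tfrac{2}{3}\|r_0\|$ in place of \cref{lemma:lowerbound}, and you make $\rho\geqslant1/2$ and $r_0\neq0$ explicit, which the paper leaves implicit. One overstatement: plain strong convexity at $\lambda=2$ gives $\|\hat h_2-h^*_2\|^2=O(\delta_n)$, not $O(\delta_n^2)$, but $O(\delta_n)$ is still enough for the argument.
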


The constants $c_d$ and $N$ in \cref{prop:exist_lambda} may differ from those in \cref{prop:exist_lambda_deepiv}. Next, we state the analog of \cref{thm:deepiv_rate} for the adaptive TRAE sample complexity.

\begin{theorem}\label{thm:tikhonv_rate}
Under the assumptions of \cref{prop:exist_lambda}, suppose that the regularization parameter $ {\lambda_{\text{dp}}} $ is chosen by \cref{alg:discrepancy} with a sufficiently large constant $ c_{d}  > 0$, then with probability at least $ 1-\xi $, 
\begin{align*}
\| \hat{h}_{{\lambda_{\text{dp}}} } -h_{0} \| ^{2} \leqslant O(\delta _{n}^{\frac{2\min\{\beta ,1\}}{1+\min\{\beta ,1\}}}) & ,\quad  \| \mathcal{T}(\hat{h}_{{\lambda_{\text{dp}}} } -h_{0}) \|^2 \leqslant O(\delta_n^2),
\end{align*}
where $\delta_n$ is defined in (\ref{eq:delta_trae}).
\end{theorem}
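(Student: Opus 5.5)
The proof follows the RDIV template of \cref{thm:deepiv_rate}, with the noise scale $\delta_n$ replaced by $\delta_n^2$ throughout. We split
\[
\hat h_{\lambda_{\text{dp}}}-h_0=(\hat h_{\lambda_{\text{dp}}}-h^*_{\lambda_{\text{dp}}})+(h^*_{\lambda_{\text{dp}}}-h_0)
\]
and bound the variance and bias terms separately. All statements are placed on a single high-probability event on which the localized concentration bounds (cf.\ \cite[Lemma 14]{foster2023orthogonal}) hold uniformly over a geometric grid of $\lambda\in(0,2]$ with $O(\log n)$ points. A union bound over the grid costs a factor $\tilde\delta_n(\xi)^{-\max\{2,4/\beta\}}$ in the confidence level, which is exactly why $\delta_n$ is defined through \eqref{eq:delta_trae}. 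On this event we need two facts, uniformly in $h\in\mathcal H$ and $\lambda$:
\[
|L_n(h)-L(h)|\le O\bigl(\delta_n^2+\delta_n\|\mathcal T(h-h_0)\|\bigr),
\]
and the analogous bound for $\mathbb E_n[h^2]-\|h\|^2$. The first uses \cref{asp:closeness} together with the critical radii of $\text{star}(\mathcal H\cdot\mathcal F)$, $\text{star}(m\circ\mathcal F)$ and $\text{star}(\mathcal F)$. The factor $5/4$ in \cref{asp:closeness} is used here to control the maximizing $f$ on the empirical side, as in \cite{bennett2023source}.

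\textbf{Step 1 (weak metric and lower bound on $\lambda_{\text{dp}}$).} This is the TRAE analog of \cref{lemma:rdiv_lb}. We compare the empirical objective at $\hat h_\lambda$ and at $h^*_\lambda$ and use the source condition bound
\[
\|\mathcal T(h^*_\lambda-h_0)\|^2+\lambda\|h^*_\lambda\|^2-\lambda\|h_0\|^2\le O\bigl(\|w_0\|^2\lambda^{\min\{\beta+1,2\}}\bigr).
\]
This gives $L(\hat h_\lambda)\le O(\delta_n^2+\lambda^{\min\{\beta+1,2\}})$ uniformly in $\lambda$. The right-hand DP inequality $c_d\delta_n^2\le L_n(\hat h_{\lambda'})$ with $\lambda'\le 2\lambda_{\text{dp}}$, combined with the concentration bound and $c_d$ large, then forces
\[
\lambda_{\text{dp}}\ge\Omega\bigl(\delta_n^{2/\min\{\beta+1,2\}}\bigr).
\]
The left-hand DP inequality $L_n(\hat h_{\lambda_{\text{dp}}})\le c_d\delta_n^2$ plus concentration gives the weak-metric claim $\|\mathcal T(\hat h_{\lambda_{\text{dp}}}-h_0)\|^2=L(\hat h_{\lambda_{\text{dp}}})=O(\delta_n^2)$, solving the quadratic inequality in $\|\mathcal T(h-h_0)\|$.

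\textbf{Step 2 (variance term).} We use strong convexity of $h\mapsto L(h)+\lambda\|h\|^2$, with modulus $\lambda$ in $\|\cdot\|^2$ and additionally $\|\mathcal T\cdot\|^2$. The first-order optimality of $h^*_\lambda$ and the empirical optimality of $\hat h_\lambda$ give
\[
\lambda\|\hat h_\lambda-h^*_\lambda\|^2+\|\mathcal T(\hat h_\lambda-h^*_\lambda)\|^2\le O\bigl(\delta_n^2+\delta_n\|\mathcal T(\hat h_\lambda-h^*_\lambda)\|+\delta_n\sqrt\lambda\,\|\hat h_\lambda-h^*_\lambda\|\bigr).
\]
Absorbing the cross terms yields $\|\hat h_{\lambda_{\text{dp}}}-h^*_{\lambda_{\text{dp}}}\|^2\le O(\delta_n^2/\lambda_{\text{dp}})$. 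With the lower bound from Step 1 this becomes $O(\delta_n^{2\min\{\beta,1\}/\min\{\beta+1,2\}})$, which is at most the target rate $\delta_n^{2\min\{\beta,1\}/(1+\min\{\beta,1\})}$ whenever $\beta\le1$ (then $\min\{\beta+1,2\}=1+\min\{\beta,1\}$). For $\beta>1$ the exponent is $1$, which beats the target exponent $2/2=1$ only with equality, so it is fine.

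\textbf{Step 3 (bias term).} This is the analog of \cref{lemma:rdiv_ub}. By the triangle inequality,
\[
\|\mathcal T(h^*_{\lambda_{\text{dp}}}-h_0)\|\le\|\mathcal T(\hat h_{\lambda_{\text{dp}}}-h_0)\|+\|\mathcal T(\hat h_{\lambda_{\text{dp}}}-h^*_{\lambda_{\text{dp}}})\|=O(\delta_n),
\]
using Step 1 and the $\|\mathcal T\cdot\|$ part of the Step 2 inequality. Then the interpolation inequality \eqref{eq:rdiv_biase_st}, with $\sup_t r_\lambda(t)\le1$ and $\beta$ replaced by $\min\{\beta,1\}$ (Tikhonov saturation, since $h_0$ also satisfies a $\min\{\beta,1\}$-source condition), gives
\[
\|h^*_{\lambda_{\text{dp}}}-h_0\|^2\le O\bigl(\delta_n^{2\min\{\beta,1\}/(1+\min\{\beta,1\})}\bigr).
\]

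The main obstacle is Step 2 and the uniform concentration behind it. Because $L_n$ is itself a supremum over $f$, the variance argument must be run through the minimax structure: linearize around the population maximizer $f_h=\mathbb E[h_0-h\mid Z]$, and use \cref{asp:closeness} with the $5/4$ scaling to show that the empirical maximizer is within $O(\delta_n)$ of it in $L_2$. This must hold uniformly over the data-dependent $\lambda_{\text{dp}}$, which is why we discretize $\lambda$ and pay the union bound built into \eqref{eq:delta_trae}.
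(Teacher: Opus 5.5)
Your outline matches the paper's proof. You split around $h^*_{\lambda_{\text{dp}}}$ and get $\lambda_{\text{dp}}\ge\Omega(\delta_n^{2/\min\{\beta+1,2\}})$ from the right-hand DP inequality plus the uniform bound of \cref{lemma:uniform_wk_mtr_trae}. You then show $\|\mathcal T(h^*_{\lambda_{\text{dp}}}-h_0)\|=O(\delta_n)$ and transfer to the strong norm by interpolation.

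\textbf{Step 2 does not go through as written.} You assert the display for generic $\lambda$ and justify it by the optimality of $h^*_\lambda$ and $\hat h_\lambda$ alone. But the fluctuation $(L-L_n)(\hat h_\lambda)-(L-L_n)(h^*_\lambda)$ is not localized in $\|\mathcal T(\hat h_\lambda-h^*_\lambda)\|$. Applying your own fact $|L_n(h)-L(h)|\le O(\delta_n^2+\delta_n\|\mathcal T(h-h_0)\|)$ at both points yields $\delta_n\|\mathcal T(\hat h_\lambda-h_0)\|+\delta_n\|\mathcal T(h^*_\lambda-h_0)\|$. The linearization in your last paragraph does not remove this. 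The adversary at $\hat h_\lambda$ is $f_{\hat h_\lambda}=\mathcal T(h_0-\hat h_\lambda)$, so the product term $(\mathbb E-\mathbb E_n)[(\hat h_\lambda-h^*_\lambda)(X)f_{\hat h_\lambda}(Z)]$ is controlled only through $\|\mathcal T(\hat h_\lambda-h_0)\|$. The alternative is to control it through $\|\hat h_\lambda-h^*_\lambda\|$, but after absorption that gives only $\delta_n^2/\lambda^2$. For generic $\lambda$ the extra term is of bias size, $O(\delta_n^2+\delta_n\lambda^{\min\{\beta+1,2\}/2})$, and it involves $h_0$, so it cannot be absorbed into the left-hand side. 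This is why \cref{lemma:strong_convexity} keeps $L_n(\hat h_\lambda)+\|\mathcal T(h^*_\lambda-h_0)\|^2/4$ on its right-hand side.

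\textbf{The fix is already in your Step 1, but only at $\lambda_{\text{dp}}$.} The left-hand DP inequality gives $\|\mathcal T(\hat h_{\lambda_{\text{dp}}}-h_0)\|=O(\delta_n)$, so the first extra term is $O(\delta_n^2)$. The triangle inequality then gives $\delta_n\|\mathcal T(h^*_{\lambda_{\text{dp}}}-h_0)\|\le\delta_n\|\mathcal T(\hat h_{\lambda_{\text{dp}}}-h^*_{\lambda_{\text{dp}}})\|+O(\delta_n^2)$, which recovers your display at $\lambda=\lambda_{\text{dp}}$. So Step 2 should be stated only at the selected parameter and placed after the weak-metric conclusion of Step 1. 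This is where the left DP inequality does its real work.

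\textbf{Step 1 has a related problem.} The source-condition display there is vacuous: its left side equals $-\|\mathcal T(h^*_\lambda-h_0)\|^2-\lambda\|h^*_\lambda-h_0\|^2\le 0$. What Step 1 needs is $\|\mathcal T(h^*_\lambda-h_0)\|^2\le O(\|w_0\|^2\lambda^{\min\{\beta+1,2\}})$, combined with the bias-inclusive strong-convexity inequality, as in \cref{lemma:uniform_wk_mtr_trae}.

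\textbf{Comparison with the paper once repaired.} Your route is then sound and slightly leaner than the paper's. Because you absorb $\delta_n\sqrt\lambda\|\hat h-h^*\|$ by AM--GM, you never need $\lambda_{\text{dp}}\le O(\delta_n)$. The paper obtains that bound from \cref{lemma:lowerbound}, which requires $h_0\neq0$, and combines it with boundedness of $\mathcal H$.

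\textbf{A bookkeeping error.} A union bound over an $O(\log n)$-point geometric grid only replaces $\log(1/\xi)$ by $\log(1/\xi)+O(\log\log n)$. It does not cost the factor $\tilde\delta_n^{-\max\{2,4/\beta\}}$; that factor comes from the paper's arithmetic $\tilde\delta_n^{2/\gamma}$-covering of $(0,2]$. Since $\lambda_{\text{dp}}$ and $\lambda'_{\text{dp}}$ lie on the algorithm's grid, your cheaper union bound suffices.
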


Note that \cref{thm:tikhonv_rate} achieves a faster convergence rate than \cref{thm:deepiv_rate}. Specifically, the rate $O(\delta_{n}^{\frac{2\min\{\beta,1\}}{1+\min\{\beta,1\}}})$ matches the optimal rate attainable when the smoothness parameter $\beta$ is known \cite{bennett2023source}. The proof of \cref{thm:tikhonv_rate} is more technically involved because TRAE implicitly estimates the operator $\mathcal{T}$ through an inner maximization problem. This requires controlling the estimation error via localized concentration inequalities and carefully linking the discrepancy-principle conditions to both weak and strong metric bounds.

\subsection{Application for Double Robust Estimation}

Using the same method as in Section \ref{sec:trae}, we can solve the dual problem:
\begin{align*}
\theta   =\mathbb{E}[ m( W;q_{0})] ,\quad\Pi _{\tilde{\mathcal{H}}}\mathbb{E}[ q_{0}( Z) \mid X=x]  =a_{0}( X) ,
\end{align*}
where $a_{0}$ is the Riesz representer of the linear functional $h( X) \mapsto \mathbb{E}\left[\tilde{m}( W;h)\right]$ and $\Pi _{\tilde{\mathcal{H}}}$ is the projection operator. We similarly define the empirical loss
\begin{align*}
L_{n}^{\text{dual}}( q) & =\max_{s\in \mathcal{S}}\mathbb{E}_{n}\left[ 2\tilde{m}( W;s) -2q(Z) s(X) -s( X)^{2}\right]
\end{align*}
and the adversarial Tikhonov regularization estimator,
\begin{align*}
\hat{q} & =\arg\min_{q\in \mathcal{Q}} L_{n}^{\text{dual}}( q) +\lambda_{\text{dp}}^{\text{dual}} \| q\| ^{2} ,
\end{align*}
where $\lambda_{\text{dp}}^{\text{dual}}$ is selected using the discrepancy principle. Given estimates of $h_{0}$ and $q_{0}$ from the adaptive Tikhonov method, we define the DR estimator of $\theta$ as
\begin{align*}
\hat{\theta }( h,q) & =\mathbb{E}_{n}\left[\tilde{m}( W;h) +m( W;q) -h( X) q( Z)\right] .
\end{align*}
The following corollary establishes the asymptotic normality of our estimator. 
\begin{corollary} \label{cor:normality}
     Suppose that the assumptions of \cref{thm:tikhonv_rate} hold for both $h_{0} ,q_{0}$ with (potentially different) source conditions $\beta _{h} ,\beta _{q}  >0$, and that $\tilde{m}$ satisfies the mean-squared-continuity property, i.e.,
\begin{align*}
\mathbb{E}\left[\left(\tilde{m}( W;h_{1}) -\tilde{m}( W;h_{2})\right)^{2}\right] & =O\left( \| h_{1} -h_{2} \| ^{\gamma_0 }\right)
\end{align*}
for some $\gamma_0  >0$. Also let $\beta _{m} =\max\{\beta _{h} ,\beta _{q}\}$ and assume that $\delta _{n}$ satisfies
\begin{align*}
\delta _{n} & =o\left( n^{-\alpha }\right) ,\alpha :=\frac{1+\min\{\beta _{m} ,1\}}{2+4\min\{\beta _{m} ,1\}} .
\end{align*}
Let $\hat{h}, \hat{q}$ be the adaptive adversarial Tikhonov estimators of the minimum-norm solutions $h_0, q_0$ using separate samples. Then,
\begin{align*}
\sqrt{n}(\hat{\theta }(\hat{h},\hat{q}) -\theta _{0}) & =\frac{1}{\sqrt{n}}\sum _{i=1}^{n} \rho _{0}( W) +o_{p}( 1) ,
\end{align*}where $\rho _{0}( W) =\tilde{m}( W;h_{0}) +m( W;q_{0}) -q_{0}( Z) h_{0}( X) -\theta _{0}$.
\end{corollary}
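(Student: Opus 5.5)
The argument is the standard doubly robust decomposition. Write $\hat\theta(\hat h,\hat q)-\theta_0$ as an empirical-process term plus a second-order bias term and show both are $o_p(n^{-1/2})$. Using linearity of $\tilde m$ and $m$, and the Riesz identities $\mathbb{E}[\tilde m(W;h)]=\mathbb{E}[a_0(X)h(X)]=\mathbb{E}[q_0(Z)(\mathcal{T}h)(Z)]$ and $\mathbb{E}[m(W;q)]=\mathbb{E}[r_0(Z)q(Z)]=\mathbb{E}[h_0(X)q(Z)]$ for $q\in\tilde{\mathcal Q}$, expand
\begin{align*}
\hat\theta(\hat h,\hat q)-\theta_0 = (\mathbb{E}_n-\mathbb{E})[\rho_0(W)] + (\mathbb{E}_n-\mathbb{E})[\rho(W;\hat h,\hat q)-\rho_0(W)] + \mathbb{E}[\rho(W;\hat h,\hat q)]-\theta_0 ,
\end{align*}
where $\rho(W;h,q)=\tilde m(W;h)+m(W;q)-h(X)q(Z)$. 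A direct computation gives the bias term as
\begin{align*}
\mathbb{E}[\rho(W;\hat h,\hat q)]-\theta_0=-\mathbb{E}[(\hat h-h_0)(X)(\hat q-q_0)(Z)]=-\langle \mathcal{T}(\hat h-h_0),\hat q-q_0\rangle=-\langle \hat h-h_0,\mathcal{T}^*(\hat q-q_0)\rangle .
\end{align*}

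\textbf{Second-order term.} Apply Cauchy--Schwarz in two ways:
\[
|\text{bias}|\le \min\bigl\{\|\mathcal{T}(\hat h-h_0)\|\,\|\hat q-q_0\|,\ \|\hat h-h_0\|\,\|\mathcal{T}^*(\hat q-q_0)\|\bigr\}.
\]
By \cref{thm:tikhonv_rate} applied to the primal and dual problems, both weak errors are $O(\delta_n)$. The strong errors are $O(\delta_n^{\min\{\beta_h,1\}/(1+\min\{\beta_h,1\})})$ and the analogous $\beta_q$ rate, respectively. Each bound holds with probability $1-\xi$ for arbitrary $\xi$, so both are $O_p$. Choose whichever product uses the better-conditioned problem, i.e.\ the one with $\beta_m$. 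This yields $|\text{bias}|=O_p(\delta_n^{1+\min\{\beta_m,1\}/(1+\min\{\beta_m,1\})})=O_p(\delta_n^{(1+2b)/(1+b)})$ with $b=\min\{\beta_m,1\}$. Under $\delta_n=o(n^{-\alpha})$ with $\alpha=(1+b)/(2+4b)$, this is $o(n^{-1/2})$, which is exactly why $\alpha$ has this form. Adaptivity is automatic: the DP-tuned estimators attain both rates simultaneously, so no choice needs to be made at estimation time.

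\textbf{Empirical-process term.} Because $\hat h,\hat q$ come from samples independent of the one used in $\mathbb{E}_n$, condition on them. The term is then a centered i.i.d.\ mean, and Chebyshev gives $O_p(n^{-1/2}\|\rho(\cdot;\hat h,\hat q)-\rho_0\|_{L_2})$. Split the difference as
\[
[\tilde m(W;\hat h)-\tilde m(W;h_0)]+[m(W;\hat q)-m(W;q_0)]-[\hat h\hat q-h_0q_0].
\]
The first bracket has $L_2$ norm $O(\|\hat h-h_0\|^{\gamma_0/2})$ by the assumed continuity. The second is $O(\|\hat q-q_0\|)$ by \cref{asp:mean_sq_continuity} for the dual problem, applied after extending to differences via linearity. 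For the third, write $\hat h\hat q-h_0q_0=(\hat h-h_0)\hat q+h_0(\hat q-q_0)$ and use a.s.\ boundedness from \cref{asp:critical_radius} to get $O(\|\hat h-h_0\|+\|\hat q-q_0\|)$. All three are $o_p(1)$ by the strong-metric parts of \cref{thm:tikhonv_rate} for both problems, since $\delta_n=o(1)$ and $\beta_h,\beta_q>0$. Hence this term is $o_p(n^{-1/2})$.

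Combining the two terms gives the stated linear representation. I expect the main obstacle to be justifying the bias identity rigorously when $\hat h\in\mathcal H$ and $\hat q\in\mathcal Q$ lie only in $\tilde{\mathcal H},\tilde{\mathcal Q}$ and the operators include projections. Specifically, one must check that $\mathbb{E}[\hat h(X)q(Z)]=\langle\mathcal{T}\hat h,q\rangle$ for $q\in\tilde{\mathcal Q}$, and that $\mathcal{T}^*q_0=a_0$ and $\mathcal{T}h_0=r_0$ hold as identities rather than only at the minimum-norm level. \Cref{asp:range} for both problems should suffice for this. A secondary point is that the DP bounds hold with probability $1-\xi$ for each fixed $\xi$, so $\xi$ must be sent to $0$ slowly to convert them into $O_p$ statements.
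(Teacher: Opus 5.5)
Your argument is essentially the paper's: the paper invokes Corollary 2 of \cite{bennett2023source} and verifies exactly your two ingredients---strong-metric consistency of $\hat h,\hat q$ from \cref{thm:tikhonv_rate}, and $\sqrt{n}\,\mathbb{E}[(\hat q-q_0)(\hat h-h_0)]=o_p(1)$ via Cauchy--Schwarz pairing the strong rate of the better-posed problem with the $O(\delta_n)$ weak rate of the other, giving the same exponent $(1+2b)/(1+b)$ with $b=\min\{\beta_m,1\}$---while you additionally re-derive the doubly robust decomposition and the empirical-process step that the cited corollary packages. One bookkeeping correction in that extra step: $\|m(W;\hat q)-m(W;q_0)\|_{L_2}=O(\|\hat q-q_0\|)$ does not follow from the dual version of \cref{asp:mean_sq_continuity}, which concerns $\tilde m$ on the dual critic class $\mathcal S$, and the primal version only covers $f\in\mathcal F$; you therefore need mean-squared continuity of $m$ on $\mathcal Q-q_0$ as an explicit assumption (true, e.g., for $m(W;q)=Yq(Z)$ with bounded $Y$), and the paper's proof likewise leaves this to the cited corollary.
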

Note that this corollary does not assume we know the smoothness parameters $\beta _{h} ,\beta _{q}$. The algorithm automatically adjusts to the smoothness of the primal and dual problems and achieves the convergence rate of the more well-posed problem. If $\beta _{m}  >1$, it is sufficient to have $\delta _{n} =n^{-1/3}$, which allows some nonparametric classes. When $\beta _{m}$ is close to $0$, meaning that both the primal and dual problems are extremely ill-posed, we need $\delta _{n} =n^{-1/2}$, which implies a parametric class.

\section{Experiments}
\label{sec:experiments}

\paragraph{Experiment design.}
We focus exclusively on the proxy negative–control example used in Regularized DeepIV \citep{li2024regularized}, which itself adapts the deep proxy setup of \citet{cui2024semiparametric,kallus2021causal}.  The data-generating process produces i.i.d.\ samples
\[
(S,Q,W,U,A,Y),
\]
where $S \in \mathbb{R}^{d_s}$ and $Q \in \mathbb{R}^{d_q}$ are proxy covariates, $W \in \mathbb{R}^{d_w}$ is an outcome proxy, $U \in \mathbb{R}^{d_w}$ is an unobserved confounder, $A \in \{0,1\}$ is a binary treatment, and $Y \in \mathbb{R}$ is the outcome.  Following \citet{li2024regularized}, we set $d_s = d_q = 15$ and $d_w = 1$.  First, $S' \sim \mathcal{N}(0, 0.5 I_{d_s})$, and the treatment is drawn from a logistic model
\[
A \mid S' \sim \mathrm{Bernoulli}\!\left( \sigma\!\big(0.125 - 0.125\, \mathbf{1}^\top S'\big) \right),
\]
where $\sigma$ is the logistic link.  The latent confounder $U$ and the intermediate proxies $Q',W'$ are then generated as linear functions of $(S',A)$ plus Gaussian noise, with loadings chosen so that $W'$ is a proxy for $U$ and $Q'$ is a proxy for $(A,U)$ given $S'$; see the Appendix for the exact matrices.  The outcome is
\[
Y
= A
+ \mathbf{1}^\top S'
+ \mathbf{1}^\top U
+ \mathbf{1}^\top W'
+ \varepsilon,
\qquad
\varepsilon \sim \mathcal{N}(0,1).
\]
 The observed proxies are obtained by a componentwise nonlinear transformation $g$ applied to the latent variables. Following \cite{li2024regularized}, we transform $ (S',W',Q') $ into $(S,W,Q)$ by $ S= g(S'), W=g(W'), Q = g(Q')$, where $g(x) = x^{1/3}$. It is well known that there exists a bridge function $h_0$ such that 
\begin{align*}
    \mathbb{E}[Y - h_0(W,A,S) \mid Q, A, S] = 0. 
\end{align*}
We evaluate the estimation of the average treatment effect
\[
\tau^\ast = \mathbb{E}[Y(1)] - \mathbb{E}[Y(0)].
\]

In this problem, we have $X = (A,W,S)$, $Z = (A,Q,S)$, and $r_0(z) = \mathbb{E}[Y\mid Z=z]$. For each sample size $n \in \{1000, 2000, 3000, 5000\}$, we generate $n$ observations, split them into equally sized sets of size $n_{1} = n_{2} = n/2$, fit the estimators on the first set, and evaluate the plug-in estimate of the target functional on the second set. The main performance metric is the absolute error between the estimate and the analytic ground truth returned by the generator. For each $(n,\lambda,\text{method})$ configuration, we repeat the experiment $50$ times and report aggregated statistics (mean and standard error) across repetitions.

\paragraph{Results} \cref{fig:deepiv} reports the mean absolute error of the DeepIV estimators across different choices of the regularization parameter. In \cref{fig:deepiv}, the adaptive procedure consistently outperforms the fixed regularization levels ${0, 0.01, 0.1}$ once the sample size exceeds $3000$, achieving the lowest MSE among all configurations. For TRAE (\cref{fig:primal_trae} and \cref{fig:dr_trae}), \cref{fig:primal_trae} reports the MSE of the primal estimator. The adaptive method attains performance comparable to the best fixed regularization choice across all sample sizes, demonstrating that adaptivity does not compromise accuracy in this setting. In particular, in \cref{fig:primal_trae}, the MSE increases with sample size when a fixed regularization parameter is used, whereas the MSE of the adaptive estimator continues to decrease because the method automatically adapts to noise. We also observe that the DR method (\cref{fig:dr_trae}) is less sensitive to the choice of regularization parameter. 

\begin{figure}[!htp]
    \centering
    \includegraphics[width=0.5\linewidth]{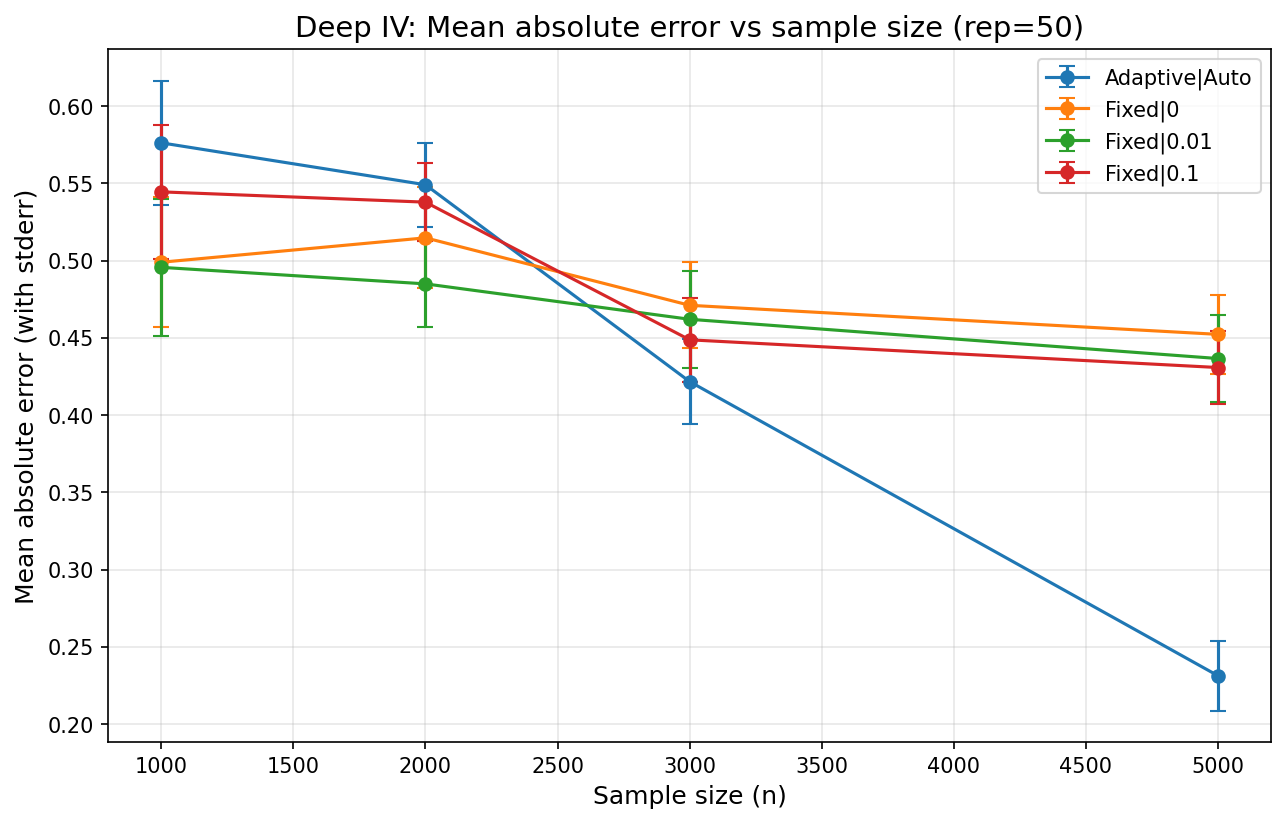}
    \caption{Mean absolute error of the DeepIV estimator using different regularization parameters. Each experiment is repeated 50 times. }
    \label{fig:deepiv}
\end{figure}

\begin{figure}[!htp]
\centering
\centering
\begin{minipage}{0.48\textwidth}
    \centering
    \includegraphics[width=\linewidth]{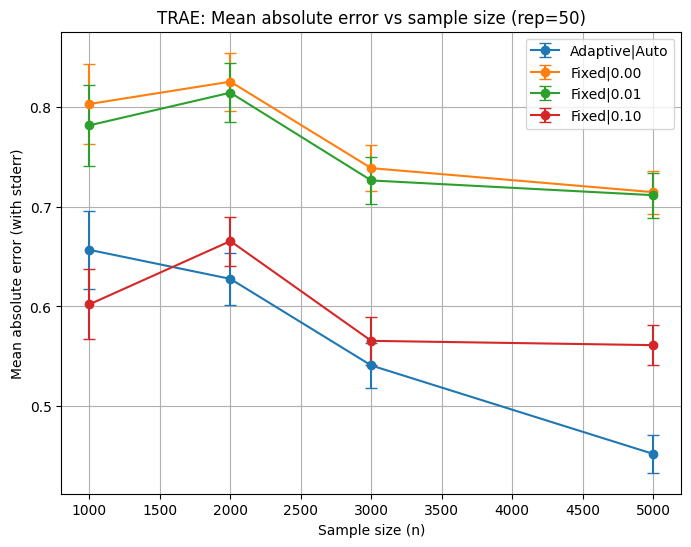}
    \caption{(a) Mean absolute error of the TRAE primal estimator using different regularization parameters.}
    \label{fig:primal_trae}
\end{minipage}
\hfill
\begin{minipage}{0.48\textwidth}
    \centering
    \includegraphics[width=\linewidth]{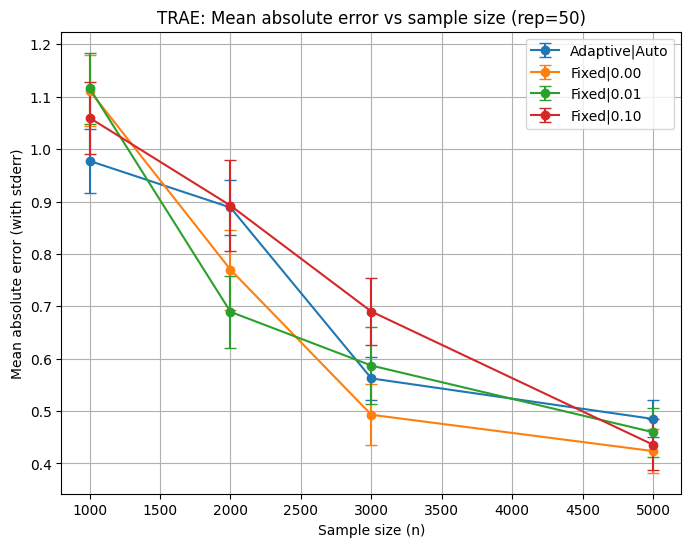}
    \caption{(b) Mean absolute error of the TRAE double robust estimator using different regularization parameters.}
    \label{fig:dr_trae}
\end{minipage}
\end{figure}

\section{Conclusion}

This paper develops a unified, discrepancy-principle–based framework for adaptive hyperparameter selection in ill-posed conditional moment problems. Classical approaches to nonparametric IV estimation critically on a priori knowledge of smoothness parameters—such as $\beta$-source conditions—to tune regularization and achieve optimal rates. In contrast, our method provides a fully data-driven mechanism that selects the regularization level to balance bias and variance without having the exact smoothness parameter.

We demonstrate that this principle applies broadly across modern estimators. For both RDIV and TRAE, our adaptive rule matches the optimal strong- and weak-metric rates previously available only with oracle tuning. Finally, building on these results, we construct an adaptive doubly robust estimator for linear functionals that automatically adjusts to the degree of well-posedness of the primal and dual inverse problems, attaining the rate of the more favorable problem without knowing which one is better conditioned. Our empirical study on the proxy negative-control example confirms that the discrepancy principle yields stable and competitive performance across sample sizes. More broadly, our results highlight that classical ideas from inverse problems can be successfully extended to modern machine-learning-based estimators, offering principled, computationally efficient, and theoretically grounded methods for adaptive regularization.

\bibliography{reference.bib}

@article{bennett2023source,
  title={Source condition double robust inference on functionals of inverse problems},
  author={Bennett, Andrew and Kallus, Nathan and Mao, Xiaojie and Newey, Whitney and Syrgkanis, Vasilis and Uehara, Masatoshi},
  journal={arXiv preprint arXiv:2307.13793},
  year={2023}
}

@article{foster2023orthogonal,
  title={Orthogonal statistical learning},
  author={Foster, Dylan J and Syrgkanis, Vasilis},
  journal={The Annals of Statistics},
  volume={51},
  number={3},
  pages={879--908},
  year={2023},
  publisher={Institute of Mathematical Statistics}
}

@book{engl1996regularization,
  title={Regularization of inverse problems},
  author={Engl, Heinz Werner and Hanke, Martin and Neubauer, Andreas},
  volume={375},
  year={1996},
  publisher={Springer Science \& Business Media}
}

@misc{bennettInferenceStronglyIdentified2023a,
  title = {Inference on {{Strongly Identified Functionals}} of {{Weakly Identified Functions}}},
  author = {Bennett, Andrew and Kallus, Nathan and Mao, Xiaojie and Newey, Whitney and Syrgkanis, Vasilis and Uehara, Masatoshi},
  year = {2023},
  month = jul,
  number = {arXiv:2208.08291},
  eprint = {2208.08291},
  primaryclass = {stat},
  publisher = {arXiv},
  doi = {10.48550/arXiv.2208.08291},
  urldate = {2025-07-15},
  archiveprefix = {arXiv}
}

@inbook{carrascoChapter77Linear2007,
  title = {Chapter 77 {{Linear Inverse Problems}} in {{Structural Econometrics Estimation Based}} on {{Spectral Decomposition}} and {{Regularization}}},
  booktitle = {Handbook of {{Econometrics}}},
  year = {2007},
  pages = {5633--5751},
  publisher = {Elsevier},
  issn = {1573-4412},
  doi = {10.1016/s1573-4412(07)06077-1},
  urldate = {2025-07-15},
  collaborator = {Carrasco, Marine and Florens, Jean-Pierre and Renault, Eric},
  copyright = {https://www.elsevier.com/tdm/userlicense/1.0/},
  isbn = {978-0-444-53200-8},
  langid = {english},
}

@article{chenEstimationNonparametricConditional2012,
  title = {Estimation of {{Nonparametric Conditional Moment Models With Possibly Nonsmooth Generalized Residuals}}},
  author = {Chen, Xiaohong and Pouzo, Demian},
  year = {2012},
  journal = {Econometrica},
  volume = {80},
  number = {1},
  pages = {277--321},
  issn = {1468-0262},
  doi = {10.3982/ECTA7888},
  urldate = {2025-07-15},
  copyright = {{\copyright} 2012 The Econometric Society},
  langid = {english}
}

@article{darollesNonparametricInstrumentalRegression2011,
  title = {Nonparametric {{Instrumental Regression}}},
  author = {Darolles, S. and Fan, Y. and Florens, J. P. and Renault, E.},
  year = {2011},
  journal = {Econometrica},
  volume = {79},
  number = {5},
  eprint = {41237784},
  eprinttype = {jstor},
  pages = {1541--1565},
  publisher = {[Wiley, Econometric Society]},
  issn = {0012-9682},
  urldate = {2025-07-15}
}

@article{hallNonparametricMethodsInference2005,
  title = {Nonparametric Methods for Inference in the Presence of Instrumental Variables},
  author = {Hall, Peter and Horowitz, Joel L.},
  year = {2005},
  month = dec,
  journal = {The Annals of Statistics},
  volume = {33},
  number = {6},
  eprint = {math/0603130},
  issn = {0090-5364},
  doi = {10.1214/009053605000000714},
  urldate = {2025-07-15},
  archiveprefix = {arXiv}
}

@article{neweyInstrumentalVariableEstimation2003,
  title = {Instrumental {{Variable Estimation}} of {{Nonparametric Models}}},
  author = {Newey, Whitney K. and Powell, James L.},
  year = {2003},
  journal = {Econometrica},
  volume = {71},
  number = {5},
  eprint = {1555512},
  eprinttype = {jstor},
  pages = {1565--1578},
  publisher = {[Wiley, Econometric Society]},
  issn = {0012-9682},
  urldate = {2025-07-15}
}

@article{neweyNonparametricInstrumentalVariables2013,
  title = {Nonparametric {{Instrumental Variables Estimation}}},
  author = {Newey, Whitney K.},
  year = {2013},
  journal = {The American Economic Review},
  volume = {103},
  number = {3},
  eprint = {23469792},
  eprinttype = {jstor},
  pages = {550--556},
  publisher = {American Economic Association},
  issn = {0002-8282},
  urldate = {2025-07-15}
}

@article{aiEfficientEstimationModels2003,
  title = {Efficient {{Estimation}} of {{Models}} with {{Conditional Moment Restrictions Containing Unknown Functions}}},
  author = {Ai, Chunrong and Chen, Xiaohong},
  year = {2003},
  journal = {Econometrica},
  volume = {71},
  number = {6},
  eprint = {1555539},
  eprinttype = {jstor},
  pages = {1795--1843},
  publisher = {[Wiley, Econometric Society]},
  issn = {0012-9682},
  urldate = {2025-07-20}
}

@article{tchetgen2020introduction,
  title={An introduction to proximal causal learning},
  author={Tchetgen, Eric J Tchetgen and Ying, Andrew and Cui, Yifan and Shi, Xu and Miao, Wang},
  journal={arXiv preprint arXiv:2009.10982},
  year={2020}
}

@article{miao2018confounding,
  title={A confounding bridge approach for double negative control inference on causal effects},
  author={Miao, Wang and Shi, Xu and Li, Yilin and Tchetgen, Eric Tchetgen},
  journal={arXiv preprint arXiv:1808.04945},
  year={2018}
}

@article{d2010new,
  title={A new instrumental method for dealing with endogenous selection},
  author={d’Haultfoeuille, Xavier},
  journal={Journal of Econometrics},
  volume={154},
  number={1},
  pages={1--15},
  year={2010},
  publisher={Elsevier}
}

@article{miao2015identification,
  title={Identification, doubly robust estimation, and semiparametric efficiency theory of nonignorable missing data with a shadow variable},
  author={Miao, Wang and Liu, Lan and Tchetgen, Eric Tchetgen and Geng, Zhi},
  journal={arXiv preprint arXiv:1509.02556},
  year={2015}
}

@article{li2023non,
  title={Non-parametric inference about mean functionals of non-ignorable non-response data without identifying the joint distribution},
  author={Li, Wei and Miao, Wang and Tchetgen Tchetgen, Eric},
  journal={Journal of the Royal Statistical Society Series B: Statistical Methodology},
  volume={85},
  number={3},
  pages={913--935},
  year={2023},
  publisher={Oxford University Press US}
}

@article{breunig2021nonparametric,
  title={Nonparametric regression with selectively missing covariates},
  author={Breunig, Christoph and Haan, Peter},
  journal={Journal of Econometrics},
  volume={223},
  number={1},
  pages={28--52},
  year={2021},
  publisher={Elsevier}
}

@article{morozov1966solution,
  title={On the solution of functional equations by the method of regularization},
  author={Morozov, VAJD},
  journal={Doklady Akademii Nauk SSSR},
  volume={167},
  number={3},
  pages={510},
  year={1966}
}

@article{liao2020provably,
  title={Provably efficient neural estimation of structural equation models: An adversarial approach},
  author={Liao, Luofeng and Chen, You-Lin and Yang, Zhuoran and Dai, Bo and Kolar, Mladen and Wang, Zhaoran},
  journal={Advances in Neural Information Processing Systems},
  volume={33},
  pages={8947--8958},
  year={2020}
}

@book{wainwright2019high,
  title={High-dimensional statistics: A non-asymptotic viewpoint},
  author={Wainwright, Martin J},
  volume={48},
  year={2019},
  publisher={Cambridge university press}
}

@article{li2024regularized,
  title={Regularized deepiv with model selection},
  author={Li, Zihao and Lan, Hui and Syrgkanis, Vasilis and Wang, Mengdi and Uehara, Masatoshi},
  journal={arXiv preprint arXiv:2403.04236},
  year={2024}
}

@article{chenRATEOPTIMALITYILLPOSED2011,
  title = {{{ON RATE OPTIMALITY FOR ILL-POSED INVERSE PROBLEMS IN ECONOMETRICS}}},
  author = {Chen, Xiaohong and Reiss, Markus},
  year = 2011,
  month = jun,
  journal = {Econometric Theory},
  volume = {27},
  number = {3},
  pages = {497--521},
  issn = {1469-4360, 0266-4666},
  doi = {10.1017/S0266466610000381},
  urldate = {2025-11-10},
  langid = {english}
}

@article{florensIDENTIFICATIONESTIMATIONPENALIZATION2011,
  title = {{{IDENTIFICATION AND ESTIMATION BY PENALIZATION IN NONPARAMETRIC INSTRUMENTAL REGRESSION}}},
  author = {Florens, Jean-Pierre and Johannes, Jan and Bellegem, S{\'e}bastien Van},
  year = 2011,
  month = jun,
  journal = {Econometric Theory},
  volume = {27},
  number = {3},
  pages = {472--496},
  issn = {1469-4360, 0266-4666},
  doi = {10.1017/S026646661000037X},
  urldate = {2025-11-10},
  langid = {english}
}

@article{horowitz2007asymptotic,
  title={Asymptotic normality of a nonparametric instrumental variables estimator},
  author={Horowitz, Joel L},
  journal={International Economic Review},
  volume={48},
  number={4},
  pages={1329--1349},
  year={2007},
  publisher={Wiley Online Library}
}

@article{singh2019kernel,
  title={Kernel instrumental variable regression},
  author={Singh, Rahul and Sahani, Maneesh and Gretton, Arthur},
  journal={Advances in Neural Information Processing Systems},
  volume={32},
  year={2019}
}

@article{muandet2020dual,
  title={Dual instrumental variable regression},
  author={Muandet, Krikamol and Mehrjou, Arash and Lee, Si Kai and Raj, Anant},
  journal={Advances in Neural Information Processing Systems},
  volume={33},
  pages={2710--2721},
  year={2020}
}

@article{bennett2023variational,
  title={The variational method of moments},
  author={Bennett, Andrew and Kallus, Nathan},
  journal={Journal of the Royal Statistical Society Series B: Statistical Methodology},
  volume={85},
  number={3},
  pages={810--841},
  year={2023},
  publisher={Oxford University Press US}
}

@article{dikkala2020minimax,
  title={Minimax estimation of conditional moment models},
  author={Dikkala, Nishanth and Lewis, Greg and Mackey, Lester and Syrgkanis, Vasilis},
  journal={Advances in Neural Information Processing Systems},
  volume={33},
  pages={12248--12262},
  year={2020}
}

@article{lewis2018adversarial,
  title={Adversarial generalized method of moments},
  author={Lewis, Greg and Syrgkanis, Vasilis},
  journal={arXiv preprint arXiv:1803.07164},
  year={2018}
}

@inproceedings{hartford2017deep,
  title={Deep IV: A flexible approach for counterfactual prediction},
  author={Hartford, Jason and Lewis, Greg and Leyton-Brown, Kevin and Taddy, Matt},
  booktitle={International Conference on Machine Learning},
  pages={1414--1423},
  year={2017},
  organization={PMLR}
}

@article{xu2021deep,
  title={Deep proxy causal learning and its application to confounded bandit policy evaluation},
  author={Xu, Liyuan and Kanagawa, Heishiro and Gretton, Arthur},
  journal={Advances in Neural Information Processing Systems},
  volume={34},
  pages={26264--26275},
  year={2021}
}

@article{zhang2023instrumental,
  title={Instrumental variable regression via kernel maximum moment loss},
  author={Zhang, Rui and Imaizumi, Masaaki and Sch{\"o}lkopf, Bernhard and Muandet, Krikamol},
  journal={Journal of Causal Inference},
  volume={11},
  number={1},
  pages={20220073},
  year={2023},
  publisher={De Gruyter}
}

@book{tikhonov1977solutions,
  title     = {Solutions of Ill-Posed Problems},
  author    = {Tikhonov, Andrey N. and Arsenin, Vasily Y.},
  year      = {1977},
  publisher = {Winston \& Wiley},
  address   = {Washington, DC}
}

@book{hansen1998rank,
  title     = {Rank-Deficient and Discrete Ill-Posed Problems:
               Numerical Aspects of Linear Inversion},
  author    = {Hansen, Per Christian},
  year      = {1998},
  series    = {Monographs on Mathematical Modeling and Computation},
  volume    = {4},
  publisher = {SIAM},
  address   = {Philadelphia}
}

@book{kirsch2011introduction,
  title     = {An Introduction to the Mathematical Theory of Inverse Problems},
  author    = {Kirsch, Andreas},
  year      = {2011},
  edition   = {2},
  series    = {Applied Mathematical Sciences},
  volume    = {120},
  publisher = {Springer},
  address   = {New York}
}

@book{morozov1984methods,
  title     = {Methods for Solving Incorrectly Posed Problems},
  author    = {Morozov, Vladimir A.},
  year      = {1984},
  publisher = {Springer},
  address   = {New York},
  doi       = {10.1007/978-1-4612-5280-1}
}

@article{nair2003morozov,
  title   = {Morozov's Discrepancy Principle under General Source Conditions},
  author  = {Nair, Madhavan T.},
  journal = {Journal of Inverse and Ill-Posed Problems},
  year    = {2003},
  volume  = {11},
  number  = {1},
  pages   = {73--82}
}

@article{anzengruber2009morozov,
  title   = {Morozov's Discrepancy Principle for Tikhonov-Type
             Regularization of Nonlinear Ill-Posed Problems},
  author  = {Anzengruber, Stefan W. and Hofmann, Bernd},
  journal = {Inverse Problems},
  year    = {2009},
  volume  = {25},
  number  = {11},
  pages   = {115018}
}

@misc{clason2020regularization,
  title        = {Regularization of Inverse Problems},
  author       = {Clason, Christian},
  year         = {2020},
  howpublished = {Lecture notes},
  note         = {arXiv:2001.00617}
}

@article{harrach2020beyond,
  title   = {Beyond the Bakushinskii veto: Regularising Linear Inverse Problems
             Without Knowing the Noise Level},
  author  = {Harrach, Bastian and Jahn, Tom},
  journal = {Numerische Mathematik},
  year    = {2020},
  volume  = {145},
  pages   = {823--861},
  doi     = {10.1007/s00211-020-01122-2}
}

@article{benning2018modern,
  title   = {Modern Regularization Methods for Inverse Problems},
  author  = {Benning, Martin and Burger, Martin},
  journal = {Acta Numerica},
  year    = {2018},
  volume  = {27},
  pages   = {1--111},
  doi     = {10.1017/S0962492918000016}
}

@article{cui2024semiparametric,
  title={Semiparametric proximal causal inference},
  author={Cui, Yifan and Pu, Hongming and Shi, Xu and Miao, Wang and Tchetgen Tchetgen, Eric},
  journal={Journal of the American Statistical Association},
  volume={119},
  number={546},
  pages={1348--1359},
  year={2024},
  publisher={Taylor \& Francis}
}

@article{kallus2021causal,
  title={Causal inference under unmeasured confounding with negative controls: A minimax learning approach},
  author={Kallus, Nathan and Mao, Xiaojie and Uehara, Masatoshi},
  journal={arXiv preprint arXiv:2103.14029},
  year={2021}
}

@article{horowitz2014ill,
  title={Ill-posed inverse problems in economics},
  author={Horowitz, Joel L},
  journal={Annu. Rev. Econ.},
  volume={6},
  number={1},
  pages={21--51},
  year={2014},
  publisher={Annual Reviews}
}
\bibliographystyle{plainnat}

\newpage

\appendix 

\section{Proofs of \cref{sec:rdiv}}
Throughout the appendix, we use $R^\lambda(h)$ to denote the regularized loss
\begin{align*}
    R^\lambda(h) = L(h) + \lambda \|h\|^2,
\end{align*}
and denote $R_n^\lambda(h)$ as its empirical counterpart:
\begin{align*}
    R_n^\lambda(h) = L_n(h) + \lambda \mathbb{E}_n [h(X)^2].
\end{align*}
We also define the empirical norm as  
\begin{align*}
    \|h\|_{2,n}:= \sqrt{\mathbb{E}_n[h(X)^2]}, \forall h\in L_{2}( X).
\end{align*}
Recall that for DeepIV, $L_n(h) = \mathbb{E}_n[(Y-\hat{\mathcal{T}}h)^2]$. 
\begin{lemma}\label{lemma:t_err_dp}
Under \cref{asp:closeness_deepiv,asp:re_cd,asp:cr_deepiv}, with probability at least $1-\xi$, for all $h\in \mathcal{H}$,
\begin{align*}
|(\mathbb{E} -\mathbb{E}_{n})[(\mathcal{T} - \hat{\mathcal{T}}) h] | & \leqslant O\left(\tilde{\delta }_{n}\right) .
\end{align*}
\end{lemma}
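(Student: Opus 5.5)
We control the supremum of the empirical process $(\mathbb{E}-\mathbb{E}_n)[(\mathcal{T}-\hat{\mathcal{T}})h]$ over $h\in\mathcal{H}$, where $\hat{\mathcal{T}}$ depends on the data through $\hat g_n$. First we rewrite the integrand as a function of $z$ indexed by a conditional density:
\begin{align*}
(\mathcal{T}-\hat{\mathcal{T}})h(z) = \int h(x)\bigl(g_0(x\mid z)-\hat g_n(x\mid z)\bigr)\mu(dx) =: \phi_{h,\hat g_n}(z).
\end{align*}
Here we use that under \cref{asp:re_cd} the conditional expectation equals integration against $g_0$. We also treat the projection $\Pi_{\tilde{\mathcal{Q}}}$ as acting trivially on these bounded functions; if it does not, we absorb it into the class. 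We then bound uniformly over the product class $\Phi=\{\phi_{h,g}: h\in\mathcal{H}, g\in\mathcal{G}\}$:
\begin{align*}
\sup_{\phi\in\Phi}\bigl|(\mathbb{E}-\mathbb{E}_n)\phi\bigr|.
\end{align*}
Since $\hat g_n\in\mathcal{G}$, this supremum dominates the quantity in the lemma.

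Second, we use boundedness. By \cref{asp:cr_deepiv}, $h$ and $g$ are a.s. bounded, and $\mu$-integration of a bounded $h$ against densities gives $|\phi_{h,g}|\le 2\sup|h|$. So $\Phi$ is uniformly bounded, and the map $(h,g)\mapsto\phi_{h,g}$ is Lipschitz in $g$ in the $L_2$/$L_1$ sense and linear in $h$. By a standard uniform concentration bound for bounded classes (Talagrand plus symmetrization, \cite{wainwright2019high}), with probability $1-\xi$ the supremum is at most
\begin{align*}
2R_n(\Phi) + O\bigl(\sqrt{\log(1/\xi)/n}\bigr).
\end{align*}

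Third, we bound $R_n(\Phi)$ by $O(\tilde\delta_n)$ using the critical radii of $\mathcal{G}$ and $\mathcal{F}$. We split $\phi_{h,g}=\mathcal{T}h-\mathbb{E}_{g}[h]$ and control each piece by a contraction or covering argument. For bounded classes, the global Rademacher complexity is bounded by the critical radius up to constants: take the localization radius equal to the sup-norm bound and use star-shapedness together with the fact that $R_n(\delta;\cdot)/\delta$ is nonincreasing. This gives $R_n\lesssim\tilde\delta_n$. The term $\sqrt{\log(1/\xi)/n}$ is $O(\tilde\delta_n)$ by the lower bound on $\tilde\delta_n(\xi)$ assumed in \cref{asp:cr_deepiv}.

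\textbf{Main obstacle.} The main obstacle is the third step: relating the complexity of the composite class $\Phi$, whose elements are integrals of $h$ against $g$, to the critical radii of $\mathcal{G}$ and $\mathcal{F}$. I would first interpret $\mathcal{F}$ as the class $\{z\mapsto \mathbb{E}_g[h]\}$ as in \cite{li2024regularized}, which makes the bound immediate. Failing that, I would use a covering argument: an $\epsilon$-cover of $\mathcal{G}$ in sup-norm, combined with the boundedness of $h$, yields a cover of $\Phi$, and Dudley's entropy integral then gives $O(\tilde\delta_n)$.
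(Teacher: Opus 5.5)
Your argument is correct under the natural reading of \cref{asp:cr_deepiv}, but it takes a different, self-contained route. The paper offers no argument of its own: it imports \cite[Corollary 26]{li2024regularized} and notes that its proof goes through with $h$ in place of $h_1-h_2$. You instead reprove the needed uniform law directly. You use bounded differences plus symmetrization over the product class $\Phi$, which handles the data dependence of $\hat g_n$. You then use the star-shaped scaling ($\delta\mapsto R_n(\delta)/\delta$ nonincreasing) to convert the critical radius into a global bound $R_n\lesssim b\,\tilde{\delta}_n$, with $b$ the sup-norm bound. The citation is shorter. Your route makes explicit that the $O(\tilde{\delta}_n)$ claim uses only boundedness, the critical radius of the composite class, and $g_0\in\mathcal{G}$, and that it needs neither an MLE rate for $\hat g_n$ nor the density floor $c_g$.

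To make it airtight, commit to your first interpretation and state it precisely. Suppose $\mathcal{F}\supseteq\{z\mapsto\int h(x)g(x\mid z)\mu(dx): h\in\mathcal{H}, g\in\mathcal{G}\}$ and $\tilde{\mathcal{Q}}=L_2(Z)$, so the projection is trivial. Then realizability gives $\mathcal{T}h\in\mathcal{F}$, hence $\Phi\subseteq\mathcal{F}-\mathcal{F}$, which is exactly the class whose star hull the paper's critical-radius definition controls. Take the uniform boundedness of $\Phi$ from the boundedness of $\mathcal{F}$, since \cref{asp:cr_deepiv} says nothing about $\sup|h|$ on $\mathcal{H}$.

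Drop the fallback, because it would not close the argument, for three reasons. First, a critical radius for $\mathcal{G}$ is computed at sample pairs $(X_i,Z_i)$, so it does not control the Rademacher complexity of the $x$-integrated class evaluated at the $Z_i$. Second, covering $\Phi$ would also require a sup-norm cover of $\mathcal{H}$ and a finite $\mu$. Third, a localized Rademacher bound does not supply sup-norm entropy numbers, so Dudley's integral would not return $O(\tilde{\delta}_n)$ from the stated hypotheses.
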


\begin{proof}
    This follows directly from \cite[Corollary 26]{li2024regularized}. One can replace the $h' = h_1 - h_2$ with $h$ in the proof.
\end{proof}

\begin{lemma}\label{lemma:err_l_dp}
Under \cref{asp:range,asp:closeness_deepiv,asp:re_cd,asp:cr_deepiv}, with probability at least $1-\xi$, for $h\in \mathcal{H}$, 
\begin{align*}
|L_{n}( h) -L( h) | & \leqslant O( \delta _{n}) .
\end{align*}
\end{lemma}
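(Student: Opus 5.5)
We prove the bound by decomposing $L_n(h)-L(h)$ into a term that isolates the operator-estimation error $\hat{\mathcal{T}}-\mathcal{T}$ and a centered empirical process. First rewrite the population loss. By \cref{asp:range} and since $r_0(Z)=\mathbb{E}[Y\mid Z]$, we have $L(h)=\mathbb{E}[(Y-r_0(Z))^2]+\|\mathcal{T}h-r_0\|^2$. The empirical loss is
\begin{align*}
L_n(h)=\mathbb{E}_n\big[(Y-\mathcal{T}h(Z))^2\big]+2\,\mathbb{E}_n\big[(Y-\mathcal{T}h(Z))(\mathcal{T}-\hat{\mathcal{T}})h(Z)\big]+\mathbb{E}_n\big[((\mathcal{T}-\hat{\mathcal{T}})h(Z))^2\big].
\end{align*}
This splits $L_n(h)-L(h)$ into three parts: (i) the centered empirical process $(\mathbb{E}_n-\mathbb{E})[(Y-\mathcal{T}h)^2]$; (ii) the cross term; and (iii) the quadratic operator-error term.

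\emph{Step 1 (term (i)).} The class $\{(Y-\mathcal{T}h)^2:h\in\mathcal{H}\}$ is uniformly bounded by \cref{asp:cr_deepiv}. Its critical radius is controlled by that of $\mathcal{H}$ (equivalently $\mathcal{F}$ in \cref{asp:cr_deepiv}) through a contraction argument. A uniform (non-localized) concentration bound, e.g. \cite{wainwright2019high}, then gives $\sup_h|(\mathbb{E}_n-\mathbb{E})[\cdot]|\le O(\tilde\delta_n(\xi))$, which is at most $O(\delta_n)$ by \eqref{eq:delta_deepiv}. The non-localized rate here is $\tilde\delta_n$, not $\tilde\delta_n^2$, which is exactly why the DeepIV noise level is $\delta_n$ rather than $\delta_n^2$.

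\emph{Step 2 (operator error).} This step controls the population size of $(\mathcal{T}-\hat{\mathcal{T}})h$ and is the main obstacle. By \cref{asp:re_cd} and the MLE analysis of \cite{li2024regularized}, the Hellinger distance satisfies $\mathbb{E}_Z[H^2(\hat g_n(\cdot\mid Z),g_0(\cdot\mid Z))]\le O(\tilde\delta_n^2)$. Since $h$ is bounded, we have
\begin{align*}
|(\mathcal{T}-\hat{\mathcal{T}})h(z)|\le \|h\|_\infty\,\mathrm{TV}(\hat g_n(\cdot\mid z),g_0(\cdot\mid z))\le O\big(H(\hat g_n(\cdot\mid z),g_0(\cdot\mid z))\big).
\end{align*}
Hence $\|(\mathcal{T}-\hat{\mathcal{T}})h\|\le O(\tilde\delta_n)$ uniformly in $h$; this is essentially the content behind \cref{lemma:t_err_dp}, i.e. \cite[Corollary 26]{li2024regularized}.

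\emph{Step 3 (terms (ii) and (iii)).} Term (iii) is at most $\mathbb{E}[((\mathcal{T}-\hat{\mathcal{T}})h)^2]+(\mathbb{E}_n-\mathbb{E})[\cdot]$. The first piece is $O(\tilde\delta_n^2)$. For the fluctuation, $\hat g_n$ is data dependent, so we bound $\sup_{g\in\mathcal{G},h\in\mathcal{H}}$ of the empirical process; its critical radius is bounded by $\tilde\delta_n$ via \cref{asp:cr_deepiv}, giving $O(\tilde\delta_n)$. For term (ii), Cauchy--Schwarz in the empirical norm gives
\begin{align*}
\big|\mathbb{E}_n[(Y-\mathcal{T}h)(\mathcal{T}-\hat{\mathcal{T}})h]\big|\le \|Y-\mathcal{T}h\|_{2,n}\,\|(\mathcal{T}-\hat{\mathcal{T}})h\|_{2,n}.
\end{align*}
The first factor is bounded by boundedness. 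The second is $O(\tilde\delta_n)$ by Step 2 combined with the empirical-versus-population norm comparison used for (iii); alternatively one can invoke \cref{lemma:t_err_dp} after writing it as mean plus fluctuation.

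\emph{Step 4 (uniformity and failure probabilities).} Summing the three parts gives $|L_n(h)-L(h)|\le O(\tilde\delta_n(\xi))$ uniformly over $h\in\mathcal{H}$. Splitting $\xi$ among the $O(1)$ events and using $\tilde\delta_n(\xi/3)\le \tilde\delta_n\big(\xi/\tilde\delta_n(\xi)^{\max\{1,2/\beta\}}\big)\le\delta_n$ (monotonicity of $\tilde\delta_n$ in $\xi$) yields the claim with probability at least $1-\xi$. The delicate point is the uniformity over the random $\hat g_n$ in Steps 2–3; the plan handles it by taking the supremum over all $g\in\mathcal{G}$ before conditioning on the data, which the critical-radius assumption on $\mathcal{G}$ permits.
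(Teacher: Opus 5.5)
Your argument is correct in substance but organized differently from the paper's. The paper splits around the estimated operator, $|L_n(h)-L(h)|\le|(\mathbb{E}_n-\mathbb{E})[(Y-\hat{\mathcal{T}}h)^2]|+|\mathbb{E}[(Y-\hat{\mathcal{T}}h)^2-(Y-\mathcal{T}h)^2]|$. It bounds the first term by boundedness and \cite[Theorem 14]{foster2023orthogonal}. It factors the second as $\mathbb{E}[(\hat{\mathcal{T}}h-\mathcal{T}h)(2Y-\mathcal{T}h-\hat{\mathcal{T}}h)]$ and controls it via \cref{lemma:t_err_dp}. You expand around the true operator instead, so your empirical-process term (i) ranges over a fixed, data-independent class, at the cost of an extra cross term. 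The two-term split is more economical, but your Step 2 is the sounder ingredient. The paper's second term really needs $\|(\hat{\mathcal{T}}-\mathcal{T})h\|_1\le\|(\hat{\mathcal{T}}-\mathcal{T})h\|=O(\tilde\delta_n)$ uniformly in $h$. As written, it instead bounds the term by $|\mathbb{E}[\hat{\mathcal{T}}h-\mathcal{T}h]|$ and cites a bound on the centered fluctuation $(\mathbb{E}-\mathbb{E}_n)[(\mathcal{T}-\hat{\mathcal{T}})h]$. Your Hellinger/total-variation argument supplies exactly the missing norm bound, and combining it with the paper's two-term split gives the shortest complete proof.

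One step should be repaired. For the cross term (ii), your first option does not give $O(\tilde\delta_n)$. The comparison you set up in (iii) is non-localized, $\|(\mathcal{T}-\hat{\mathcal{T}})h\|_{2,n}^2\le\|(\mathcal{T}-\hat{\mathcal{T}})h\|^2+O(\tilde\delta_n)$. It only yields $\|(\mathcal{T}-\hat{\mathcal{T}})h\|_{2,n}=O(\tilde\delta_n^{1/2})$ and hence a cross term of order $\tilde\delta_n^{1/2}$. Empirical Cauchy--Schwarz would require a localized comparison $\|f\|_{2,n}^2\le 2\|f\|^2+O(\tilde\delta_n^2)$ over the star hull of $\{(\mathcal{T}-\mathcal{T}_g)h: g\in\mathcal{G},h\in\mathcal{H}\}$, where $\mathcal{T}_gh(z)=\mathbb{E}_{x\sim g(\cdot\mid z)}[h(x)]$.

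Use your second option instead. The mean is at most $\|Y-\mathcal{T}h\|\,\|(\mathcal{T}-\hat{\mathcal{T}})h\|=O(\tilde\delta_n)$ by population Cauchy--Schwarz and Step 2. The fluctuation is a uniform deviation over the bounded class $\{(y-\mathcal{T}h(z))\,(\mathcal{T}-\mathcal{T}_g)h(z)\}$, which is $O(\tilde\delta_n)$ just like term (i). This is an analogue of \cref{lemma:t_err_dp}, not literally that lemma.

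Two minor points remain. First, passing from $\mathcal{H}$ to $\{\mathcal{T}h\}$ in Step 1 is not a contraction step. Jensen over $X\mid Z$ shows the global Rademacher complexity does not increase, which is all a non-localized bound needs. Second, the monotonicity inequality in Step 4 holds under the intended reading of \eqref{eq:delta_deepiv}, with per-point failure probability $\xi\,\tilde\delta_n(\xi)^{\max\{1,2/\beta\}}$. Under the literal expression, $\xi/\tilde\delta_n(\xi)^{\max\{1,2/\beta\}}>\xi$ and the inequality reverses.
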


\begin{proof}
    By definition, 
\begin{align*}
|L_{n}( h) -L( h) | & =|\mathbb{E}_{n}\left[( Y-\hat{\mathcal{T}} h)^{2}\right] -\mathbb{E}\left[( Y-\mathcal{T} h)^{2}\right] |\\
 & \leqslant |(\mathbb{E}_{n} -\mathbb{E})\left[( Y-\hat{\mathcal{T}} h)^{2}\right] |+|\mathbb{E}\left[( Y-\hat{\mathcal{T}} h)^{2} -( Y-\mathcal{T} h)^{2}\right] |.
\end{align*}
By Assumption \ref{asp:cr_deepiv} (boundedness) and \cite[Theorem 14]{foster2023orthogonal}, 
\begin{align*}
|(\mathbb{E} -\mathbb{E}_{n})\left[( Y-\hat{\mathcal{T}} h)^{2}\right] | & \leqslant O\left(\tilde{\delta }_{n}\right) .
\end{align*}
By Lemma \ref{lemma:t_err_dp}, we have 
\begin{align*}
|\mathbb{E}\left[( Y-\hat{\mathcal{T}} h)^{2} -( Y-\mathcal{T} h)^{2}\right] | & =|\mathbb{E}[(\hat{\mathcal{T}} h-\mathcal{T} h)( 2Y-\mathcal{T} h-\hat{\mathcal{T}} h)] |\\
 & \leqslant O( |\mathbb{E}[(\hat{\mathcal{T}} h-\mathcal{T} h)] |) =O\left(\tilde{\delta }_{n}\right)
\end{align*}
\end{proof}

\begin{lemma}\label{lemma:err_r_dp}
Under the assumptions of \cref{thm:deepiv_rate}, with probability at least $ 1-\xi $, for any $\lambda \in (0,2]$, 
\begin{align*}
R^{\lambda }( h) -R^{\lambda }\left( h_{\lambda }^{*}\right) -\left( R_{n}^{\lambda }( h) -R_{n}^{\lambda }\left( h_{\lambda }^{*}\right)\right) & \leqslant O( \delta _{n}) .
\end{align*}
\end{lemma}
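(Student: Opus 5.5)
We bound the excess regularized risk difference by splitting it into the loss part and the penalty part:
\begin{align*}
&R^{\lambda }( h) -R^{\lambda }( h_{\lambda }^{*}) -\left( R_{n}^{\lambda }( h) -R_{n}^{\lambda }( h_{\lambda }^{*})\right) \\
&\quad = \left[(L(h)-L_n(h)) - (L(h^*_\lambda)-L_n(h^*_\lambda))\right] + \lambda\left[(\|h\|^2-\|h\|_{2,n}^2) - (\|h^*_\lambda\|^2-\|h^*_\lambda\|_{2,n}^2)\right].
\end{align*}
For the first bracket we apply \cref{lemma:err_l_dp} twice, once at $h$ and once at $h^*_\lambda$. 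This is legitimate because $h^*_\lambda\in\mathcal{H}$ for every $\lambda\in(0,2]$ by \cref{asp:closeness_deepiv}. The lemma holds uniformly over $h\in\mathcal{H}$ on a single event of probability $1-\xi$, so the first bracket is at most $O(\delta_n)$ simultaneously for all $\lambda$. The key observation is that this event does not depend on $\lambda$, since the uniformity comes from the function class and not from the regularization parameter.

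For the penalty bracket, $\lambda\le 2$, so it suffices to show $\sup_{h\in\mathcal{H}}|\|h\|^2-\|h\|_{2,n}^2| = O(\delta_n)$ on an event of probability $1-\xi$. The function class $\{h^2:h\in\mathcal{H}\}$ is uniformly bounded, by the boundedness in \cref{asp:cr_deepiv} (or the a.s. boundedness implicit in the RDIV setup). Since $x\mapsto x^2$ is Lipschitz on a bounded range, contraction lets us control its critical radius by that of $\mathcal{H}$, which is of order $\tilde\delta_n$. The uniform concentration bound \cite[Theorem 14]{foster2023orthogonal}, used in the same way as in the proof of \cref{lemma:err_l_dp}, then gives $|(\mathbb{E}-\mathbb{E}_n)[h^2]|\le O(\tilde\delta_n\|h\|+\tilde\delta_n^2)\le O(\tilde\delta_n)$ for all $h\in\mathcal{H}$. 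Finally, $\tilde\delta_n\le\delta_n$ by the definition \eqref{eq:delta_deepiv}, because $\tilde\delta_n(\cdot)$ is decreasing in its confidence argument and $\xi/\tilde\delta_n(\xi)^{\max\{1,2/\beta\}}\le\xi$ for small $\tilde\delta_n$.

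Combining the two brackets on the intersection of the two events and adjusting $\xi$ by a constant factor yields the claim with a constant independent of $\lambda$. A cruder route would also work: since everything is an absolute deviation, the same bound applies to each of the four terms separately, and no localization is needed.

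The main obstacle is ensuring uniformity over $\lambda\in(0,2]$ without a union bound over a $\lambda$-grid. Since $h^*_\lambda$ varies with $\lambda$, the argument needs every concentration statement to hold uniformly over all of $\mathcal{H}$. This is exactly what \cref{lemma:err_l_dp} and the norm bound provide, so no peeling over $\lambda$ is required. If one instead wanted a localized (variance-dependent) bound, the inflated confidence level in \eqref{eq:delta_deepiv}, $\xi/\tilde\delta_n^{\max\{1,2/\beta\}}$, is what would absorb a union bound over $O(\log n)$ dyadic scales of $\lambda$. That is the fallback if the uniform argument for the penalty term turned out to need boundedness of $\mathcal{H}$ that is unavailable.
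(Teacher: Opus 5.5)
Your proof is correct. For the loss bracket it coincides with the paper's: both apply \cref{lemma:err_l_dp} at $h$ and at $h^*_\lambda$, using $h^*_\lambda\in\mathcal{H}$.

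For the penalty bracket you take a genuinely different and simpler route. The paper gets uniformity in $\lambda$ by covering:
\begin{itemize}
\item it takes a $\tilde\delta_n^{1/\gamma}$-grid $\{\lambda_i\}$ of $(0,2]$, with $\gamma=\min\{\beta/2,1\}$;
\item it uses the H\"older continuity $\|h^*_\lambda-h^*_{\lambda'}\|\leqslant c_h|\lambda-\lambda'|^{\gamma}$ of \cref{lemma:lip_h_lam} to pass from $h^*_\lambda$ to the nearest $h^*_{\lambda_i}$;
\item it applies the Foster--Syrgkanis concentration bound at each grid point and union-bounds over the polynomially many grid points.
\end{itemize}
That union bound is where the $\beta$-dependent inflation of the confidence level in \eqref{eq:delta_deepiv} comes from. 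It is not a union over $O(\log n)$ dyadic scales, as your last paragraph guesses.

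You instead use one concentration event for $h\mapsto h^2$, valid uniformly over $\mathcal{H}$, with a $\lambda$-independent center ($0$, or the fixed $h_0\in\mathcal{H}$ if $0\notin\mathcal{H}$). You then simply evaluate it at $h^*_\lambda$. Because the target here is the non-localized $O(\delta_n)$, nothing is lost. The grid, \cref{lemma:lip_h_lam}, and the union bound all become unnecessary. Since this lemma is where the inflation is used in the RDIV analysis, your route suggests the RDIV results would hold with $\delta_n$ of order $\tilde\delta_n(\xi)$ itself.

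What the paper's covering buys is a template that survives localization. In the TRAE analogue, \cref{lemma:concentrate_r}, one needs $O(\delta_n\|h-h^*_\lambda\|+\delta_n^2)$ around the moving center $h^*_\lambda$. A fixed-center bound only yields $O(\tilde\delta_n\|h\|+\tilde\delta_n^2)$ for each term, which does not give that.

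Two small remarks.
\begin{itemize}
\item The paper's covering route needs the same boundedness and critical-radius control of $\mathcal{H}$ as yours, since it applies the same concentration lemma to $h^2$. So it is not a fallback for the concern you raise at the end.
\item As literally written, $\xi/\tilde\delta_n(\xi)^{\max\{1,2/\beta\}}\geqslant\xi$ whenever $\tilde\delta_n<1$, so your stated justification of $\tilde\delta_n\leqslant\delta_n$ has the inequality backwards. The correct reading is that the per-point failure probability in the paper's union bound is below $\xi$, so monotonicity gives $\tilde\delta_n(\xi)\leqslant\delta_n$. The paper uses this absorption of $\tilde\delta_n$ into $\delta_n$ throughout, e.g.\ in \cref{lemma:err_l_dp}.
\end{itemize}
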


\begin{proof}
     We use a covering argument to prove that this inequality holds uniformly for $ \lambda \in ( 0,2)$. Let $ \{\lambda _{i}\}_{i=1}^{N}$ be a $ \tilde{\delta}^{1/\gamma}_{n}$-covering of $ ( 0,2]$, where $\gamma = \min\{\beta/2,1\}$. Given $ \lambda \in ( 0,2]$ let $ \lambda _{i}$ be the closest point in the $ \epsilon $-net to $ \lambda $. Therefore, 
     \begin{align*}
         \left( \| h-h_{\lambda }^{*} \| -\| h-h_{\lambda _{i}}^{*} \| \right) & \leqslant O\left(\tilde{\delta }_{n}\right) .
     \end{align*}
     By  \cite[Lemma 14]{foster2023orthogonal} and \cref{lemma:lip_h_lam}, 
\begin{align*}
\left( \| h_{\lambda }^{*} \| ^{2} -\| h_{\lambda }^{*} \| _{2,n}^{2}\right) -\left( \| h_{\lambda _{i}}^{*} \| ^{2} -\| h_{\lambda _{i}}^{*} \| _{2,n}^{2}\right) & \leqslant \| h_{\lambda _{i}}^{*} \|^2_{2,n} - \| h_{\lambda }^{*} \| _{2,n}^{2} +O\left(\tilde{\delta }_{n}\right) \\
& \leqslant \| h_{\lambda _{i}}^{*} \|^2 - \| h_{\lambda }^{*} \|^{2} +O\left(\tilde{\delta }_{n}\right) \leqslant O\left(\tilde{\delta }_{n}\right)
\end{align*}
With probability $ 1-\xi /\tilde{\delta}^{1/\gamma}_{n}$, 
\begin{align*}
\| h\| ^{2} -\| h_{\lambda }^{*} \| ^{2} -\left( \| h\| _{2,n}^{2} -\| h_{\lambda }^{*} \| _{2,n}^{2}\right) & \leqslant \| h\| ^{2} -\| h_{\lambda _{i}}^{*} \| ^{2} -\left( \| h\| _{2,n}^{2} -\| h_{\lambda _{i}}^{*} \| _{2,n}^{2}\right) +O\left({\tilde{\delta} }_{n}\right)\\
 & \leqslant O\left({\tilde{\delta} }_{n}\right).
\end{align*}
Recall that $ \delta_n(\xi) = \tilde{\delta}_n(\xi/\tilde{\delta}^{1/\gamma}_n(\xi))$. Therefore, we conclude that with probability $ 1-\xi $,
\begin{align} \label{eq:h_concen}
\| h\| ^{2} -\| h_{\lambda }^{*} \| ^{2} -\left( \| h\| _{n}^{2} -\| h_{\lambda }^{*} \| _{n}^{2}\right) & =O\left( \delta _{n}\right)
\end{align}
uniformly for all $ \lambda \in ( 0,2)$. 

By Lemma \ref{lemma:err_l_dp}, we have 
\begin{align}\label{eq:l_concen}
L( h) -L\left( h_{\lambda }^{*}\right) -\left( L_{n}( h) -L_{n}\left( h_{\lambda }^{*}\right)\right) & \leqslant O\left(\tilde{\delta }_{n}\right) .
\end{align}
Combining (\ref{eq:h_concen}) and (\ref{eq:l_concen}), we obtain the result. 
\end{proof}

\begin{lemma}\label{lemma:str_dp}
Under the assumptions of \cref{thm:deepiv_rate}, with probability at least $ 1-\xi $, for any $\lambda\in(0,2]$, 
\begin{align*}
\lambda \| \hat{h}_{\lambda } -h_{\lambda }^{*} \| ^{2} +\| \mathcal{T}\left(\hat{h}_{\lambda } -h_{\lambda }^{*}\right) \| ^{2} & \leqslant O( \delta _{n}) .
\end{align*}
\end{lemma}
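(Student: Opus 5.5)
We combine strong convexity of the population regularized risk $R^\lambda$ with the uniform concentration bound of \cref{lemma:err_r_dp}.

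\textbf{Step 1: a quadratic growth identity.} First note that $L(h) = \mathbb{E}[(Y-\mathcal{T}h)^2]$ is a quadratic functional of $h$. Its Hessian in direction $u$ is $2\|\mathcal{T}u\|^2$, and the penalty $\lambda\|h\|^2$ contributes $2\lambda\|u\|^2$.

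By \cref{asp:closeness_deepiv}, $h^*_\lambda\in\mathcal{H}$, and $h^*_\lambda$ minimizes $R^\lambda$ over all of $\tilde{\mathcal{H}}$. So the first-order condition holds in every direction $h-h^*_\lambda$ with $h\in\mathcal{H}$. One subtlety is that $Y$ versus $r_0(Z)$ only changes $L$ by a constant, so $h^*_\lambda$ is the same minimizer. Since $R^\lambda$ is exactly quadratic, expanding around $h^*_\lambda$ gives, for every $h\in\mathcal{H}$,
\begin{align*}
R^\lambda(h)-R^\lambda(h^*_\lambda) = \|\mathcal{T}(h-h^*_\lambda)\|^2+\lambda\|h-h^*_\lambda\|^2 .
\end{align*}

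\textbf{Step 2: optimality of $\hat h_\lambda$.} By definition, $\hat h_\lambda$ minimizes $R_n^\lambda$ over $\mathcal{H}$, and $h^*_\lambda\in\mathcal{H}$. Hence
\begin{align*}
R_n^\lambda(\hat h_\lambda)-R_n^\lambda(h^*_\lambda)\leqslant 0 .
\end{align*}

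\textbf{Step 3: combine.} Apply \cref{lemma:err_r_dp} with $h=\hat h_\lambda$ on its probability-$(1-\xi)$ event. This gives
\begin{align*}
\|\mathcal{T}(\hat h_\lambda-h^*_\lambda)\|^2+\lambda\|\hat h_\lambda-h^*_\lambda\|^2
&= R^\lambda(\hat h_\lambda)-R^\lambda(h^*_\lambda)\\
&\leqslant R_n^\lambda(\hat h_\lambda)-R_n^\lambda(h^*_\lambda)+O(\delta_n)\leqslant O(\delta_n).
\end{align*}
This holds simultaneously for all $\lambda\in(0,2]$, because \cref{lemma:err_r_dp} is uniform in $\lambda$ and in $h\in\mathcal{H}$.

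\textbf{Main obstacle.} The point to check is that \cref{lemma:err_r_dp} applies at the data-dependent $h=\hat h_\lambda$ and at a data-dependent $\lambda$. It does, because the lemma is stated uniformly over $h\in\mathcal{H}$ and $\lambda\in(0,2]$. The other item to verify is the exact quadratic expansion in Step 1, which requires $\mathcal{T}$ to be linear and the first-order condition to hold over $\tilde{\mathcal{H}}$. Both are immediate from the definitions, so no localization argument is needed.
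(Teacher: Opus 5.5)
Your proposal is correct and follows the paper's own argument. The paper likewise combines quadratic growth of $R^\lambda$ around $h^*_\lambda$ (which it calls strong convexity; you derive it as an exact identity from the first-order condition over $\tilde{\mathcal{H}}$) with optimality of $\hat h_\lambda$ against $h^*_\lambda\in\mathcal{H}$ and the uniform bound of \cref{lemma:err_r_dp} evaluated at $h=\hat h_\lambda$.
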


\begin{proof}
    By strong convexity, we have 
\begin{align*}
\lambda \| \hat{h}_{\lambda } -h_{\lambda }^{*} \| ^{2} +\| \mathcal{T}\left(\hat{h}_{\lambda } -h_{\lambda }^{*}\right) \| ^{2} & \leqslant R^{\lambda }(\hat{h}_{\lambda }) -R^{\lambda }\left( h_{\lambda }^{*}\right)\\
 & =R_{n}^{\lambda }(\hat{h}_{\lambda }) -R_{n}^{\lambda }\left( h_{\lambda }^{*}\right) +\left( R^{\lambda }( h) -R^{\lambda }\left( h_{\lambda }^{*}\right) -(R_{n}^{\lambda }( h) -R_{n}^{\lambda }\left( h_{\lambda }^{*}\right))\right)\\
 & \leqslant R^{\lambda }( h) -R^{\lambda }\left( h_{\lambda }^{*}\right) -(R_{n}^{\lambda }( h) -R_{n}^{\lambda }\left( h_{\lambda }^{*}\right) ),
\end{align*}where we use the optimality of  $\displaystyle \hat{h}_{\lambda }$ in the last inequality.  By Lemma \ref{lemma:err_r_dp}, we have 
\begin{align*}
\lambda \| \hat{h}_{\lambda } -h_{\lambda }^{*} \| ^{2} +\| \mathcal{T}\left(\hat{h}_{\lambda } -h_{\lambda }^{*}\right) \| ^{2} & \leqslant O( \delta _{n}) .
\end{align*}
\end{proof}

\begin{lemma}\label{lemma:l_bound}
Under the assumption of Theorem \ref{thm:deepiv_rate}, with probability at least $\displaystyle 1-\xi $,
\begin{align*}
L(\hat{h}_{\lambda }) & \leqslant O\left( \delta _{n} +\| w_{0} \| \lambda ^{\min\{\beta +1,2\}}\right) .
\end{align*}
for all $\displaystyle \lambda \in ( 0,2]$.
\end{lemma}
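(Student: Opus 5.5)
We first note that $L(h)=\mathbb{E}[(Y-\mathcal{T}h)^2]=\mathbb{E}[(Y-r_0(Z))^2]+\|\mathcal{T}(h-h_0)\|^2$. The irreducible term $\mathbb{E}[(Y-r_0)^2]$ is common to every $h$, so we read $L$ here as the excess loss $\|\mathcal{T}(h-h_0)\|^2$. With this reading, the plan is to split the weak error of $\hat{h}_\lambda$ into a stochastic part and a bias part:
\begin{align*}
\|\mathcal{T}(\hat{h}_\lambda-h_0)\|^2\leqslant 2\|\mathcal{T}(\hat{h}_\lambda-h^*_\lambda)\|^2+2\|\mathcal{T}(h^*_\lambda-h_0)\|^2 .
\end{align*}
The first term is handled directly by \cref{lemma:str_dp}. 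That lemma already holds uniformly over $\lambda\in(0,2]$ on a single event of probability at least $1-\xi$, and it gives $\|\mathcal{T}(\hat{h}_\lambda-h^*_\lambda)\|^2\leqslant O(\delta_n)$ with a constant independent of $\lambda$. So the stochastic part costs nothing beyond what has been proved.

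The remaining work is the deterministic bias term $\|\mathcal{T}(h^*_\lambda-h_0)\|^2$. This is the classical Tikhonov residual, and we bound it by spectral calculus. By \cref{asp:closeness_deepiv}, $h^*_\lambda$ is the population Tikhonov solution over $\tilde{\mathcal{H}}$, so $h^*_\lambda=(\mathcal{T}^*\mathcal{T}+\lambda)^{-1}\mathcal{T}^*r_0$. Using \cref{asp:range} and $h_0$ being the minimum-norm solution, $\mathcal{T}^*r_0=\mathcal{T}^*\mathcal{T}h_0$. Hence
\begin{align*}
h_0-h^*_\lambda=\lambda(\mathcal{T}^*\mathcal{T}+\lambda)^{-1}h_0=\lambda(\mathcal{T}^*\mathcal{T}+\lambda)^{-1}(\mathcal{T}^*\mathcal{T})^{\beta/2}w_0 .
\end{align*}
This uses \cref{asp:source_condition}. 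Then
\begin{align*}
\|\mathcal{T}(h_0-h^*_\lambda)\|^2=\|(\mathcal{T}^*\mathcal{T})^{1/2}(h_0-h^*_\lambda)\|^2\leqslant \|w_0\|^2\sup_{t\in[0,1]}\Big(\frac{\lambda t^{(\beta+1)/2}}{t+\lambda}\Big)^2 ,
\end{align*}
where we use $\|\mathcal{T}^*\mathcal{T}\|\leqslant1$.

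The main point to get right is the elementary bound on $\sup_t \lambda t^{s}/(t+\lambda)$ with $s=(\beta+1)/2$. We split into two cases.
\begin{itemize}
\item If $s\leqslant1$, substitute $t=\lambda u$. This gives $\lambda^{s}u^s/(1+u)\leqslant\lambda^{s}$.
\item If $s>1$, bound $\lambda t^s/(t+\lambda)\leqslant\lambda t^{s-1}\leqslant\lambda$ on $[0,1]$.
\end{itemize}
The squared bias is therefore at most $\|w_0\|^2\lambda^{\min\{\beta+1,2\}}$. This is exactly the saturation of Tikhonov regularization at $\beta=1$.

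Combining the two parts gives $L(\hat{h}_\lambda)\leqslant O(\delta_n+\|w_0\|^2\lambda^{\min\{\beta+1,2\}})$ uniformly over $\lambda$. Since $\|w_0\|$ is a fixed constant, $\|w_0\|^2$ can be absorbed as $\|w_0\|\cdot O(1)$, which matches the stated form. If the statement is instead meant for the raw loss including $\mathbb{E}[(Y-r_0)^2]$, the same argument bounds the excess loss; the only needed change is to interpret $L$ as centered, which is how it is used in the discrepancy argument via \cref{lemma:err_l_dp}, where the constant cancels on both sides.
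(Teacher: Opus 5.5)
Your proof is correct and follows the paper's argument. The paper also bounds the stochastic part $\|\mathcal{T}(\hat h_\lambda-h^*_\lambda)\|^2\leqslant O(\delta_n)$ uniformly in $\lambda$ via \cref{lemma:str_dp}, bounds the bias $\|\mathcal{T}(h^*_\lambda-h_0)\|^2$ by the standard Tikhonov source-condition estimate, and implicitly reads $L$ as the excess loss $\|\mathcal{T}(h-h_0)\|^2$, as you do. The only difference is that the paper simply cites Lemma 5 of \cite{bennett2023source} for the bias term, whereas you derive it by spectral calculus, which also makes explicit the factor $2$ from $(a+b)^2\leqslant 2a^2+2b^2$ and the natural $\|w_0\|^2$ dependence.
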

\begin{proof}
By Lemma \ref{lemma:str_dp}, 
\begin{align*}
\| \mathcal{T}\left(\hat{h}_{\lambda } -h_{\lambda }^{*}\right) \| ^{2} & \leqslant O( \delta _{n}) .
\end{align*}
By Lemma 5 in \cite{bennett2023source}, 
\begin{align*}
\| \mathcal{T}\left( h_{0} -h_{\lambda }^{*}\right) \| ^{2} & \leqslant O\left( \| w_{0} \| \lambda ^{\min\{\beta +1,2\}}\right) .
\end{align*}
Adding the two inequalities gives the claim. 
\end{proof}
We are now ready to prove the soundness of \cref{alg:discrepancy}. 

\begin{proof}[Proof of \cref{prop:exist_lambda_deepiv}]

    By Lemma \ref{lemma:l_bound}, if $ \lambda \leqslant \delta _{n}^{\frac{1}{2\land ( \beta +1)}}$, with probability at least $ 1-\xi $,  
\begin{align*}
L_{n}(\hat{h}_{\lambda }) & \leqslant O( \delta _{n}) .
\end{align*}If we take $ c_{d}$ to be sufficiently large, we can ensure $ L_{n}(\hat{h}_{\lambda }) \leqslant c_{d} \delta _{n}$ for all $ \lambda \leqslant \delta _{n}^{\frac{1}{2\land ( \beta +1)}}$. On the other hand, let $ \lambda _{2} =2$, by Lemma \ref{lemma:lowerbound}, 
\begin{align*}
L\left( h_{\lambda _{2}}^{*}\right) & \geqslant \Omega \left( \lambda _{2}^{2}\right) =\Omega ( 1) .
\end{align*}By Lemma \ref{lemma:err_l_dp} and triangle inequality, 
\begin{align*}
L_{n}(\hat{h}_{\lambda _{2}}) \geqslant L(\hat{h}_{\lambda _{2}}) -O( \delta _{n}) & \geqslant \frac{1}{2} L\left( h_{\lambda _{2}}^{*}\right) -\| \mathcal{T}\left(\hat{h}_{\lambda _{2}} -h_{\lambda _{2}}^{*}\right) \| ^{2} -O( \delta _{n})\\
 & =\Omega ( 1) -O( \delta _{n}) .
\end{align*}
For sufficiently large $ n$, we have 
\begin{align*}
L_{n}(\hat{h}_{\lambda _{2}}) & \geqslant c_{d} \delta _{n} .
\end{align*}
Let $k^{*} =\min \{k\in \mathbb{N} :L_{n} (\hat{h}_{2\rho ^{k}} )\leqslant c_{d} \delta _{n} \}\leqslant O(\log (n))$. By the previous argument, we have $0< k^{*} < \infty $. Therefore, with probability at least $1-\xi $, 
\begin{align*}
L_{n} (\hat{h}_{2\rho ^{k^{*}}} )\leqslant c_{d} \delta _{n} \leqslant L_{n} (\hat{h}_{2\rho ^{k^{*} -1}} ).
\end{align*}
    
\end{proof}

\begin{proof}[Proof of \cref{thm:deepiv_rate}]
    First we prove a lower bound for $\displaystyle {\lambda_{\text{dp}}} $. Let ${\lambda_{\text{dp}}'} \in [{\lambda_{\text{dp}}},2{\lambda_{\text{dp}}}]$ be the companion parameter from the discrepancy principle, so ${\lambda_{\text{dp}}} \geqslant  {\lambda_{\text{dp}}'}/2$. By Lemma \ref{lemma:l_bound}, if we take $\displaystyle c_{d}$ to be sufficiently large, we have 
\begin{align*}
\Omega ( \delta _{n}) \leqslant L(\hat{h}_{{\lambda_{\text{dp}}'}}) & \leqslant O\left( \delta _{n} +\| w_{0} \| \lambda_{\text{dp}} ^{\prime \min\{\beta +1,2\}}\right) .
\end{align*}
 Note that the constant hidden in the big $\displaystyle O$ is independent of $\displaystyle {\lambda_{\text{dp}}} $. If we take a sufficiently large constant $\displaystyle c_{d}$, we have \begin{align*}
{\lambda_{\text{dp}}} \geqslant {\lambda_{\text{dp}}'}/2 & \geqslant \Omega \left( \delta _{n}^{1/\min\{2,\beta +1\}}\right) .
\end{align*}
By Lemmas \ref{lemma:err_l_dp} and \ref{lemma:str_dp}, together with the discrepancy principle, 
\begin{align*}
\| \mathcal{T}\left( h_{{\lambda_{\text{dp}}} }^{*} -h_{0}\right) \| ^{2} & \leqslant 2\| \mathcal{T}(\hat{h}_{{\lambda_{\text{dp}}} } -h_{0}) \| ^{2} +2\| \mathcal{T}\left( h_{{\lambda_{\text{dp}}} }^{*} -\hat{h}_{{\lambda_{\text{dp}}} }\right) \| ^{2} \leqslant O( \delta _{n}) .
\end{align*}Let $\displaystyle p( t) ={\lambda_{\text{dp}}} /( t+{\lambda_{\text{dp}}} )$, then 
\begin{align*}
\| h_{{\lambda_{\text{dp}}} }^{*} -h_{0} \| ^{2} & =\| p\left(\mathcal{T}^{*}\mathcal{T}\right) h_{0} \| ^{2} =\| p\left(\mathcal{T}^{*}\mathcal{T}\right)\left(\mathcal{T}^{*}\mathcal{T}\right)^{\beta /2} w_{0} \| ^{2} =\| \left(\mathcal{T}^{*}\mathcal{T}\right)^{\beta /2} p\left(\mathcal{T}^{*}\mathcal{T}\right) w_{0} \| ^2,
\end{align*}where we use the $\displaystyle \beta $-source condition in the second equality. By the interpolation inequality \cite[Equation (2.49)]{engl1996regularization}, we have 
\begin{align*}
\| h_{0} -h_{{\lambda_{\text{dp}}} }^{*} \|  & \leqslant \left( \| w_{0} \| \sup _{t\in \left[ 0,\| \mathcal{T}^{*} \mathcal{T}\| \right]} p( t)\right)^{1/( 1+\beta )} \| \mathcal{T}\left( h_{0} -h_{{\lambda_{\text{dp}}} }^{*}\right) \| ^{\beta /( 1+\beta )}\\
 & \leqslant \| w_{0} \| ^{1/( 1+\beta )} \| \mathcal{T}\left( h_{0} -h_{{\lambda_{\text{dp}}} }^{*}\right) \| ^{\beta /( 1+\beta )}\\
 & \leqslant O\left( \delta _{n}^{\beta /(2( 1+\beta ))}\right) .
\end{align*}By Lemma \ref{lemma:str_dp}, we have 
\begin{align*}
\| \hat{h}_{{\lambda_{\text{dp}}} } -h_{{\lambda_{\text{dp}}} }^{*} \| ^{2} & \leqslant O\left( \delta _{n} /{\lambda_{\text{dp}}} +\frac{\| \mathcal{T}\left( h_{0} -h_{{\lambda_{\text{dp}}} }^{*}\right) \| ^{2}}{{\lambda_{\text{dp}}} }\right) =O( \delta _{n} /{\lambda_{\text{dp}}} ) =O\left( \delta _{n}^{\frac{\min\{\beta ,1\}}{1+\min\{\beta ,1\}}}\right) .
\end{align*}Therefore, 
\begin{align*}
\| h_{0} -\hat{h}_{{\lambda_{\text{dp}}} } \| ^{2} & \leqslant 2\left( \| \hat{h}_{{\lambda_{\text{dp}}} } -h_{{\lambda_{\text{dp}}} }^{*} \| ^{2} +\| h_{0} -h_{{\lambda_{\text{dp}}} }^{*} \| ^{2}\right) \leqslant O\left( \delta _{n}^{\frac{\min\{\beta ,1\}}{1+\min\{\beta ,1\}}}\right) .
\end{align*}
\end{proof}

\section{Proofs of \cref{sec:trae}}

\begin{lemma}\label{lemma:concentrate_l}
Suppose that \cref{asp:closeness} holds, we have, with probability $ 1-\xi $, 
\begin{align*}
\frac{1}{2} L( h) -O( \delta^2_{n}) \leqslant  & L_{n}( h) \leqslant \frac{5}{4} L( h) +O\left( \delta _{n}^{2}\right) .
\end{align*}
\end{lemma}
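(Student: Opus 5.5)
We prove both inequalities with the localized uniform concentration bound for the adversarial loss used in \cite{bennett2023source,foster2023orthogonal}. The approach is to compare, for each fixed $h\in\mathcal{H}$, the empirical objective $\Phi_n(h,f)=\mathbb{E}_n[2m(W;f)-2h(X)f(Z)-f(Z)^2]$ with its population counterpart $\Phi(h,f)$. Write $f_h:=\mathbb{E}[h_0(X)-h(X)\mid Z=\cdot]$. By \cref{asp:closeness}, $f_h\in\mathcal{F}$ and $L(h)=\Phi(h,f_h)=\|f_h\|^2$. Completing the square as in the display after \cref{asp:closeness} gives the identity
\[
\Phi(h,f)=L(h)-\|f-f_h\|^2 ,
\]
since $\mathbb{E}[m(W;f)]=\mathbb{E}[r_0f]$ and $r_0=\mathcal{T}h_0$. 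Because $\Phi$ is linear in $h$ apart from the $-f^2$ term, the centred process $(\mathbb{E}_n-\mathbb{E})[2m(W;f)-2h f-f^2]$ is controlled by the critical radii in \cref{asp:critical_radius}. The localized concentration inequality \cite[Lemma 14]{foster2023orthogonal}, applied to $\mathrm{star}(m\circ\mathcal{F})$, $\mathrm{star}(\mathcal{H}\cdot\mathcal{F})$ and $\mathrm{star}(\mathcal{F})$ with a union bound, yields the following with probability $1-\xi$, uniformly over $h\in\mathcal{H}$ and $f\in\mathcal{F}$:
\[
|\Phi_n(h,f)-\Phi(h,f)|\leqslant O\big(\delta_n\|f\|+\delta_n^2\big),\qquad |\mathbb{E}_n[f^2]-\|f\|^2|\leqslant \tfrac12\|f\|^2+O(\delta_n^2).
\]
The multiplier on $\|f\|$ comes from the boundedness of $h$ and $m$ together with \cref{asp:mean_sq_continuity}. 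The key point is that the fluctuation scales with $\|f\|$ only, not with $\|h\|$.

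\emph{Lower bound.} Plug the particular test function $f=f_h\in\mathcal{F}$ into the empirical maximum:
\[
L_n(h)\geqslant \Phi_n(h,f_h)\geqslant L(h)-O\big(\delta_n\|f_h\|+\delta_n^2\big)=L(h)-O\big(\delta_n\sqrt{L(h)}+\delta_n^2\big).
\]
The AM--GM step $O(\delta_n\sqrt{L})\leqslant \tfrac12 L+O(\delta_n^2)$ then gives $L_n(h)\geqslant \tfrac12 L(h)-O(\delta_n^2)$.

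\emph{Upper bound.} For an arbitrary $f\in\mathcal{F}$, use the identity above and concentration:
\[
\Phi_n(h,f)\leqslant L(h)-\|f-f_h\|^2+O\big(\delta_n\|f\|+\delta_n^2\big).
\]
The main obstacle is that the fluctuation is proportional to $\|f\|$, which a priori is not bounded by $L(h)$. We absorb it using $\|f\|\leqslant\|f-f_h\|+\sqrt{L(h)}$. This gives
\[
O(\delta_n\|f-f_h\|)\leqslant \|f-f_h\|^2+O(\delta_n^2),\qquad O(\delta_n\sqrt{L(h)})\leqslant \tfrac14 L(h)+O(\delta_n^2).
\]
The negative quadratic $-\|f-f_h\|^2$ cancels the first term, so $\Phi_n(h,f)\leqslant \tfrac54 L(h)+O(\delta_n^2)$ for every $f\in\mathcal{F}$. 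Taking the supremum over $f$ yields the upper bound.

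If the localization only applies to star-shaped sets centred at $f_h$ rather than $0$, we run the same argument on the differences $f-f_h$. Here $f_h\in\mathcal{F}$ and the condition $\tfrac54 f_h\in\mathcal{F}$ in \cref{asp:closeness} ensure that the needed rescaled differences stay within $\mathrm{star}(\mathcal{F}-\mathcal{F})$, which is presumably the source of the constant $\tfrac54$. Finally, $\delta_n\geqslant\tilde{\delta}_n(\xi)$ by \eqref{eq:delta_trae}, so all $\tilde{\delta}_n^2$ terms can be stated as $O(\delta_n^2)$.
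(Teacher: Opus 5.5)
Your proof is correct and follows the paper's argument: the lower bound is identical (plug in $f_h$, use $\|f_h\|^2=L(h)$, then AM--GM). For the upper bound, the paper absorbs $\delta_n\|f\|\leqslant\frac15\|f\|^2+5\delta_n^2$ into the quadratic and maximizes $2\langle f_h,f\rangle-\frac45\|f\|^2$, whose maximum $\frac54\|f_h\|^2$ is where the constant $\frac54$ actually comes from, not from any recentred localization as you guessed; your completed-square variant is equivalent, and it makes clear that only an upper bound on the maximum over $\mathcal{F}$ is needed, so the $\frac54 f_h\in\mathcal{F}$ part of \cref{asp:closeness} is not required for this lemma.
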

\begin{proof}
By \cite[Lemma 14]{bennett2023source}, let $ f_{h} =\mathcal{T}( h_{0} -h)$,
\begin{align*}
L( h) &=\max_{f\in \mathcal{F}}\mathbb{E}\left[ 2( m( W;f) -f( Z) h( X)) -f^{2}( Z)\right] \\
& =\mathbb{E}\left[ 2( m( W;f_{h}) -f_{h}( Z) h( X)) -f_{h}^{2}( Z)\right]\\
 & \leqslant \mathbb{E}_{n}\left[ 2( m( W;f_{h}) -f_{h}( Z) h( X)) -f_{h}^{2}( Z)\right] +O\left( \delta _{n}^{2} +\delta _{n} \| f_{h} \| \right)\\
 & \leqslant \max_{f\in \mathcal{F}}\mathbb{E}_{n}\left[ 2( m( W;f) -f( Z) h( X)) -f^{2}( Z)\right] +O\left( \delta _{n}^{2} +\delta _{n} \| f_{h} \| \right)\\
 & \leqslant L_{n}( h) +O\left( \delta _{n}^{2} +\delta _{n} \| f_{h} \| \right) .
\end{align*}
Using AM-GM inequality, we have 
\begin{align*}
L( h) & \leqslant 2L_{n}( h) +O\left( \delta _{n}^{2}\right) .
\end{align*}
Similarly, 
\begin{align*}
L_{n}( h)&  =\max_{f\in \mathcal{F}}\mathbb{E}_{n}\left[ 2( m( W;f) -f( Z) h( X)) -f^{2}( Z)\right] \\
& \leqslant \max_{f\in \mathcal{F}}\mathbb{E}\left[ 2( m( W;f) -f( Z) h( X)) -f^{2}( Z)\right] +O\left( \delta _{n}^{2} +\delta _{n} \| f \| \right)\\
 & \leqslant \max_{f\in \mathcal{F}}\mathbb{E}\left[ 2( m( W;f) -f( Z) h( X)) -\frac{4}{5} f^{2}( Z)\right] +O\left( \delta _{n}^{2}\right)\\
 & =\frac{5}{4} L( h) +O\left( \delta _{n}^{2}\right),
\end{align*}
where we use $\delta_n \|f\| \leqslant \frac{1}{5}\|f\|^2 + 5\delta_n^2$ in the second inequality. 
\end{proof}

\begin{lemma}\label{lemma:concentrate_r}
Under the assumption of \cref{thm:tikhonv_rate}, with probability $ 1-\xi $, we have for any $ \lambda \in ( 0,2], h\in\mathcal{H}$,
\begin{align*}
R^{\lambda }( h) -R^{\lambda }\left( h_{\lambda }^{*}\right) -\left( R_{n}^{\lambda }( h) -R_{n}^{\lambda }\left( h_{\lambda }^{*}\right)\right) & \leqslant L_{n}( h) +L\left( h_{\lambda }^{*}\right) /4+O\left( \delta _{n}^{2} +\lambda \delta _{n} \| h-h_{\lambda }^{*} \| \right) ,
\end{align*}
where $ R^{\lambda }( h) =\| \mathcal{T}( h-h_{0}) \| ^{2} +\lambda \| h\| ^{2}$ and 
$ R_{n}^\lambda( h)  =\max_{f\in \mathcal{F}}L_n(h) +\lambda \| h\| _{2,n}^{2} .$
\end{lemma}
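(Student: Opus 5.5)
We split the left-hand side into a loss part and a penalty part and treat them separately:
\begin{align*}
&\underbrace{\big(L(h)-L(h^*_\lambda)\big)-\big(L_n(h)-L_n(h^*_\lambda)\big)}_{(\mathrm{I})}\\
&\quad+\underbrace{\lambda\Big(\|h\|^2-\|h^*_\lambda\|^2-\big(\|h\|_{2,n}^2-\|h^*_\lambda\|_{2,n}^2\big)\Big)}_{(\mathrm{II})}.
\end{align*}
Under \cref{asp:closeness}, $L(h)=\|\mathcal{T}(h-h_0)\|^2$, so $R^\lambda$ as written agrees with the penalized population loss.

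\textbf{Term (II).} Write $\|h\|^2-\|h^*_\lambda\|^2=\|h-h^*_\lambda\|^2+2\langle h-h^*_\lambda,h^*_\lambda\rangle$, and expand the empirical version the same way. The difference is then the deviation $(\mathbb{E}-\mathbb{E}_n)$ of two functions:
\begin{itemize}
\item $(h-h^*_\lambda)^2$, which is bounded;
\item $(h-h^*_\lambda)h^*_\lambda$, which lies in a class built from $\mathcal{H}$.
\end{itemize}
Localized concentration for $\text{star}(\mathcal{H})$ (\cite[Lemma 14]{foster2023orthogonal}, with \cref{asp:critical_radius}) bounds each deviation by $O(\delta_n^2+\delta_n\|h-h^*_\lambda\|)$. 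Multiplying by $\lambda\le2$ gives $O(\delta_n^2+\lambda\delta_n\|h-h^*_\lambda\|)$.

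To hold uniformly in $\lambda$, we reuse the covering argument of \cref{lemma:err_r_dp}. We take a polynomially fine grid of $(0,2]$, use \cref{lemma:lip_h_lam} to pass from a grid point to a nearby $\lambda$, and apply a union bound at level $\xi/\tilde\delta_n(\xi)^{\max\{2,4/\beta\}}$. This is exactly why $\delta_n$ is defined through \eqref{eq:delta_trae}.

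\textbf{Term (I).} We do not try to get a two-sided localized bound. We simply drop favourable signs:
\begin{align*}
(\mathrm{I})\le L(h)-L(h^*_\lambda)-L_n(h)+L_n(h^*_\lambda)\le L(h)+L_n(h^*_\lambda),
\end{align*}
discarding $-L(h^*_\lambda)$ and $-L_n(h)$. Then:
\begin{itemize}
\item By the lower bound in \cref{lemma:concentrate_l}, $L(h)\le 2L_n(h)+O(\delta_n^2)$.
\item By the upper bound in \cref{lemma:concentrate_l}, $L_n(h^*_\lambda)\le\frac54L(h^*_\lambda)+O(\delta_n^2)$.
\end{itemize}
This crude bookkeeping yields constants $2L_n(h)+\frac54L(h^*_\lambda)$, larger than the stated $L_n(h)+\frac14L(h^*_\lambda)$.

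\textbf{Sharpening the constants (main obstacle).} To reach the stated constants, we compare the two sup-problems directly.

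Let $f_h=\mathcal{T}(h_0-h)\in\mathcal{F}$. This is the maximizer in $L(h)$, and \cref{asp:closeness} guarantees it lies in $\mathcal{F}$. Then
\begin{align*}
L(h)-L_n(h)\le(\mathbb{E}-\mathbb{E}_n)[\phi_h(f_h)],\qquad \phi_h(f)=2m(W;f)-2h(X)f(Z)-f(Z)^2 .
\end{align*}
The right-hand side is at most $O(\delta_n^2+\delta_n\|f_h\|)$. Using $\|f_h\|^2=L(h)$ and AM--GM, this becomes $\le \tfrac12 L(h)+O(\delta_n^2)$, and the lower bound of \cref{lemma:concentrate_l} turns $\tfrac12 L(h)$ into $L_n(h)+O(\delta_n^2)$.

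For the other piece, $L_n(h^*_\lambda)-L(h^*_\lambda)$, we take $\hat f$ to be the empirical maximizer. Then
\begin{align*}
L_n(h^*_\lambda)-L(h^*_\lambda)\le(\mathbb{E}_n-\mathbb{E})[\phi_{h^*_\lambda}(\hat f)]-\|\hat f-f_{h^*_\lambda}\|^2 .
\end{align*}
Here $(\mathbb{E}_n-\mathbb{E})[\phi_{h^*_\lambda}(\hat f)]\le O(\delta_n^2+\delta_n\|\hat f\|)$, and $\|\hat f\|\le\|\hat f-f_{h^*_\lambda}\|+\|f_{h^*_\lambda}\|$. AM--GM then gives $\le\tfrac14L(h^*_\lambda)+O(\delta_n^2)$. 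This is where the $\frac54$-scaling in \cref{asp:closeness} is needed, to absorb the cross terms.

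This careful AM--GM accounting is the step I expect to be hardest. If it fails, I would fall back to the cruder constants, which suffice for the downstream arguments.
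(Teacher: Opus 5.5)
Your proof is correct. The penalty term (II) is handled as in the paper: a fixed-$\lambda$ localized bound, then a grid of mesh $\tilde{\delta}_n^{2/\gamma}$, \cref{lemma:lip_h_lam}, and a union bound, which is exactly what \eqref{eq:delta_trae} pays for. One detail: passing from a grid point to $\lambda$ also needs control of $\|h^*_\lambda-h^*_{\lambda_i}\|_{2,n}$, which the paper gets from \cref{lemma:concentrate_l_diff}.

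The difference is in term (I), and there your ``main obstacle'' is self-inflicted. The paper never discards $-L_n(h)$ and $-L(h^*_\lambda)$; it keeps each paired with its counterpart, so \cref{lemma:concentrate_l} gives immediately
\begin{align*}
L(h)-L_n(h)\leqslant L_n(h)+O(\delta_n^2),\qquad L_n(h^*_\lambda)-L(h^*_\lambda)\leqslant \tfrac{1}{4}L(h^*_\lambda)+O(\delta_n^2).
\end{align*}
Adding these to the bound on (II) is the whole proof. Your sharpening step re-derives these two inequalities.

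The first you derive exactly as in the proof of \cref{lemma:concentrate_l}. The second is a genuinely different argument: you evaluate at the empirical maximizer, use $\mathbb{E}[\phi_h(f)]=\|f_h\|^2-\|f-f_h\|^2$ for $f\in\tilde{\mathcal{Q}}$, and let $-\|\hat f-f_{h^*_\lambda}\|^2$ absorb the cross term. This is valid and needs only $f_h\in\mathcal{F}$. So your remark that this is where the $\frac{5}{4}$-scaling of \cref{asp:closeness} is needed is mistaken; your route never uses it, and it even yields $\epsilon L(h^*_\lambda)+O(\delta_n^2/\epsilon)$ for any $\epsilon>0$. In the paper, that scaling appears only in the upper half of \cref{lemma:concentrate_l}.

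Your fallback claim is false: the crude constants $2L_n(h)+\frac{5}{4}L(h^*_\lambda)$ would not suffice downstream. In \cref{lemma:lambda_bd} one bounds
$$\|\mathcal{T}(h^*_{\lambda_{\text{dp}}}-h_0)\|^2\leqslant 2\|\mathcal{T}(\hat h_{\lambda_{\text{dp}}}-h_0)\|^2+2\|\mathcal{T}(\hat h_{\lambda_{\text{dp}}}-h^*_{\lambda_{\text{dp}}})\|^2 .$$
Via \cref{lemma:strong_convexity}, the last term returns $2c\,\|\mathcal{T}(h^*_{\lambda_{\text{dp}}}-h_0)\|^2$ on the right, where $c$ is the coefficient of $L(h^*_\lambda)$ in this lemma. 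The rearrangement requires $2c<1$, which holds for $c=\frac14$ but fails for $c=\frac54$. That is why the paper stresses this constant. The sharp coefficient is therefore essential, not cosmetic, though your main argument does deliver it.
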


\begin{proof}
By  \cref{lemma:concentrate_l}, let $ f_{h} =\mathcal{T}( h_{0} -h)$
\begin{align}\label{eq:le_con_r_con_l1}
L(h) & \leqslant 2L_n(h) +O\left( \delta _{n}^{2}\right) .
\end{align}
On the other hand, by \cref{lemma:concentrate_l}
\begin{align}\label{eq:le_con_r_con_l2}
L_n(h^*_\lambda) & \leqslant \frac{5}{4}\mathbb{E}\left[ \| \mathcal{T}\left( h_{\lambda }^{*} -h_{0}\right) \| ^{2}\right] +O\left( \delta _{n}^{2}\right) .
\end{align}
By \cite[Lemma 14]{foster2023orthogonal}, for a fixed $ \lambda $, with probability $ 1-\xi $, \begin{align*}
\| h\| ^{2} -\| h_{\lambda }^{*} \| ^{2} -\left( \| h\| _{2,n}^{2} -\| h_{\lambda }^{*} \| _{2,n}^{2}\right) & \leqslant O\left(\tilde{\delta }_{n} \| h-h_{\lambda }^{*} \| +\tilde{\delta }_{n}^{2}\right) ,
\end{align*}
where $ \tilde{\delta }_{n} =\Omega(\sqrt{\frac{\log\log n+\log( 1/\xi )}{n}})$. In the following, we use a covering argument to prove that this inequality holds uniformly for $ \lambda \in ( 0,2]$. Let $ \{\lambda _{i}\}_{i=1}^{N}$ be a $ \tilde{\delta }^{2/\gamma}_{n}$-covering of $ ( 0,2]$, where $\gamma = \min\{\beta/2,1\}$. Given $ \lambda \in ( 0,2]$ let $ \lambda _{i}$ be the closest point in the $ \epsilon $-net to $ \lambda $. Therefore, by \cref{lemma:lip_h_lam}, 
\begin{align*}
    \left( \| h-h_{\lambda }^{*} \| -\| h-h_{\lambda _{i}}^{*} \| \right) &\leqslant \|h^*_\lambda - h^*_{\lambda_i}\| \leqslant O\left(\tilde{\delta }^2_{n}\right)
\end{align*}
By  \cref{lemma:lip_h_lam} and \cref{lemma:concentrate_l_diff}, with probability $1-\xi$,
\begin{align*}
\left( \| h_{\lambda }^{*} \| ^{2} -\| h_{\lambda }^{*} \| _{2,n}^{2}\right) -\left( \| h_{\lambda _{i}}^{*} \| ^{2} -\| h_{\lambda _{i}}^{*} \| _{2,n}^{2}\right) & \leqslant O(\|h^*_\lambda - h^*_{\lambda_i}\| + \|h^*_\lambda - h^*_{\lambda_i}\|_{2,n})\\
&\leqslant O\left(\tilde{\delta }_{n}^{2}\right) .
\end{align*}
With probability $ 1-\xi /\tilde{\delta}^{2/\gamma}_{n}$, 
\begin{align*}
\| h\| ^{2} -\| h_{\lambda }^{*} \| ^{2} -\left( \| h\| _{2,n}^{2} -\| h_{\lambda }^{*} \| _{2,n}^{2}\right) & \leqslant \| h\| ^{2} -\| h_{\lambda _{i}}^{*} \| ^{2} -\left( \| h\| _{2,n}^{2} -\| h_{\lambda _{i}}^{*} \| _{2,n}^{2}\right) +O\left({\delta }_{n}^{2}\right)\\
 & \leqslant O\left({\delta }_{n} \| h-h_{\lambda _{i}}^{*} \| +{\delta }_{n}^{2}\right)\\
 & \leqslant O\left({\delta }_{n} \| h-h_{\lambda }^{*} \| +{\delta }_{n}^{2}\right) .
\end{align*}Therefore, we conclude that with probability $ 1-\xi $,
\begin{align}\label{eq:le_con_r_h_diff}
\| h\| ^{2} -\| h_{\lambda }^{*} \| ^{2} -\left( \| h\| _{2,n}^{2} -\| h_{\lambda }^{*} \| _{2,n}^{2}\right) & \leqslant O\left( \delta _{n} \| h-h_{\lambda }^{*} \| +\delta _{n}^{2}\right)
\end{align}
uniformly for all $ \lambda \in [ 0,2]$. Combining (\ref{eq:le_con_r_con_l1}), (\ref{eq:le_con_r_con_l2}) and (\ref{eq:le_con_r_h_diff}), we can get \begin{align*}
(R^{\lambda }( h) -R^{\lambda }\left( h_{\lambda }^{*}\right))-\left( R_{n}^{\lambda }( h) -R_{n}^{\lambda }\left( h_{\lambda }^{*}\right)\right) & \leqslant L_{n}( h) + L(h_\lambda^*) /4+O\left( \delta _{n}^{2} +\lambda \delta _{n} \| h-h_{\lambda }^{*} \| \right) .
\end{align*} 

\end{proof}

The next lemma is similar to \cite[Equation (10)]{bennett2023source}, but with a different constant in front of the term $ \| \mathcal{T}\left( h_{\lambda }^{*} -h_{0}\right)\|$. This constant is crucial for the later analysis because we upper bound $ \| \mathcal{T}\left( h_{\lambda }^{*} -h_{0}\right)\|$ by the left-hand side.

\begin{lemma}\label{lemma:strong_convexity}
Under the assumptions of \cref{thm:tikhonv_rate}, with probability $ 1-\xi $, we have \begin{align*}
\lambda \| \hat{h}_{\lambda } -h_{\lambda }^{*} \| ^{2} +\| \mathcal{T}\left(\hat{h}_{\lambda } -h_{\lambda }^{*}\right) \|^2  & \leqslant L_n(\hat{h}_\lambda) + \| \mathcal{T}\left( h_{\lambda }^{*} -h_{0}\right) \| ^{2} /4+O\left( \delta _{n}^{2} +\lambda \delta _{n} \| \hat{h}_{\lambda } -h_{\lambda }^{*} \|\right) .
\end{align*}
\end{lemma}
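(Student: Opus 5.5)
Mirroring the proof of \cref{lemma:str_dp}, we combine strong convexity of the population regularized risk with optimality of $\hat{h}_\lambda$ for the empirical risk. Under \cref{asp:closeness}, $R^\lambda(h)=\|\mathcal{T}(h-h_0)\|^2+\lambda\|h\|^2$ is a quadratic in $h$. Moreover, $h^*_\lambda$ minimizes it over $\tilde{\mathcal{H}}$, and by \cref{asp:closeness_deepiv} also over $\mathcal{H}$. Expanding the quadratic around $h^*_\lambda$ therefore gives the exact identity
\begin{align*}
R^\lambda(\hat{h}_\lambda)-R^\lambda(h^*_\lambda)=\lambda\|\hat{h}_\lambda-h^*_\lambda\|^2+\|\mathcal{T}(\hat{h}_\lambda-h^*_\lambda)\|^2 ,
\end{align*}
since the first-order term vanishes by the normal equation $\mathcal{T}^*\mathcal{T}(h^*_\lambda-h_0)+\lambda h^*_\lambda=0$. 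This identity turns the left-hand side of the claim into an excess population risk.

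Next we write
\begin{align*}
R^\lambda(\hat{h}_\lambda)-R^\lambda(h^*_\lambda)=\bigl(R^\lambda_n(\hat{h}_\lambda)-R^\lambda_n(h^*_\lambda)\bigr)+\Bigl[R^\lambda(\hat{h}_\lambda)-R^\lambda(h^*_\lambda)-\bigl(R^\lambda_n(\hat{h}_\lambda)-R^\lambda_n(h^*_\lambda)\bigr)\Bigr].
\end{align*}
The first bracket is $\leqslant 0$ because $\hat{h}_\lambda$ minimizes $R^\lambda_n$ over $\mathcal{H}$ and $h^*_\lambda\in\mathcal{H}$. For the second bracket we apply \cref{lemma:concentrate_r} with $h=\hat{h}_\lambda$. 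This is valid uniformly in $\lambda\in(0,2]$ and $h\in\mathcal{H}$, so in particular at the data-dependent $\hat{h}_\lambda$. It yields the bound
\begin{align*}
L_n(\hat{h}_\lambda)+L(h^*_\lambda)/4+O\bigl(\delta_n^2+\lambda\delta_n\|\hat{h}_\lambda-h^*_\lambda\|\bigr).
\end{align*}
Finally, under \cref{asp:closeness} we have $L(h^*_\lambda)=\|\mathcal{T}(h^*_\lambda-h_0)\|^2$, which gives exactly the stated inequality on the same $1-\xi$ event.

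The main care point is the constant $1/4$ in front of $L(h^*_\lambda)$. That constant comes from \cref{lemma:concentrate_r}, and it requires that the $L_n(h^*_\lambda)$ term there be handled via the $\tfrac54$ upper bound of \cref{lemma:concentrate_l} (which is why \cref{asp:closeness} requires $\tfrac54\mathbb{E}[h_0-h\mid Z]\in\mathcal{F}$), rather than by a cruder factor. Since we use \cref{lemma:concentrate_r} as stated, this is inherited. The only other point to check is that the $\lambda$-uniform covering in that lemma legitimately covers a data-dependent $\lambda$ and $\hat{h}_\lambda$, which it does because its conclusion holds simultaneously for all $\lambda$ and $h$.
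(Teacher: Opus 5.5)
Your proposal is correct and follows essentially the same route as the paper. You control the excess population risk $R^\lambda(\hat{h}_\lambda)-R^\lambda(h^*_\lambda)$ by strong convexity (your observation that this is an exact identity via the normal equation is a harmless sharpening), split it into the empirical excess risk, which is nonpositive by optimality of $\hat{h}_\lambda$ and $h^*_\lambda\in\mathcal{H}$, plus a deviation term bounded by \cref{lemma:concentrate_r} at $h=\hat{h}_\lambda$, and finally identify $L(h^*_\lambda)=\|\mathcal{T}(h^*_\lambda-h_0)\|^2$ under \cref{asp:closeness}.
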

\begin{proof}
By the strong convexity of $ R^{\lambda }( h)$, we have \begin{align*}
\lambda \| \hat{h}_{\lambda } -h_{\lambda }^{*} \| ^{2} +\| \mathcal{T}\left(\hat{h}_{\lambda } -h_{\lambda }^{*}\right) \| ^{2} & \leqslant R^{\lambda }(\hat{h}_{\lambda }) -R^{\lambda }\left( h_{\lambda }^{*}\right)\\
 & \leqslant R_{n}^{\lambda }(\hat{h}_{\lambda }) -R_{n}^{\lambda }\left( h_{\lambda }^{*}\right) + L_n(\hat{h}_\lambda)  \\
 &\quad\quad\quad\quad + \| \mathcal{T}\left( h_{\lambda }^{*} -h_{0}\right) \| ^{2} /4+O\left( \delta _{n}^{2} +\lambda \delta _{n} \| \hat{h}_{\lambda } -h_{\lambda }^{*} \| \right) ,
\end{align*}
where we use \cref{lemma:concentrate_r} in the second inequality. Recall that $ \hat{h}_{\lambda } =\arg\min_{h} R_{n}^{\lambda }( h)$, we have \begin{align*}
\lambda \| \hat{h}_{\lambda } -h_{\lambda }^{*} \| ^{2} +\| \mathcal{T}\left(\hat{h}_{\lambda } -h_{\lambda }^{*}\right) \| ^{2} & \leqslant L_n(\hat{h}_\lambda) + \| \mathcal{T}\left( h_{\lambda }^{*} -h_{0}\right) \| ^{2} /4+O\left( \delta _{n}^{2} +\lambda \delta _{n} \| \hat{h}_{\lambda } -h_{\lambda }^{*} \| \right) .
\end{align*}
\end{proof}

\begin{lemma}\label{lemma:uniform_wk_mtr_trae}
    Under the assumptions of \cref{thm:tikhonv_rate}, for all $\lambda \in (0,2]$, with probability at least $1-\xi$, 
    \begin{align*}
        L(\hat{h}_\lambda) \leqslant O\left( \delta^2 _{n} +\| w_{0} \| \lambda ^{ \min\{\beta +1,2\}}\right).
    \end{align*}
\end{lemma}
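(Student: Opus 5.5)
The plan mirrors the proof of \cref{lemma:l_bound} for RDIV. The new feature is that \cref{lemma:strong_convexity} bounds the variance term by $L_n(\hat h_\lambda)$ itself, so that term has to be absorbed. Under \cref{asp:closeness}, $L(h)=\|\mathcal{T}(h-h_0)\|^2$. The triangle inequality gives
\begin{align*}
L(\hat h_\lambda)\leqslant 2\|\mathcal{T}(\hat h_\lambda-h^*_\lambda)\|^2+2\|\mathcal{T}(h^*_\lambda-h_0)\|^2 .
\end{align*}
The bias term is handled exactly as before: by Lemma 5 in \cite{bennett2023source} and the $\beta$-source condition, $\|\mathcal{T}(h_0-h^*_\lambda)\|^2\leqslant O(\|w_0\|\lambda^{\min\{\beta+1,2\}})$ uniformly in $\lambda\in(0,2]$. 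This is a deterministic bound, so no union over $\lambda$ is needed for it.

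For the variance term I would bound $L_n(\hat h_\lambda)$ directly, rather than treating it as a term to be absorbed later. By optimality of $\hat h_\lambda$ for $R_n^\lambda$ and $h^*_\lambda\in\mathcal{H}$ (\cref{asp:closeness_deepiv}),
\begin{align*}
L_n(\hat h_\lambda)\leqslant L_n(h^*_\lambda)+\lambda\bigl(\|h^*_\lambda\|_{2,n}^2-\|\hat h_\lambda\|_{2,n}^2\bigr).
\end{align*}
\cref{lemma:concentrate_l} gives $L_n(h^*_\lambda)\leqslant \tfrac54\|\mathcal{T}(h^*_\lambda-h_0)\|^2+O(\delta_n^2)$. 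For the penalty difference, the uniform-in-$\lambda$ concentration \eqref{eq:le_con_r_h_diff} from the proof of \cref{lemma:concentrate_r} converts empirical norms to population norms at cost $O(\lambda\delta_n\|\hat h_\lambda-h^*_\lambda\|+\lambda\delta_n^2)$. The remaining quantity is $\lambda(\|h^*_\lambda\|^2-\|\hat h_\lambda\|^2)$. I would write this as $-\lambda\|\hat h_\lambda-h^*_\lambda\|^2-2\lambda\langle h^*_\lambda,\hat h_\lambda-h^*_\lambda\rangle$. The first-order condition for $h^*_\lambda$ gives $\lambda h^*_\lambda=\mathcal{T}^*\mathcal{T}(h_0-h^*_\lambda)$, so the cross term equals $-2\langle\mathcal{T}(h_0-h^*_\lambda),\mathcal{T}(\hat h_\lambda-h^*_\lambda)\rangle$. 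I would bound this by $\tfrac18\|\mathcal{T}(\hat h_\lambda-h^*_\lambda)\|^2+8\|\mathcal{T}(h_0-h^*_\lambda)\|^2$.

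Collecting terms, write $A=\lambda\|\hat h_\lambda-h^*_\lambda\|^2$, $B=\|\mathcal{T}(\hat h_\lambda-h^*_\lambda)\|^2$ and $b=\|\mathcal{T}(h^*_\lambda-h_0)\|^2$. This gives
\begin{align*}
L_n(\hat h_\lambda)\leqslant -A+\tfrac18 B+O(b+\delta_n^2+\lambda\delta_n\|\hat h_\lambda-h^*_\lambda\|).
\end{align*}
Plug this into \cref{lemma:strong_convexity}, which says $A+B\leqslant L_n(\hat h_\lambda)+b/4+O(\delta_n^2+\lambda\delta_n\|\hat h_\lambda-h^*_\lambda\|)$. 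The $A$ terms cancel, or at worst help, and we obtain $\tfrac78B\leqslant O(b+\delta_n^2+\lambda\delta_n\|\hat h_\lambda-h^*_\lambda\|)$.

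The main obstacle is the residual $\lambda\delta_n\|\hat h_\lambda-h^*_\lambda\|$ once $A$ has been spent in the cancellation. AM--GM gives $\lambda\delta_n\|\cdot\|\leqslant \tfrac{\lambda}{2}\|\cdot\|^2+\lambda\delta_n^2/2$. Since $\lambda\leqslant2$, this is $O(\delta_n^2)$ provided a fraction of $A$ is kept on the left. So I would carry the constants so that only, say, $\tfrac12A$ is used in the cancellation, by using Young's inequality with weight $\tfrac12$ in the cross-term step. Failing that, I would fall back on the crude bound $\|\hat h_\lambda-h^*_\lambda\|=O(1)$ from boundedness together with a sharper version of \eqref{eq:le_con_r_h_diff}.

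All the probabilistic statements invoked (\cref{lemma:concentrate_l}, \eqref{eq:le_con_r_h_diff}, \cref{lemma:strong_convexity}) already hold uniformly over $\lambda\in(0,2]$ on a single event of probability $1-\xi$, by the covering argument and the definition \eqref{eq:delta_trae}. The resulting bound $B\leqslant O(\delta_n^2+\|w_0\|\lambda^{\min\{\beta+1,2\}})$ therefore holds simultaneously for all $\lambda$, and adding the bias bound from the first step gives the claim.
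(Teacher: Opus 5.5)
Your argument is correct, but it is organized differently from the paper's proof.

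\textbf{The paper's route.} The paper never passes through $L_n(\hat h_\lambda)$ or \cref{lemma:strong_convexity}. It reruns the argument of Theorem 4 in \cite{bennett2023source}: localized concentration is applied at the specific critic $f_{\hat h_\lambda}=\mathcal{T}(h_0-\hat h_\lambda)\in\mathcal{F}$, and at the empirical supremum for $h^*_\lambda$. This gives $L(\hat h_\lambda)\leqslant 2L(h^*_\lambda)+O(\delta_n\|f_{\hat h_\lambda}\|+\delta_n^2)+\lambda(\|h^*_\lambda\|_{2,n}^2-\|\hat h_\lambda\|_{2,n}^2)$. The concentration error there is additive, so the empirical penalty difference enters with coefficient one. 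It therefore cancels against $\lambda(\|\hat h_\lambda\|^2-\|h^*_\lambda\|^2)$ inside $R^\lambda(\hat h_\lambda)-R^\lambda(h^*_\lambda)\geqslant A+B$, up to the norm-concentration error, and AM--GM absorbs the leftovers.

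\textbf{Your route.} You compose the multiplicative comparison of \cref{lemma:concentrate_l} with \cref{lemma:strong_convexity}. Because of the factor $2$ in $L\leqslant 2L_n+O(\delta_n^2)$, a second copy of the penalty difference appears and does not cancel. You dispose of the surplus $\lambda(\|h^*_\lambda\|^2-\|\hat h_\lambda\|^2)$ exactly via the normal equation $\lambda h^*_\lambda=\mathcal{T}^*\mathcal{T}(h_0-h^*_\lambda)$. This is legitimate: $h^*_\lambda$ minimizes over the linear space $\tilde{\mathcal{H}}$ and $\hat h_\lambda-h^*_\lambda\in\tilde{\mathcal{H}}$. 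Even the variational inequality over a convex class would give the direction you need.

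\textbf{Comparison.} Your route buys modularity: it reuses lemmas already proved instead of redoing the critic-level concentration. The paper's route is self-contained and needs only the strong-convexity inequality, not an explicit first-order identity.

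\textbf{Your ``main obstacle'' is not one.} When you plug your bound on $L_n(\hat h_\lambda)$ into \cref{lemma:strong_convexity}, the $A$ terms add rather than cancel. You get
\begin{align*}
2A+\tfrac78 B\leqslant C\bigl(b+\delta_n^2+\lambda\delta_n\|\hat h_\lambda-h^*_\lambda\|\bigr),
\end{align*}
so no part of $A$ has been spent. The bound $C\lambda\delta_n\|\hat h_\lambda-h^*_\lambda\|\leqslant A+C^2\lambda\delta_n^2/4$, together with $\lambda\leqslant 2$, closes the argument immediately. No reweighting of Young's inequality is needed.

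This matters because the crude-bound fallback would not have worked. With only $\|\hat h_\lambda-h^*_\lambda\|=O(1)$, the residual is $O(\lambda\delta_n)$. That is not $O(\delta_n^2)$ once $\lambda\gg\delta_n$, and the lemma is claimed for all $\lambda\in(0,2]$.
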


\begin{proof}
    We use a similar proof as Theorem 4 in \cite{bennett2023source}. Let $\displaystyle f_{h} =\mathcal{T}( h_{0} -h)$, then
\begin{align*}
L(\hat{h}_{\lambda }) & =\mathbb{E}\left[ 2( m( W;f_{\hat{h}_{\lambda }}) -h( X) f_{\hat{h}_{\lambda }}( Z)) -f_{\hat{h}_{\lambda }}( Z)^{2}\right]\\
 & \leqslant \mathbb{E}_{n}\left[ 2( m( W;f_{\hat{h}_{\lambda }}) -h( X) f_{\hat{h}_{\lambda }}( Z)) -f_{\hat{h}_{\lambda }}( Z)^{2}\right] +O\left( \delta _{n} \| f_{\hat{h}_{\lambda }} \| +\tilde{\delta }_{n}^{2}\right)\\
 & \leqslant L_{n}( h) +O\left( \delta _{n} \| f_{h} \| +\tilde{\delta }_{n}^{2}\right)\\
 & \leqslant \sup _{f\in \mathcal{F}}\mathbb{E}_{n}\left[ 2\left( m( W;f) -h_{\lambda }^{*}( X) f( Z)\right) -f( Z)^{2}\right] +O\left( \delta _{n} \| f_{\hat{h}_{\lambda }} \| +\tilde{\delta }_{n}^{2}\right) +\lambda \left( \| h_{\lambda }^{*} \| _{2,n}^{2} -\| \hat{h}_{\lambda } \| _{2,n}^{2}\right)\\
 & \leqslant \sup _{f\in \mathcal{F}}\mathbb{E}\left[ 2\left( m( W;f) -h_{\lambda }^{*}( X) f( Z)\right) -f( Z)^{2}\right] +O\left( \delta _{n} \| f_{\hat{h}_{\lambda }} \| +\tilde{\delta }_{n}^{2}\right) +\lambda \left( \| h_{\lambda }^{*} \| _{2,n}^{2} -\| \hat{h}_{\lambda } \| _{2,n}^{2}\right)\\
 & =L\left( h_{\lambda }^{*}\right) +O\left( \delta _{n} \| f_{\hat{h}_{\lambda }} \| +\delta _{n} \| f_{h_{\lambda }^{*}} \| +\tilde{\delta }_{n}^{2}\right) +\lambda \left( \| h_{\lambda }^{*} \| _{2,n}^{2} -\| \hat{h}_{\lambda } \| _{2,n}^{2}\right)\\
 & \leqslant 2L\left( h_{\lambda }^{*}\right) +O\left( \delta _{n} \| f_{\hat{h}_{\lambda }} \| +\tilde{\delta }_{n}^{2}\right) +\lambda \left( \| h_{\lambda }^{*} \| _{2,n}^{2} -\| \hat{h}_{\lambda } \| _{2,n}^{2}\right)
\end{align*}
where we use Lemma 29 in \cite{foster2023orthogonal} in the first and third inequality and the optimality of $\displaystyle \hat{h}_{\lambda }$ in the fourth inequality. By strong convexity, we have 
\begin{align*}
\lambda \| \hat{h}_{\lambda } -h_{\lambda }^{*} \| ^{2} +\| \mathcal{T}\left(\hat{h}_{\lambda } -h_{\lambda }^{*}\right) \| ^{2} & \leqslant R(\hat{h}_{\lambda }) -R\left( h_{\lambda }^{*}\right)\\
 & \leqslant L\left( h_{\lambda }^{*}\right) +O\left( \delta _{n} \| f_{\hat{h}_{\lambda }} \| +\tilde{\delta }_{n}^{2}\right)\\
 & \ \ \ \ \ \ \ \ \ \ \ \ \ \ \ \ +\lambda \left( \| \hat{h}_{\lambda } \| ^{2} -\| h_{\lambda }^{*} \| ^{2} +\| h_{\lambda }^{*} \| _{2,n}^{2} -\| \hat{h}_{\lambda } \| _{2,n}^{2}\right) .
\end{align*}By the same argument in Lemma \ref{lemma:concentrate_r}, with probability at least $\displaystyle 1-\xi $,
\begin{align*}
\| \hat{h}_{\lambda } \| ^{2} -\| h_{\lambda }^{*} \| ^{2} +\| h_{\lambda }^{*} \| _{2,n}^{2} -\| \hat{h}_{\lambda } \| _{2,n}^{2} & \leqslant O\left( \delta _{n}^{2} +\delta _{n} \| \hat{h}_{\lambda } -h_{\lambda }^{*} \| \right) .
\end{align*}Therefore, 
\begin{align*}
\lambda \| \hat{h}_{\lambda } -h_{\lambda }^{*} \| ^{2} +\| \mathcal{T}\left(\hat{h}_{\lambda } -h_{\lambda }^{*}\right) \| ^{2} & \leqslant L\left( h_{\lambda }^{*}\right) +O\left(\tilde{\delta }_{n} \| \mathcal{T}(\hat{h}_{\lambda } -h_{0}) \| \right) +O\left( \delta _{n}^{2} +\lambda \delta _{n} \| \hat{h}_{\lambda } -h_{\lambda }^{*} \| \right)\\
 & \leqslant 2L\left( h_{\lambda }^{*}\right) +O\left(\tilde{\delta }_{n} \| \mathcal{T}\left(\hat{h}_{\lambda } -h_{\lambda }^{*}\right) \| \right) +O\left( \delta _{n}^{2} +\lambda \delta _{n} \| \hat{h}_{\lambda } -h_{\lambda }^{*} \| \right)
\end{align*}By AM-GM inequality, 
\begin{align*}
\frac{\lambda }{2} \| \hat{h}_{\lambda } -h_{\lambda }^{*} \| ^{2} +\frac{1}{2} \| \mathcal{T}\left(\hat{h}_{\lambda } -h_{\lambda }^{*}\right) \| ^{2} & \leqslant 2L\left( h_{\lambda }^{*}\right) +O\left( \delta _{n}^{2}\right) .
\end{align*}
By Lemma 5 in \cite{bennett2023source}, 
\begin{align*}
\frac{\lambda }{2} \| \hat{h}_{\lambda } -h_{\lambda }^{*} \| ^{2} +\frac{1}{2} \| \mathcal{T}\left(\hat{h}_{\lambda } -h_{\lambda }^{*}\right) \| ^{2} & \leqslant O\left( \delta _{n}^{2} +\| w_{0} \| \lambda ^{\min\{\beta +1,2\}}\right) .
\end{align*}Thus, we get 
\begin{align*}
\| \mathcal{T}\left(\hat{h}_{\lambda } -h_{\lambda }^{*}\right) \| ^{2} & \leqslant O\left( \delta _{n}^{2} +\| w_{0} \| \lambda ^{\min\{\beta +1,2\}}\right) .
\end{align*}
\end{proof}

The next lemma gives bounds on the regularization parameter ${\lambda_{\text{dp}}}$ and the projected bias $ \|\mathcal{T}\left( h_{{\lambda_{\text{dp}}} }^{*} -h_{0}\right)\| $. 
\begin{lemma}\label{lemma:lambda_bd}
Suppose that the assumptions of \cref{thm:tikhonv_rate} hold and $ \hat{h}_{{\lambda_{\text{dp}}} }$ satisfies
\begin{align*}
L_{n}(\hat{h}_{{\lambda_{\text{dp}}} }) & \leqslant c_{d} \delta _{n}^{2} \leqslant L_{n}(\hat{h}_{{\lambda_{\text{dp}}'} }) ,\quad {\lambda_{\text{dp}}'} \in [ {\lambda_{\text{dp}}} ,2{\lambda_{\text{dp}}} ],
\end{align*}
Then the following two conclusions hold. 
\begin{enumerate}
    \item The projected bias is bounded by the following inequality. \begin{align*}
\| \mathcal{T}\left( h_{{\lambda_{\text{dp}}} }^{*} -h_{0}\right) \| ^{2} & \leqslant O\left( \delta _{n}^{2} +{\lambda_{\text{dp}}} \delta _{n} \| \hat{h}_{{\lambda_{\text{dp}}} } -h_{{\lambda_{\text{dp}}} }^{*} \| \right) .
\end{align*} 
\item The regularization parameter $ {\lambda_{\text{dp}}} $ is bounded.\begin{align*}
\Omega \left( \delta _{n}^{2/\min\{2,\beta +1\}}\right) \leqslant {\lambda_{\text{dp}}}  & \leqslant O( \delta _{n}) .
\end{align*}
\end{enumerate}
\end{lemma}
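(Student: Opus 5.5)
For part 1, I would begin from \cref{lemma:strong_convexity} at $\lambda={\lambda_{\text{dp}}}$ and feed in the left half of the discrepancy principle, $L_n(\hat h_{{\lambda_{\text{dp}}}})\leqslant c_d\delta_n^2$. This gives
\begin{align*}
{\lambda_{\text{dp}}}\|\hat h_{{\lambda_{\text{dp}}}}-h^*_{{\lambda_{\text{dp}}}}\|^2+\|\mathcal{T}(\hat h_{{\lambda_{\text{dp}}}}-h^*_{{\lambda_{\text{dp}}}})\|^2\leqslant c_d\delta_n^2+\tfrac14\|\mathcal{T}(h^*_{{\lambda_{\text{dp}}}}-h_0)\|^2+O\big(\delta_n^2+{\lambda_{\text{dp}}}\delta_n\|\hat h_{{\lambda_{\text{dp}}}}-h^*_{{\lambda_{\text{dp}}}}\|\big).
\end{align*}
Next I would write
\begin{align*}
\|\mathcal{T}(h^*_{{\lambda_{\text{dp}}}}-h_0)\|^2\leqslant 2\|\mathcal{T}(\hat h_{{\lambda_{\text{dp}}}}-h^*_{{\lambda_{\text{dp}}}})\|^2+2\|\mathcal{T}(\hat h_{{\lambda_{\text{dp}}}}-h_0)\|^2 .
\end{align*}
The last term is $L(\hat h_{{\lambda_{\text{dp}}}})$, which is at most $2L_n(\hat h_{{\lambda_{\text{dp}}}})+O(\delta_n^2)=O(\delta_n^2)$ by \cref{lemma:concentrate_l}.

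Substituting, the bias enters with coefficient $2\cdot\tfrac14=\tfrac12<1$, so it can be absorbed into the left side. This is exactly why the constant $1/4$ in \cref{lemma:strong_convexity} matters. The result is $\|\mathcal{T}(h^*_{{\lambda_{\text{dp}}}}-h_0)\|^2\leqslant O(\delta_n^2+{\lambda_{\text{dp}}}\delta_n\|\hat h_{{\lambda_{\text{dp}}}}-h^*_{{\lambda_{\text{dp}}}}\|)$. As a by-product, the same inequality bounds ${\lambda_{\text{dp}}}\|\hat h-h^*\|^2$ by the same right-hand side. By AM-GM this gives ${\lambda_{\text{dp}}}\|\hat h-h^*\|^2=O(\delta_n^2)$, and hence ${\lambda_{\text{dp}}}\delta_n\|\hat h-h^*\|\leqslant O(\delta_n^2\sqrt{{\lambda_{\text{dp}}}})$, which I keep for later.

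For the lower bound in part 2, I would mirror the RDIV argument. By the right half of the discrepancy principle and \cref{lemma:concentrate_l}, $c_d\delta_n^2\leqslant L_n(\hat h_{{\lambda_{\text{dp}}'}})\leqslant\tfrac54L(\hat h_{{\lambda_{\text{dp}}'}})+O(\delta_n^2)$. \Cref{lemma:uniform_wk_mtr_trae} then bounds this by $O(\delta_n^2+\|w_0\|{\lambda_{\text{dp}}'}^{\min\{\beta+1,2\}})$, with constants uniform in $\lambda$. Taking $c_d$ larger than the hidden constants forces ${\lambda_{\text{dp}}'}^{\min\{\beta+1,2\}}\gtrsim\delta_n^2$, so ${\lambda_{\text{dp}}}\geqslant{\lambda_{\text{dp}}'}/2=\Omega(\delta_n^{2/\min\{2,\beta+1\}})$.

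For the upper bound ${\lambda_{\text{dp}}}\leqslant O(\delta_n)$, I would use a lower bound on the bias of Tikhonov regularization. The natural tool is \cref{lemma:lowerbound}, which is referenced in the RDIV proof and gives $L(h^*_\lambda)\geqslant\Omega(\lambda^2)$ (with a constant depending on $h_0\neq0$, e.g.\ via the spectral representation $\|\mathcal{T}(h^*_\lambda-h_0)\|^2=\sum\sigma_i^2\lambda^2/(\sigma_i^2+\lambda)^2\langle h_0,v_i\rangle^2\geqslant c\lambda^2$ for $\lambda\leqslant2$). Combining this with part 1 and the by-product bound gives $\lambda_{\text{dp}}^2\lesssim\delta_n^2+\delta_n^2\sqrt{{\lambda_{\text{dp}}}}\lesssim\delta_n^2$, since ${\lambda_{\text{dp}}}\leqslant2$. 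Hence ${\lambda_{\text{dp}}}=O(\delta_n)$.

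I expect the main obstacle to be the absorption step in part 1. The coefficient in front of $\|\mathcal{T}(h^*-h_0)\|^2$ must stay strictly below one after the triangle inequality, which relies on the $5/4$ closeness in \cref{asp:closeness} through \cref{lemma:concentrate_l}. Verifying the $\lambda$-uniformity of the hidden constants is a secondary concern.
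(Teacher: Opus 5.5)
Your proof is correct. Part 1 and the lower bound in part 2 follow the paper's argument essentially step for step: the triangle inequality on the bias, the left discrepancy inequality plus \cref{lemma:concentrate_l} to get $L(\hat h_{\lambda_{\text{dp}}})=O(\delta_n^2)$, absorbing the coefficient $\tfrac12$, and then \cref{lemma:uniform_wk_mtr_trae} at $\lambda_{\text{dp}}'$ with $c_d$ taken large.

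You differ from the paper only in how you control the cross term $\lambda_{\text{dp}}\delta_n\|\hat h_{\lambda_{\text{dp}}}-h^*_{\lambda_{\text{dp}}}\|$ in the upper bound. The paper combines \cref{lemma:lowerbound} with part 1 and writes $\lambda_{\text{dp}}\lesssim\delta_n+(\lambda_{\text{dp}}\delta_n\|\hat h-h^*\|)^{1/2}\leqslant\tfrac12\lambda_{\text{dp}}+O(\delta_n+\delta_n\|\hat h-h^*\|)$. It then uses the a.s.\ boundedness of $\mathcal{H}$ from \cref{asp:critical_radius} to treat $\|\hat h-h^*\|$ as $O(1)$.

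You instead extract $\lambda_{\text{dp}}\|\hat h-h^*\|^2=O(\delta_n^2)$ from the same strong-convexity inequality, which gives $\lambda_{\text{dp}}\delta_n\|\hat h-h^*\|=O(\delta_n^2\sqrt{\lambda_{\text{dp}}})$. Your AM--GM step is valid because $\lambda_{\text{dp}}\leqslant 2$.

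Your route buys two things. It does not use the uniform bound on $\mathcal{H}$ for this step. It also makes part 1 collapse immediately to $\|\mathcal{T}(h^*_{\lambda_{\text{dp}}}-h_0)\|^2=O(\delta_n^2)$. The paper only recovers that bound later, in the proof of \cref{thm:tikhonv_rate}, by combining $\lambda_{\text{dp}}=O(\delta_n)$ with boundedness.

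The paper's route is a one-line shortcut, since boundedness is already assumed. Your variance bound is the same one the paper invokes afterwards in that theorem's proof, so you are essentially front-loading it.
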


\begin{proof}
\begin{enumerate}

\item  For the upper bound, \begin{align*}
\| \mathcal{T}\left( h_{{\lambda_{\text{dp}}} }^{*} -h_{0}\right) \| ^{2} & \leqslant 2\| \mathcal{T}(\hat{h}_{{\lambda_{\text{dp}}} } -h_{0}) \| ^{2} +2\| \mathcal{T}\left( h_{{\lambda_{\text{dp}}} }^{*} -\hat{h}_{{\lambda_{\text{dp}}} }\right) \| ^{2}\\
 & \leqslant \| \mathcal{T}\left( h_{{\lambda_{\text{dp}}} }^{*} -h_{0}\right) \| ^{2} /2+O\left( \delta _{n}^{2} +{\lambda_{\text{dp}}} \delta _{n} \| \hat{h}_{{\lambda_{\text{dp}}} } -h_{{\lambda_{\text{dp}}} }^{*} \| \right)
\end{align*}where we use \cref{lemma:strong_convexity} and the discrepancy principle in the second inequality. Rearranging the terms, we obtain
\begin{align*}
\| \mathcal{T}\left( h_{{\lambda_{\text{dp}}} }^{*} -h_{0}\right) \| ^{2} & \leqslant O\left( \delta _{n}^{2} +{\lambda_{\text{dp}}} \delta _{n} \| \hat{h}_{{\lambda_{\text{dp}}} } -h_{{\lambda_{\text{dp}}} }^{*} \| \right) .
\end{align*}

 \item  For the lower bound, by \cref{lemma:uniform_wk_mtr_trae},\begin{align*}
c_{d} \delta _{n}^{2} /3\leqslant L(\hat{h}_{{\lambda_{\text{dp}}'}}) & \leqslant O\left( \delta _{n}^{2} +\| w_{0} \| {\lambda_{\text{dp}}^{\prime \min\{\beta +1,2\}}} \right) .
\end{align*}
 Note that the constant hidden in the big $ O$ is independent of $ {\lambda_{\text{dp}}} $. If we take a sufficiently large constant $ c_{d}$, we have \begin{align*}
{\lambda_{\text{dp}}} \geqslant {\lambda_{\text{dp}}'}/2 & \geqslant \Omega \left( \delta _{n}^{2/\min\{2,\beta +1\}}\right) .
\end{align*}
By \cref{lemma:lowerbound}, we have \begin{align*}
\Omega \left( {\lambda_{\text{dp}}} ^{2}\right) & \leqslant \| \mathcal{T}\left( h_{{\lambda_{\text{dp}}} }^{*} -h_{0}\right) \| ^{2} \leqslant O\left( \delta _{n}^{2} +{\lambda_{\text{dp}}} \delta _{n} \| \hat{h}_{{\lambda_{\text{dp}}} } -h_{{\lambda_{\text{dp}}} }^{*} \| \right) .
\end{align*}
We get \begin{align*}
{\lambda_{\text{dp}}}  & \leqslant O\left( \delta _{n}^{2} +{\lambda_{\text{dp}}} \delta _{n} \| \hat{h}_{{\lambda_{\text{dp}}} } -h_{{\lambda_{\text{dp}}} }^{*} \| \right)^{1/2} .\\
 & \leqslant O\left( \delta _{n} +\left( {\lambda_{\text{dp}}} \delta _{n} \| \hat{h}_{{\lambda_{\text{dp}}} } -h_{{\lambda_{\text{dp}}} }^{*} \| \right)^{1/2}\right)\\
 & \leqslant \frac{1}{2} {\lambda_{\text{dp}}} +O\left( \delta _{n} +\delta _{n} \| \hat{h}_{{\lambda_{\text{dp}}} } -h_{{\lambda_{\text{dp}}} }^{*} \| \right)\\
 & \leqslant \frac{1}{2} {\lambda_{\text{dp}}} +O( \delta _{n}) .
\end{align*}
Therefore, $ {\lambda_{\text{dp}}} \leqslant O( \delta _{n})$. 

\end{enumerate}
\end{proof}
We are now ready to prove \cref{prop:exist_lambda} and \cref{thm:tikhonv_rate} using the previous lemmas. 

\begin{proof}[Proof of \cref{prop:exist_lambda}]
First, if $ \lambda _{1} \leqslant O(n^{-2/( 1+\min\{1,\beta \})})$, by \cref{lemma:concentrate_l} and \cref{lemma:uniform_wk_mtr_trae}, with probability at least $1-\xi$, 
\begin{align*}
L_{n}(\hat{h}_{\lambda _{1}}) & \leqslant \frac{5}{4} L(\hat{h}_{\lambda _{1}}) +O\left( \delta _{n}^{2}\right) =O\left( \delta _{n}^{2}\right) .
\end{align*}
Therefore, there exists $\lambda_1 = 2 \rho^n $ such that $ L_n(\hat{h}_{\lambda_1}) \leqslant c_d \delta_n^2 $. For the lower bound, we take $ \lambda _{2} =2$ and prove it for sufficiently large $ n$. By Lemma \ref{lemma:lowerbound}, 
\begin{align}
L\left( h_{\lambda _{2}}^{*}\right) & \geqslant \Omega \left( \lambda _{2}^{2}\right) =\Omega ( 1) . \label{eq:lb_l}
\end{align}
Next, by strong convexity of $R^\lambda$, we have 
\begin{align*}
\lambda \| \hat{h}_{\lambda _{2}} -h_{\lambda _{2}}^{*} \| ^{2} +\| \mathcal{T}\left(\hat{h}_{\lambda _{2}} -h_{\lambda _{2}}^{*}\right) \| ^{2} & \leqslant R^{\lambda }(\hat{h}_{\lambda _{2}}) -R^{\lambda }\left( h_{\lambda _{2}}^{*}\right)\\
 & =R_{n}^{\lambda _{2}}(\hat{h}_{\lambda _{2}}) -R_{n}^{\lambda _{2}}\left( h_{\lambda _{2}}^{*}\right) +\left( R^{\lambda _{2}}(\hat{h}_{\lambda _{2}}) -R^{\lambda _{2}}\left( h_{\lambda _{2}}^{*}\right)\right) \\
 & \quad\quad\quad\quad\quad\quad -\left( R_{n}^{\lambda _{2}}(\hat{h}_{\lambda _{2}}) -R_{n}^{\lambda _{2}}\left( h_{\lambda _{2}}^{*}\right)\right)\\
 & \leqslant \left( R^{\lambda _{2}}(\hat{h}_{\lambda _{2}}) -R^{\lambda _{2}}\left( h_{\lambda _{2}}^{*}\right)\right) -\left( R_{n}^{\lambda _{2}}(\hat{h}_{\lambda _{2}}) -R_{n}^{\lambda _{2}}\left( h_{\lambda _{2}}^{*}\right)\right) ,
\end{align*}
where we use $ \hat{h}_{\lambda _{2}} =\arg\min_{h\in \mathcal{H}} R_{n}^{\lambda _{2}}( h)$ in the second inequality. By \cite[Lemma 3]{bennett2023source}, with probability at least $1-\xi$, 
\begin{align*}
R^{\lambda _{2}}(\hat{h}_{\lambda _{2}}) -R^{\lambda _{2}}\left( h_{\lambda _{2}}^{*}\right) -\left( R_{n}^{\lambda _{2}}(\hat{h}_{\lambda _{2}}) -R_{n}^{\lambda _{2}}\left( h_{\lambda _{2}}^{*}\right)\right) & \leqslant O( \delta _{n}) .
\end{align*}Thus, 
\begin{align*}
\| \mathcal{T}\left(\hat{h}_{\lambda _{2}} -h_{\lambda _{2}}^{*}\right) \| ^{2} & \leqslant O( \delta _{n}) .
\end{align*}
Combining with (\ref{eq:lb_l}), \cref{lemma:concentrate_l} and \cref{lemma:strong_convexity}, we get 
\begin{align*}
L_{n}(\hat{h}_{\lambda _{2}}) \geqslant \frac{1}{2} L(\hat{h}_{\lambda _{2}}) -O( \delta _{n}) & \geqslant \frac{1}{4} L\left( h_{\lambda _{2}}^{*}\right) -\frac{1}{2} \| \mathcal{T}\left(\hat{h}_{\lambda _{2}} -h_{\lambda _{2}}^{*}\right) \| ^{2} -O( \delta _{n}) \geqslant \Omega ( 1) -O( \delta _{n}) .
\end{align*}Therefore, for sufficiently large $ n$, we have 
\begin{align*}
L_{n}(\hat{h}_{\lambda _{2}}) & \geqslant \Omega ( 1) -O( \delta _{n}) \geqslant c_d\delta _{n}^{2} .
\end{align*}

Let $k^* = \min\{k\in\mathbb{N}: {L}_n(\hat{h}_{2\rho^k}) \leqslant c_d \delta_n^2\} \leqslant O(\log(n))$. By the previous argument, we have $0<k^*<\infty$. Therefore, with probability at least $1-\xi$, 
\begin{align*}
    {L}_n(\hat{h}_{2\rho^{k^*}}) \leqslant c_d\delta_n^2 \leqslant {L}_n(\hat{h}_{2\rho^{k^* - 1}}). 
\end{align*}

\end{proof}

For the proof of \cref{thm:tikhonv_rate}, note that by \cref{lemma:lambda_bd}, without further assumptions, we can only obtain the suboptimal upper bound ${\lambda_{\text{dp}}} \leqslant \delta_n$. However, if we use the analysis in \cite{bennett2023source} with this upper bound, we still obtain a suboptimal rate. A key observation is that although the bound on ${\lambda_{\text{dp}}}$ is loose, the bound on $ \|\mathcal{T} (h^*_{\lambda_{\text{dp}}} - h_0)\| $ from \cref{lemma:lambda_bd} is already optimal. To refine the analysis, we bound $\|h^*_{\lambda_{\text{dp}}} - h_0\|$ by $ \|\mathcal{T} (h^*_{\lambda_{\text{dp}}} - h_0)\| $ using the interpolation inequality. In this way, we recover the optimal convergence rate.

\begin{proof}[Proof of \cref{thm:tikhonv_rate}]
Recall that $\lambda_{\text{dp}} $ is the regularization parameter selected by \cref{alg:discrepancy}. By \cref{prop:exist_lambda}, it satisfies the discrepancy principle with probability at least $1-\xi$. By \cref{lemma:lambda_bd}, we have $ \Omega \left( \delta _{n}^{2/\min\{2,\beta +1\}}\right) \leqslant {\lambda_{\text{dp}}} \leqslant O( \delta _{n})$ and
\begin{align*}
\| \mathcal{T}\left( h_{{\lambda_{\text{dp}}} }^{*} -h_{0}\right) \| ^{2} & \leqslant O\left( \delta _{n}^{2} +{\lambda_{\text{dp}}} \delta _{n} \| \hat{h}_{{\lambda_{\text{dp}}} } -h_{{\lambda_{\text{dp}}} }^{*} \| \right) \leqslant O\left( \delta _{n}^{2}\right) .
\end{align*}
Since ${\lambda_{\text{dp}}}\leqslant O(\delta_n)$ by \cref{lemma:lambda_bd} and $\mathcal{H}$ is bounded, we have $\|\hat h_{{\lambda_{\text{dp}}}}-h_{{\lambda_{\text{dp}}}}^*\|\leqslant O(1)$. Therefore,
\[
{\lambda_{\text{dp}}}\delta_n\|\hat h_{{\lambda_{\text{dp}}}}-h_{{\lambda_{\text{dp}}}}^*\|
\leqslant O(\delta_n)\cdot \delta_n \cdot O(1)
= O(\delta_n^2),
\]
which justifies the second inequality in the displayed bound above.
Let $ r_{\lambda_{\text{dp}}}(t) ={\lambda_{\text{dp}}} /( t+{\lambda_{\text{dp}}} )$, then \begin{align*}
\| h_{{\lambda_{\text{dp}}} }^{*} -h_{0} \| ^{2} & =\| r_{\lambda_{\text{dp}}}\left(\mathcal{T}^{*}\mathcal{T}\right) h_{0} \| ^{2} =\| r_{\lambda_{\text{dp}}}\left(\mathcal{T}^{*}\mathcal{T}\right)\left(\mathcal{T}^{*}\mathcal{T}\right)^{\beta /2} w_{0} \| ^{2} =\| \left(\mathcal{T}^{*}\mathcal{T}\right)^{\beta /2} r_{\lambda_{\text{dp}}}\left(\mathcal{T}^{*}\mathcal{T}\right) w_{0} \| ^2,
\end{align*}
where we use the $ \beta $-source condition in the second equality. By the interpolation inequality \cite[Equation (2.49)]{engl1996regularization}, we have 
\begin{align*}
\| h_{0} -h_{{\lambda_{\text{dp}}} }^{*} \|  & \leqslant \left( \| w_{0} \| \sup _{t\in \left[ 0,\| \mathcal{T}^* \mathcal{T}\| \right]} r_{\lambda_{\text{dp}}}( t)\right)^{1/( 1+\beta )} \| \mathcal{T}\left( h_{0} -h_{{\lambda_{\text{dp}}} }^{*}\right) \| ^{\beta /( 1+\beta )}\\
 & \leqslant \| w_{0} \| ^{1/( 1+\beta )} \| \mathcal{T}\left( h_{0} -h_{{\lambda_{\text{dp}}} }^{*}\right) \| ^{\beta /( 1+\beta )}\\
 & \leqslant O\left( \delta _{n}^{\beta /( 1+\beta )}\right) .
\end{align*}
By \cref{lemma:strong_convexity}, we have 
\begin{align*}
\| \hat{h}_{{\lambda_{\text{dp}}} } -h_{{\lambda_{\text{dp}}} }^{*} \| ^{2} & \leqslant O\left( \delta _{n}^{2} /{\lambda_{\text{dp}}} +\frac{\| \mathcal{T}\left( h_{0} -h_{{\lambda_{\text{dp}}} }^{*}\right) \| ^{2}}{{\lambda_{\text{dp}}} }\right) =O\left( \delta _{n}^{2} /{\lambda_{\text{dp}}} \right) =O\left( \delta _{n}^{2\frac{\min\{\beta ,1\}}{1+\min\{\beta ,1\}}}\right) .
\end{align*}Therefore, \begin{align*}
\| h_{0} -\hat{h}_{{\lambda_{\text{dp}}} } \| ^{2} & \leqslant 2\left( \| \hat{h}_{{\lambda_{\text{dp}}} } -h_{{\lambda_{\text{dp}}} }^{*} \| ^{2} +\| h_{0} -h_{{\lambda_{\text{dp}}} }^{*} \| ^{2}\right) \leqslant O\left( \delta _{n}^{2\frac{\min\{\beta ,1\}}{1+\min\{\beta ,1\}}}\right) .
\end{align*}
\end{proof}

\begin{proof}[Proof of \cref{cor:normality}]
    We use Corollary 2 in \cite{bennett2023source}. We verify that 
\begin{align} \label{eq:conv_hq}
\| \hat{h} -h_{0} \| =o_{p}( 1) , & \ \| \hat{q} -q_{0}\| =o_{p}( 1) ,
\end{align}
and 
\begin{align} \label{eq:square_root}
\sqrt{n}\mathbb{E}[(\hat{q}( Z) -q_{0}( Z))(\hat{h}( X) -h_{0}( X))] & =o_{p}( 1) .
\end{align}
By Theorem \ref{thm:tikhonv_rate}, we know (\ref{eq:conv_hq}) is satisfied. Suppose that $\beta _{m} =\beta _{h} \geqslant \beta _{q}$. By the Cauchy-Schwarz inequality, we have 
\begin{align*}
\mathbb{E}[(\hat{q}( Z) -q_{0}( Z))(\hat{h}( X) -h_{0}( X))] & \leqslant \sqrt{\| \hat{h} -h_{0}\| ^{2} \cdotp \| \mathcal{T}( \hat{q}-q_{0}) \| ^{2}}\\
 & =O_{p}\left( \delta _{n}^{\frac{\min\{\beta _{m} ,1\}}{1+\min\{\beta _{m} ,1\}}} \cdotp \delta _{n}\right)\\
 & =O_{p}\left( \delta _{n}^{\frac{\min\{\beta _{m} ,1\}}{1+\min\{\beta _{m} ,1\}} +1}\right) .
\end{align*}By assumption, $\delta _{n} =o\left( n^{-\frac{1+\min\{\beta _{m} ,1\}}{2+4\min\{\beta _{m} ,1\}}}\right)$ and we have 
\begin{align*}
\mathbb{E}[(\hat{q}( Z) -q_{0}( Z))(\hat{h}( X) -h_{0}( X))] & \leqslant o_{p}\left( n^{-1/2}\right),
\end{align*}
which proves (\ref{eq:square_root}).

\end{proof}

\section{Additional Experiment Details}

We adopt the proxy negative-control design used in the Regularized DeepIV experiments, implemented as follows.  
Fix dimensions $d_s,d_q,d_w \in \mathbb{N}$ and structural parameters
\[
\mu_0,\kappa_0,\kappa_a \in \mathbb{R}^{d_w}, 
\qquad
\mu_s,\kappa_s \in \mathbb{R}^{d_s \times d_w},
\qquad
\Gamma_w \in \mathbb{R}^{d_w \times d_w},
\]
together with covariance matrices
\[
\Sigma_u \in \mathbb{R}^{d_w \times d_w}, \quad
\Sigma_w \in \mathbb{R}^{d_w \times d_w}, \quad
\Sigma_q \in \mathbb{R}^{d_q \times d_q}.
\]
Let $\mathbf{1}_d$ denote the $d$–dimensional vector of ones and $\sigma(x) = 1/(1 + e^{-x})$ the logistic link.  
For each $i = 1,\dots,n$ we generate:
\begin{align*}
S_i' &\sim \mathcal{N}\!\big(0,\, 0.5\, I_{d_s}\big), \\
A_i \mid S_i' &\sim \mathrm{Bernoulli}\!\left( \sigma\!\left( 0.125 - 0.125\, \mathbf{1}_{d_s}^\top S_i' \right) \right),
\end{align*}
and independent noises
\[
\varepsilon_{u,i} \sim \mathcal{N}(0,\Sigma_u), \quad
\varepsilon_{w,i} \sim \mathcal{N}(0,\Sigma_w), \quad
\varepsilon_{q,i} \sim \mathcal{N}(0,\Sigma_q), \quad
\varepsilon_{y,i} \sim \mathcal{N}(0,1).
\]
The latent confounder $U_i \in \mathbb{R}^{d_w}$, the proxy for $(A_i,U_i)$ denoted $Q_i' \in \mathbb{R}^{d_q}$, and the proxy for $U_i$ denoted $W_i' \in \mathbb{R}^{d_w}$ are generated by
\begin{align*}
U_i
&= \kappa_0 + \kappa_s^\top S_i' + \kappa_a A_i + \varepsilon_{u,i}, \\
Q_i'
&= 0.2\,\mathbf{1}_{d_q}
  + B_q^\top S_i' 
  + \mathbf{1}_{d_q} A_i
  + C_q^\top U_i
  + \varepsilon_{q,i}, \\
W_i'
&= \mu_0 + \mu_s^\top S_i' + \Gamma_w U_i + \varepsilon_{w,i},
\end{align*}
where $B_q \in \mathbb{R}^{d_s \times d_q}$ and $C_q \in \mathbb{R}^{d_w \times d_q}$ are fixed loading matrices (sampled once at the start of the experiment).  
The outcome is
\begin{equation*}
Y_i
= A_i 
  + \mathbf{1}_{d_s}^\top S_i'
  + \mathbf{1}_{d_w}^\top U_i
  + \mathbf{1}_{d_w}^\top W_i'
  + \varepsilon_{y,i}.
\end{equation*}

The observed proxies are nonlinear transformations of the latent variables:
\[
S_i = g(S_i'), \qquad
Q_i = g(Q_i'), \qquad
W_i = g(W_i'),
\]
where $g$ is applied componentwise; in our experiments we fix
\[
g(x) \;=\; x^{1/3}\quad \text{(cubic root)}.
\]

All models are implemented in PyTorch and trained with the Adam optimizer using full-batch updates.  For RDIV we follow the two-stage architecture in \citet{hartford2017deep,li2024regularized}.  Stage~1 estimates $g(x \mid z)$ via a Mixture Density Network with two fully connected hidden layers of width $64$ and ReLU activations, and $20$ Gaussian mixture components; the learning rate is $10^{-3}$, and we train for $300$ epochs with $\ell_2$-regularization (weight decay $10^{-4}$) and gradient clipping.  Stage~2 parameterizes $h$ as a two-layer fully connected network with hidden width $64$ and ReLU activations, trained with Adam at learning rate $10^{-3}$ and weight decay $10^{-3}$.  To approximate the operator $T h(z) = \mathbb{E}[h(X)\mid Z=z]$ we draw $n_{\mathrm{MC}} = 100$ Monte Carlo samples from the learned MDN for each instrument value and average $h$ over these samples.  In the fixed-$\lambda$ baselines, we train $h$ for $300$ epochs; in the adaptive runs we use $100$ epochs per candidate $\lambda$.  The discrepancy-principle uses an initial regularization level $\lambda_0 = 2.0$ and geometric schedule $\lambda_{t+1} = \lambda_t / 2$ with at most $20$ iterations; the empirical discrepancy threshold is set to $\epsilon_n^{\mathrm{RDIV}} \;=\; 30 \sqrt{\frac{\log n_{1}}{n_{1}}}.$

For TRAE, we use the same basic fully connected architecture for both the critic $f$ and hypothesis $h$: two hidden layers of width $64$ with ReLU activations and a scalar output.  The primal moment function is $m(X,Y,f,Z) = Y f(Z)$, and we use the primal plug-in estimator implemented in the code.  We train both networks with Adam (learning rate $10^{-3}$) using full-batch updates.  In each outer iteration of the minimax optimization, we perform $80$ gradient-ascent epochs on $f$ followed by one gradient-descent epoch on $h$; the adaptive runs use $200$ outer iterations, while the fixed-$\lambda$ baselines use $300$.  As in the RDIV experiments, we compare a grid of fixed regularization levels $\lambda \in \{0, 0.01, 0.1\}$ with the adaptive discrepancy-principle strategy starting from $\lambda_0 = 2.0$ and halving the regularization at each iteration.  For TRAE the empirical discrepancy threshold is chosen as $\epsilon_n^{\mathrm{TRAE}} \;=\; 15 \,\frac{\log n_{1}}{n_{1}}$,
and the adaptive search is capped at $20$ iterations.  All experiments are run on an A6000 GPU, with the same hyperparameters used across all sample sizes.

\section{Auxiliary Lemmas}

\begin{lemma}\label{lemma:lip_h_lam}
Under \cref{asp:source_condition,asp:closeness_deepiv}, given $ \lambda ,\lambda ' \in (0,2]$, we have
\begin{align*}
\| h_{\lambda }^{*} -h_{\lambda '}^{*} \|  & \leqslant c_h \| \lambda -\lambda '\|^{\gamma} ,
\end{align*}
where $\gamma = \min\{\beta /2, 1\}$ and the constant $c_h$ only depends on $h_0$.
\end{lemma}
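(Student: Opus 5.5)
Since $\mathcal{H}$ contains every $h^*_\lambda$ (\cref{asp:closeness_deepiv}) and the minimization defining $h^*_\lambda$ is over the full space $\tilde{\mathcal{H}}$, I will use the closed form
\begin{align*}
h^*_\lambda = (\mathcal{T}^*\mathcal{T}+\lambda I)^{-1}\mathcal{T}^* r_0 = (\mathcal{T}^*\mathcal{T}+\lambda I)^{-1}\mathcal{T}^*\mathcal{T} h_0 .
\end{align*}
This gives $h_0-h^*_\lambda = \lambda(\mathcal{T}^*\mathcal{T}+\lambda I)^{-1}h_0 = r_\lambda(\mathcal{T}^*\mathcal{T})h_0$, with $r_\lambda(t)=\lambda/(t+\lambda)$ as in the main text. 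The difference therefore becomes a spectral function of $A:=\mathcal{T}^*\mathcal{T}$:
\begin{align*}
h^*_\lambda-h^*_{\lambda'} = \bigl(r_{\lambda'}(A)-r_\lambda(A)\bigr)h_0 = \bigl(r_{\lambda'}(A)-r_\lambda(A)\bigr)A^{\beta/2}w_0,
\end{align*}
where the last step uses \cref{asp:source_condition}. By the spectral theorem, $\|h^*_\lambda-h^*_{\lambda'}\|\le \|w_0\|\sup_{t\in[0,\|A\|]}|g(t)|$ with $g(t)=t^{\beta/2}\bigl|\tfrac{\lambda}{t+\lambda}-\tfrac{\lambda'}{t+\lambda'}\bigr|$. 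So the whole proof reduces to a scalar estimate, and the constant will come out as $c_h\propto\|w_0\|$, which is determined by $h_0$.

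For the scalar estimate I compute $\tfrac{\lambda}{t+\lambda}-\tfrac{\lambda'}{t+\lambda'}=\tfrac{t(\lambda-\lambda')}{(t+\lambda)(t+\lambda')}$. Without loss of generality take $\lambda\le\lambda'$ and set $d=\lambda'-\lambda$. I will use two elementary bounds.
\begin{enumerate}
\item Since $\lambda'\ge d$, we get $\tfrac{t\,d}{(t+\lambda)(t+\lambda')}\le \tfrac{d}{t+\lambda'}\le \tfrac{d}{t+d}$.
\item Each of $r_\lambda$ and $r_{\lambda'}$ lies in $[0,1]$, so their difference is at most $1$.
\end{enumerate}
Combining them gives $|r_\lambda(t)-r_{\lambda'}(t)|\le \min\{1,d/(t+d)\}$.

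With this bound, $g(t)\le t^{\beta/2}\,\tfrac{d}{t+d}$. For $\beta/2\le1$ I maximize over $t\ge0$. When $t\le d$, we have $g\le t^{\beta/2}\le d^{\beta/2}$. When $t\ge d$, we have $g\le d\,t^{\beta/2-1}\le d^{\beta/2}$. Hence $g\le d^{\beta/2}$.

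When $\beta/2>1$, the bound instead uses $t\le\|A\|\le1$:
\begin{align*}
g(t)\le t^{\beta/2-1}\cdot\tfrac{t\,d}{t+d}\le \|A\|^{\beta/2-1}\,d\le d .
\end{align*}
Both cases give $g\le d^{\min\{\beta/2,1\}}=|\lambda-\lambda'|^{\gamma}$. Multiplying by $\|w_0\|$ yields the claim with $c_h=\|w_0\|$.

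The main obstacle is the first step, justifying the closed form and the identity $h_0-h^*_\lambda=r_\lambda(A)h_0$ when $\mathcal{T}$ need not be compact. I would handle it through the first-order condition $(A+\lambda I)h^*_\lambda=\mathcal{T}^*r_0$, together with $r_0=\mathcal{T}h_0$ (\cref{asp:range}), and use bounded functional calculus for the self-adjoint operator $A$ with $\|A\|\le1$. This avoids any need for an SVD. I also need $h_0\in\mathcal{N}(\mathcal{T})^\perp$, the minimum-norm property, which is automatic from the source condition. The scalar estimates are routine once this is in place.
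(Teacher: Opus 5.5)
Your proof is correct, and it reaches the bound by a different route than the paper.

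\textbf{The paper's route.} The paper assumes $\mathcal{T}$ is compact with a singular value decomposition, writes $h^*_\lambda=\sum_i \frac{\sigma_i^2}{\sigma_i^2+\lambda}a_i v_i$, and differentiates in $\lambda$.
\begin{enumerate}
\item For $\beta\le 2$, it shows $\|\partial_\lambda h^*_\lambda\|\le C_0\lambda^{\beta/2-1}$ with $C_0=(\sum_i a_i^2/\sigma_i^{2\beta})^{1/2}$, using $\sup_{t>0}t^{\beta+2}/(1+t)^4\le 1$. It then integrates to get $\frac{2C_0}{\beta}(\lambda^{\beta/2}-\lambda'^{\beta/2})$ and applies $|x^\alpha-y^\alpha|\le|x-y|^\alpha$.
\item For $\beta>2$, it uses a mean-value bound with $|s_i'(\lambda)|\le\sigma_i^{-2}$ and $\sigma_i\le 1$.
\end{enumerate}

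\textbf{Your route.} You avoid differentiation and bound the exact difference $r_\lambda(t)-r_{\lambda'}(t)=t(\lambda-\lambda')/((t+\lambda)(t+\lambda'))$ directly. This reduces everything to the single scalar estimate $\sup_{t}t^{\beta/2}\,d/(t+d)\le d^{\min\{\beta/2,1\}}$, where $\|A\|\le 1$ is needed only when $\beta>2$.

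\textbf{What your route buys.}
\begin{enumerate}
\item It needs only the bounded functional calculus for $A=\mathcal{T}^*\mathcal{T}$, so it does not require compactness of $\mathcal{T}$. The paper's proof assumes compactness, although the lemma is stated without it.
\item It gives $c_h=\|w_0\|$ uniformly in $\beta$, whereas the paper's constant $2C_0/\beta$ degenerates as $\beta\to 0$.
\end{enumerate}
The paper's argument yields a slightly stronger intermediate fact, a pointwise bound on $\partial_\lambda h^*_\lambda$. However, only the H\"older bound is used in the later covering arguments.

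\textbf{Two minor remarks.} Your second bullet ($|r_\lambda-r_{\lambda'}|\le 1$) is redundant, since $d/(t+d)\le 1$ already. You also do not need $h_0\in\mathcal{N}(\mathcal{T})^\perp$: any null-space component of $h_0$ is fixed by every $r_\lambda(A)$, because $r_\lambda(0)=1$, so it cancels in the difference.
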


\begin{proof}
Suppose that the compact linear operator $\displaystyle \mathcal{T}$ admits a countable singular value decomposition $\displaystyle \{\sigma _{i} ,v_{i} ,u_{i}\}_{i=1}^{\infty }$ with $\displaystyle \sigma _{1} \geqslant \sigma _{2} \geqslant \cdots $ and 
\begin{align*}
h_{0} & =\sum _{i=1}^{\infty } a_{i} v_{i} .
\end{align*}We have the following closed-form solution for $\displaystyle h_{\lambda }^{*}$. 
\begin{align*}
h_{\lambda }^{*} & =\sum _{i=1}^{\infty }\frac{\sigma _{i}^{2}}{\sigma _{i}^{2} +\lambda } a_{i} v_{i} .
\end{align*}
Write
\[
h_\lambda^{*}
= \sum_{i=1}^\infty s_i(\lambda)\,a_i v_i,
\qquad
s_i(\lambda) := \frac{\sigma_i^2}{\sigma_i^2 + \lambda}.
\]
Then for $\lambda > 0$,
\[
s_i'(\lambda)
= \frac{\mathrm{d}}{\mathrm{d}\lambda}
\Bigl(\frac{\sigma_i^2}{\sigma_i^2 + \lambda}\Bigr)
= -\frac{\sigma_i^2}{(\sigma_i^2 + \lambda)^2}.
\]
Thus
\[
\partial_\lambda h_\lambda^{*}
= \sum_{i=1}^\infty s_i'(\lambda)\, a_i v_i
= -\sum_{i=1}^\infty \frac{\sigma_i^2}{(\sigma_i^2 + \lambda)^2}\,a_i v_i.
\]
By orthonormality of $\{v_i\}$,
\[
\|\partial_\lambda h_\lambda^{*}\|^2
= \sum_{i=1}^\infty \Bigl(\frac{\sigma_i^2}{(\sigma_i^2 + \lambda)^2}\Bigr)^2 a_i^2
= \sum_{i=1}^\infty \frac{\sigma_i^4}{(\sigma_i^2 + \lambda)^4}\,a_i^2.
\]

\medskip
\noindent
\emph{Case 1: $0<\beta\leqslant2$.}
Rewrite
\[
\|\partial_\lambda h_\lambda^{*}\|^2
= \sum_{i=1}^\infty \frac{a_i^2}{\sigma_i^{2\beta}}\,
\frac{\sigma_i^{2\beta+4}}{(\sigma_i^2 + \lambda)^4}.
\]
Set $s = \sigma_i^2 > 0$ and define
\[
\psi_\beta(s,\lambda)
:= \frac{s^{\beta+2}}{(s+\lambda)^4}, \qquad s>0,\ \lambda>0.
\]
Let $s = \lambda t$; then
\[
\psi_\beta(s,\lambda)
= \lambda^{\beta-2}\,\phi_\beta(t),
\qquad
\phi_\beta(t) := \frac{t^{\beta+2}}{(1+t)^4},\quad t>0.
\]
For $0<\beta\leqslant 2$, one checks that $\sup_{t>0}\phi_\beta(t)\leqslant 1$:
if $0<t\leqslant 1$, then $\phi_\beta(t)\leqslant t^{\beta+2}\leqslant 1$; if $t\geqslant1$, then
$\phi_\beta(t) \leqslant t^{\beta+2}/t^4 = t^{\beta-2} \leqslant 1$ since $\beta\leqslant 2$.
Hence
\[
\frac{\sigma_i^{2\beta+4}}{(\sigma_i^2 + \lambda)^4}
= \psi_\beta(\sigma_i^2,\lambda)
\leqslant \lambda^{\beta-2}.
\]
Therefore,
\[
\|\partial_\lambda h_\lambda^{*}\|^2
\leqslant \lambda^{\beta-2} \sum_{i=1}^\infty \frac{a_i^2}{\sigma_i^{2\beta}},
\]
and thus
\[
\|\partial_\lambda h_\lambda^{*}\|
\leqslant C_0\,\lambda^{\beta/2 -1},
\qquad
C_0 := \Bigl(\sum_{i=1}^\infty \frac{a_i^2}{\sigma_i^{2\beta}}\Bigr)^{1/2}.
\]

Let $0<\lambda' < \lambda$. Then
\[
h_\lambda^{*} - h_{\lambda'}^{*}
= \int_{\lambda'}^{\lambda} \partial_\tau h_\tau^{*}\,\mathrm{d}\tau,
\]
so
\[
\|h_\lambda^{*} - h_{\lambda'}^{*}\|
\leqslant \int_{\lambda'}^{\lambda} \|\partial_\tau h_\tau^{*}\|\,
\mathrm{d}\tau
\leqslant C_0 \int_{\lambda'}^{\lambda} \tau^{\beta/2 - 1}\,\mathrm{d}\tau
= \frac{2C_0}{\beta}\Bigl(\lambda^{\beta/2} - (\lambda')^{\beta/2}\Bigr).
\]
For $0<\alpha\leqslant1$ and $x,y\ge0$ we have $|x^\alpha - y^\alpha|
\leqslant |x-y|^\alpha$. Taking $\alpha = \beta/2\in(0,1]$, we obtain
\[
\|h_\lambda^{*} - h_{\lambda'}^{*}\|
\leqslant \frac{2C_0}{\beta}\,|\lambda - \lambda'|^{\beta/2}.
\]
Symmetry in $(\lambda,\lambda')$ gives the stated bound for all $\lambda,\lambda'>0$
with exponent $\gamma=\beta/2$.

\medskip
\noindent
\emph{Case 2: $\beta>2$.}
We now exploit a simpler bound on the derivative. Observe that
\[
|s_i'(\lambda)|
= \frac{\sigma_i^2}{(\sigma_i^2 + \lambda)^2}
\leqslant \frac{\sigma_i^2}{\sigma_i^4}
= \frac{1}{\sigma_i^2},
\]
since $\lambda>0$. Thus
\[
\|h_\lambda^{*} - h_{\lambda'}^{*}\|^2
= \sum_{i=1}^\infty (s_i(\lambda)-s_i(\lambda'))^2 a_i^2
\leqslant |\lambda - \lambda'|^2
\sum_{i=1}^\infty \sup_{\tau\in[\lambda,\lambda']} (s_i'(\tau))^2 a_i^2
\leqslant |\lambda - \lambda'|^2 \sum_{i=1}^\infty \frac{a_i^2}{\sigma_i^4}.
\]
Hence
\[
\|h_\lambda^{*} - h_{\lambda'}^{*}\|
\leqslant \Bigl(\sum_{i=1}^\infty \frac{a_i^2}{\sigma_i^4}\Bigr)^{1/2}
\,|\lambda - \lambda'|.
\]

It remains to see that $\sum_i a_i^2/\sigma_i^4<\infty$ follows from the
source condition with $\beta>2$. Since $\sup_i \sigma_i \leqslant 1$, we have
$\sigma_i^{2\beta-4} \leqslant 1$ for all $i$, and
\[
\frac{a_i^2}{\sigma_i^4}
= \frac{a_i^2}{\sigma_i^{2\beta}}\,\sigma_i^{2\beta-4}
\leqslant \frac{a_i^2}{\sigma_i^{2\beta}}.
\]
Therefore
\[
\sum_{i=1}^\infty \frac{a_i^2}{\sigma_i^4}
\leqslant \sum_{i=1}^\infty \frac{a_i^2}{\sigma_i^{2\beta}}
< \infty.
\]
Thus, the Lipschitz bound with constant
$\widetilde C = \bigl(\sum_i a_i^2/\sigma_i^4\bigr)^{1/2}$ holds, which
corresponds to exponent $\gamma=1$.

Combining the two cases, the lemma follows.

\end{proof}

\begin{lemma}\label{lemma:concentrate_l_diff}
Under \cref{asp:source_condition,asp:closeness_deepiv,asp:critical_radius}, given $\lambda _{0} \in (0,2]$, for any $\displaystyle \lambda \in ( 0,2]$, with high probability $1-\xi $, we have 
\begin{align*}
\| h_{\lambda }^{*} -h_{\lambda _{0}}^{*} \| _{2,n}^{2} & \leqslant O\left( \delta _{n}^{2} +\| \lambda -\lambda _{0} \| ^{2\gamma }\right) ,
\end{align*}
where $\gamma =\min \{\beta /2,1\}$.
\end{lemma}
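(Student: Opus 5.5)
Fix $\lambda_0$ and $\lambda$ and write $g := h_\lambda^* - h_{\lambda_0}^*$. This is a single fixed function. Its population norm is controlled deterministically by \cref{lemma:lip_h_lam}:
\begin{align*}
\|g\| \leqslant c_h |\lambda-\lambda_0|^{\gamma}, \qquad \gamma=\min\{\beta/2,1\}.
\end{align*}
The plan is therefore to transfer this bound from the population norm $\|\cdot\|$ to the empirical norm $\|\cdot\|_{2,n}$ through a localized concentration inequality. Because of \cref{asp:closeness_deepiv}, both $h_\lambda^*$ and $h_{\lambda_0}^*$ lie in $\mathcal{H}$, so $g\in\mathcal{H}-\mathcal{H}$ and $g$ is a.s. bounded by \cref{asp:critical_radius}.

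\textbf{Step 1: apply the localized concentration inequality.} I will use \cite[Lemma 14]{foster2023orthogonal} on the squared loss over $\text{star}(\mathcal{H}-\mathcal{H})$, whose critical radius is bounded by $\tilde\delta_n$. The version needed is the uniform ratio-type bound for bounded star-shaped classes: with probability $1-\xi$, simultaneously for all $f$ in the class,
\begin{align*}
\bigl|\|f\|_{2,n}^2-\|f\|^2\bigr| \leqslant O\bigl(\tilde\delta_n\|f\| + \tilde\delta_n^2\bigr).
\end{align*}
This is the same inequality already invoked in the proof of \cref{lemma:concentrate_r}.

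\textbf{Step 2: specialize to $g$ and absorb the cross term.} Taking $f=g$ gives
\begin{align*}
\|g\|_{2,n}^2 \leqslant \|g\|^2 + O(\tilde\delta_n\|g\|+\tilde\delta_n^2).
\end{align*}
By AM-GM, $\tilde\delta_n\|g\|\leqslant \tfrac12\|g\|^2+\tfrac12\tilde\delta_n^2$, so $\|g\|_{2,n}^2\leqslant O(\|g\|^2+\tilde\delta_n^2)$. Substituting the bound from \cref{lemma:lip_h_lam} yields $O(\tilde\delta_n^2+|\lambda-\lambda_0|^{2\gamma})$. Finally, $\delta_n\geqslant\tilde\delta_n$ by \eqref{eq:delta_trae}, which gives the claim.

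\textbf{Main obstacle: the quantifier over $\lambda$.} Since the concentration in Step 1 is uniform over the whole function class, the argument covers all $\lambda\in(0,2]$ at once, with no extra union bound. This matters because the lemma is used in \cref{lemma:concentrate_r} at a data-independent net point $\lambda_i$ but at an arbitrary $\lambda$. If only a fixed-function (Bernstein) bound were available instead, I would fall back to the same $\tilde\delta_n^{2/\gamma}$-net over $\lambda$ used in \cref{lemma:concentrate_r}, with a union bound at level $\xi/\tilde\delta_n^{2/\gamma}$; this is exactly where the definition \eqref{eq:delta_trae} of $\delta_n$ enters. The off-grid discretization error would then be handled crudely by $\|\cdot\|_{2,n}\leqslant\|\cdot\|_\infty$ together with Lipschitz continuity in $\lambda$. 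That last step is the delicate one, since sup-norm continuity of $\lambda\mapsto h_\lambda^*$ is not provided by \cref{lemma:lip_h_lam}, which is why the uniform localized bound is the route to take.
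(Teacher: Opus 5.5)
Your proposal is correct and follows essentially the same route as the paper. Both arguments apply the uniform localized concentration bound for the squared norm (the paper cites \cite[Lemma 29]{foster2023orthogonal}) to $g=h_\lambda^*-h_{\lambda_0}^*\in\mathcal{H}-\mathcal{H}$, bound $\|g\|$ via \cref{lemma:lip_h_lam}, absorb the cross term $\tilde\delta_n\|g\|$ by AM-GM, and absorb $\tilde\delta_n^2$ into $\delta_n^2$. Your added observation, that uniformity over the class makes the bound hold simultaneously for all $\lambda$, is left implicit in the paper but is what its use in \cref{lemma:concentrate_r} at an arbitrary $\lambda$ requires.
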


\begin{proof}
By Lemma 29 in \cite{foster2023orthogonal}, we have 
\begin{align*}
|(\mathbb{E} -\mathbb{E}_{n})\left[ \| h_{\lambda }^{*} -h_{\lambda _{0}}^{*} \| ^{2}\right] | & \leqslant O\left(\tilde{\delta }_{n} \| h_{\lambda }^{*} -h_{\lambda _{0}}^{*} \| +\tilde{\delta }_{n}^{2}\right) .
\end{align*}By Lemma \ref{lemma:lip_h_lam}, we get 
\begin{align*}
|(\mathbb{E} -\mathbb{E}_{n})\left[ \| h_{\lambda }^{*} -h_{\lambda _{0}}^{*} \| ^{2}\right] | & \leqslant O\left(\tilde{\delta }_{n} \| \lambda -\lambda _{0} \| ^{\gamma } +\tilde{\delta }_{n}^{2}\right) .
\end{align*}
Therefore,
\begin{align*}
\| h_{\lambda }^{*} -h_{\lambda _{0}}^{*} \| _{2,n}^{2}
&=\| h_{\lambda }^{*} -h_{\lambda _{0}}^{*} \| ^2
+(\mathbb{E}_n-\mathbb{E})\!\left[\| h_{\lambda }^{*} -h_{\lambda _{0}}^{*} \| ^2\right]\\
&\leqslant \| h_{\lambda }^{*} -h_{\lambda _{0}}^{*} \| ^2
+O\!\left(\tilde{\delta }_{n} \| \lambda -\lambda _{0} \| ^{\gamma } +\tilde{\delta }_{n}^{2}\right).
\end{align*}
Applying Lemma \ref{lemma:lip_h_lam} again,
\[
\| h_{\lambda }^{*} -h_{\lambda _{0}}^{*} \| ^2
\leqslant O\!\left(\| \lambda -\lambda _{0} \| ^{2\gamma}\right).
\]
Using $ab \leqslant (a^2+b^2)/2$ for
$a=\tilde{\delta}_n$ and $b=\|\lambda-\lambda_0\|^\gamma$, we have
$\tilde{\delta }_{n} \| \lambda -\lambda _{0} \| ^{\gamma }
\leqslant O(\tilde{\delta}_n^2+\|\lambda-\lambda_0\|^{2\gamma})$.
Hence,
\[
\| h_{\lambda }^{*} -h_{\lambda _{0}}^{*} \| _{2,n}^{2}
\leqslant O\!\left(\tilde{\delta}_n^2+\| \lambda -\lambda _{0} \| ^{2\gamma}\right).
\]
Under \cref{asp:critical_radius}, $\tilde{\delta}_n^2$ is absorbed by $\delta_n^2$, so
\[
\| h_{\lambda }^{*} -h_{\lambda _{0}}^{*} \| _{2,n}^{2}
\leqslant O\!\left(\delta_n^2+\| \lambda -\lambda _{0} \| ^{2\gamma}\right).
\]
\end{proof}

\begin{lemma}\label{lemma:lowerbound}
Suppose that $ h_{0} \neq 0$. Then the following lower bound holds: 
\begin{align*}
\| \mathcal{T}\left( h_{\lambda }^{*} -h_{0}\right) \|^2  & \geqslant c_{0} \lambda ^{2} ,
\end{align*}
 for all $ \lambda \in ( 0,2)$, where $ c_{0}  >0$ is a constant that only depends on $ h_{0}$. 
\end{lemma}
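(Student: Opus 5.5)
We work in the spectral calculus of $\mathcal{T}^*\mathcal{T}$, as in the proof of \cref{lemma:lip_h_lam}, taking $h^*_\lambda$ to be the population Tikhonov solution over $\tilde{\mathcal{H}}$. \cref{asp:closeness_deepiv} guarantees that it lies in $\mathcal{H}$.

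\textbf{Closed form of the bias.} Write $h_0=\sum_i a_i v_i$. Since $h_0$ is the minimum-norm solution, it lies in $\mathcal{N}(\mathcal{T})^\perp$. Hence only indices with $\sigma_i>0$ appear, and $h_0\neq 0$ yields some index $j$ with $a_j\neq 0$ and $\sigma_j>0$.

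From the closed form $h^*_\lambda=\sum_i \frac{\sigma_i^2}{\sigma_i^2+\lambda}a_i v_i$, we get
\begin{align*}
h^*_\lambda-h_0=-\sum_i \frac{\lambda}{\sigma_i^2+\lambda}a_i v_i ,
\end{align*}
and therefore
\begin{align*}
\|\mathcal{T}(h^*_\lambda-h_0)\|^2=\sum_i \frac{\lambda^2\sigma_i^2}{(\sigma_i^2+\lambda)^2}a_i^2 .
\end{align*}

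\textbf{Lower bound.} Every term in the sum is nonnegative, so we keep only the single term $j$. Using $\sigma_j^2\le \|\mathcal{T}^*\mathcal{T}\|\le 1$ and $\lambda<2$, we have $\sigma_j^2+\lambda\le 3$. This gives
\begin{align*}
\|\mathcal{T}(h^*_\lambda-h_0)\|^2\ \ge\ \frac{\sigma_j^2a_j^2}{9}\,\lambda^2 .
\end{align*}
We therefore set $c_0=\sigma_j^2a_j^2/9>0$, which depends only on $h_0$ (through its spectral coefficients relative to $\mathcal{T}$).

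\textbf{Main obstacle.} The only delicate point is justifying the spectral representation. If $\mathcal{T}$ is not compact, the same argument runs through the spectral measure $E$ of $\mathcal{T}^*\mathcal{T}$:
\begin{align*}
\|\mathcal{T}(h^*_\lambda-h_0)\|^2=\int_0^{1}\frac{\lambda^2 t}{(t+\lambda)^2}\,d\|E_t h_0\|^2\ \ge\ \frac{\lambda^2}{9}\int_0^1 t\,d\|E_th_0\|^2=\frac{\lambda^2}{9}\|\mathcal{T}h_0\|^2 .
\end{align*}
Here $\|\mathcal{T}h_0\|>0$ holds because $h_0\in\mathcal{N}(\mathcal{T})^\perp\setminus\{0\}$. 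In this form the constant $c_0$ becomes $\|\mathcal{T}h_0\|^2/9=\|r_0\|^2/9$, which avoids choosing a particular index $j$.
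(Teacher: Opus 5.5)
Your proof is correct and follows essentially the same route as the paper: the SVD closed form of $h^*_\lambda - h_0$, nonnegativity of every term in $\|\mathcal{T}(h^*_\lambda-h_0)\|^2$, and a uniform bound on the denominators $(\sigma_i^2+\lambda)^2$. The only difference is that the paper keeps the whole sum with denominators $(\sigma_i^2+2)^2$, whereas you keep a single index and use the bound $9$. Your spectral-measure paragraph is a small genuine improvement, since it actually carries out the non-compact case that the paper only asserts and gives the explicit constant $c_0=\|r_0\|^2/9$; like the paper's bound, it also holds at $\lambda=2$, which is where the lemma is used.
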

\begin{proof}
Suppose that the compact linear operator $ \mathcal{T}$ admits a countable singular value decomposition $ \{\sigma _{i} ,v_{i} ,u_{i}\}_{i=1}^{\infty }$ with $ \sigma _{1} \geqslant \sigma _{2} \geqslant \cdots $ and 
\begin{align*}
h_{0} & =\sum _{i=1}^{\infty } a_{i} v_{i} .
\end{align*}We have the following closed-form solution for $ h_{\lambda }^{*}$. 
\begin{align*}
h_{\lambda }^{*} & =\sum _{i=1}^{\infty }\frac{\sigma _{i}^{2}}{\sigma _{i}^{2} +\lambda } a_{i} v_{i} .
\end{align*}Let $ i^{*} =\arg\min_{i}\{i:\sigma _{i} \neq 0,a_{i} \neq 0\}$. Since $ h_{0} \neq 0 $ and it is the minimum-norm solution,  $ h_{0} \notin \text{Null}(\mathcal{T})$, $ i^{*} < \infty $. 
\begin{align*}
\| \mathcal{T}\left( h_{\lambda }^{*} -h_{0}\right) \|^2  & =\sum _{i=1}^{\infty } I( \sigma _{i} \neq 0)\frac{a_{i}^{2} \sigma _{i}^{2} \lambda ^{2}}{\left( \sigma _{i}^{2} +\lambda \right)^{2}}\\
 & \geqslant \sum _{i=1}^{\infty } I( \sigma _{i} \neq 0)\frac{a_{i}^{2} \sigma _{i}^{2} \lambda ^{2}}{\left( \sigma _{i}^{2} + 2 \right)^{2}}
 =: c_0 \lambda^2 .
\end{align*}
By the choice of $i^*$, at least one term in the sum is strictly positive, so $c_0>0$ and it depends only on $h_0$.
For non-compact operators, note that $ \mathcal{T}h = \mathbb{E}[h(X)\mid Z]$ is a bounded linear operator ($\| \mathcal{T}\| \leqslant 1$). One can prove the lower bound similarly using spectral measure. 

\end{proof}

\end{document}